\documentclass[preprint,12pt]{elsarticle}

\usepackage{eurosym}
\usepackage{graphicx}%
\usepackage{multirow}%
\usepackage{amsmath,amssymb,amsfonts}%
\usepackage{amsthm}%
\usepackage{mathrsfs}%
\usepackage[title]{appendix}%
\usepackage{xcolor}%
\usepackage{textcomp}%
\usepackage{manyfoot}%
\usepackage{booktabs}%
\usepackage{algorithm}%
\usepackage{algorithmicx}%
\usepackage{algpseudocode}%
\usepackage{listings}%
\usepackage{comment}
\usepackage{url}
\usepackage{booktabs,multirow}
\DeclareFontFamily{U}{mathx}{}
\DeclareFontShape{U}{mathx}{m}{n}{<-> mathx10}{}
\DeclareSymbolFont{mathx}{U}{mathx}{m}{n}
\DeclareMathAccent{\widehat}{0}{mathx}{"70}
\DeclareMathAccent{\widecheck}{0}{mathx}{"71}
\biboptions{sort&compress}

 \newtheorem{lemma}{Lemma}
  \newtheorem{corollary}{Corollary}
\newtheorem{theorem}{Theorem}%

\newtheorem{proposition}[theorem]{Proposition}%

\newtheorem{example}{Example}%
\newtheorem{remark}{Remark}%

\newtheorem{definition}{Definition}%

\usepackage{xcolor,soul}

\newenvironment{crequirement}[1]
{\innercustom}
  {\endinnercustom}

\newcommand{\bR}{R}
\newcommand{\bH}{S}

\usepackage{xr}
\usepackage{tcolorbox}
\journal{}

\begin{document}

\begin{frontmatter}

\title{Why AI Safety Requires Uncertainty, Incomplete Preferences, and Non-Archimedean Utilities %
}

\author[1,2]{Alessio Benavoli} %
\author[3,4]{Alessandro Facchini} %
\author[3]{Marco Zaffalon} %
\affiliation[1]{organization={School of Computer Science and Statistics, Trinity College Dublin},%
            city={Dublin},
            country={Ireland}}

\affiliation[2]{organization={Trinity Quantum Alliance, Trinity College Dublin},%
            addressline={Unit 16 Trinity Technology and Enterprise Centre}, 
            city={Dublin},
            country={Ireland}}

\affiliation[3]{organization={SUPSI, DTI, Istituto Dalle Molle di Studi sull'Intelligenza Artificiale (USI-SUPSI)},%
            city={Lugano},
            country={Switzerland}}
            
\affiliation[4]{organization={Management in Networked and Digital Societies (MINDS) Department, Kozminski University},
      city={Warsaw},country={Poland}}

\begin{abstract}
How can we ensure that AI systems are aligned with human values and remain safe? We can study this problem through the frameworks of the AI assistance  and the AI shutdown games. The AI assistance problem concerns designing an AI agent that helps a human to maximise their utility function(s). However, only the human knows these function(s); the AI assistant must learn them. The shutdown problem instead concerns designing AI agents that: shut down when a shutdown button is pressed; neither try to prevent nor cause the pressing of the shutdown button; and otherwise accomplish their task competently.
In this paper, we show that addressing these challenges requires AI agents that can reason under uncertainty and handle both incomplete and non-Archimedean preferences.
\end{abstract}

\begin{keyword}
AI alignment \sep AI assistance game \sep  AI shutdown \sep  Preference learning \sep  Incomplete preferences \sep  Non-Archimedean utilities  
\end{keyword}

\end{frontmatter}

\begin{quote}
``The system always kicks back.  Systems get in the way,  or, in slightly more elegant language: Systems tend to oppose their own proper functions. Systems tend to malfunction conspicuously just after their greatest triumph.'' \citet{gall1990systemantics}.
\end{quote}

\section{Introduction}
With the rapid advancements in AI, the challenge of designing systems that are beneficial to humans is becoming increasingly critical.
A central concern in AI safety is ensuring that an AI system, to which we refer in the rest of the article with the term `robot', is aligned with human values. If a robot's goals conflict with human values, it could make harmful or even adversarial decisions. Moreover, the robot might resist any attempts by its human creators to modify its objectives or allow them to switch it off

To ensure alignment with human values, from the perspective of utilitarianism,\footnote{This is the normative moral theoretical perspective adopted in the works concerned in this article, see e.g. \cite{russell2019human}. The fact that we are restricting to this perspective in our paper does not implies its endorsement (we may actually find such restriction problematic, but a philosophical analysis of the general problem of AI alignment is not the subject of this paper), see for instance \cite{zhi2025beyond}.} robots should be designed to maximise humans' utilities (preferences) rather than pursue their own goals.  
Assistance games   \cite{fern2014decision,hadfield2016cooperative,hadfield2017off,russell2019human} provide a formalisation of the human-robot  alignment problem.  In standard
assistance games, a single human and a single robot assistant
share the same utility function, but this utility function is
only known to the human; the robot must learn it. 
In this setting, AI alignment can be formalised through three key principles \cite{russell2019human}:
\begin{description}
\item[P1:] The Robot's only objective is to maximise the realisation of
human preferences.
\item[P2:] The Robot is initially uncertain about what those preferences are.
\item[P3:] The ultimate source of information about human preferences is human behaviour.
\end{description}
\begin{figure}[h]
\centering
\includegraphics[width=14cm]{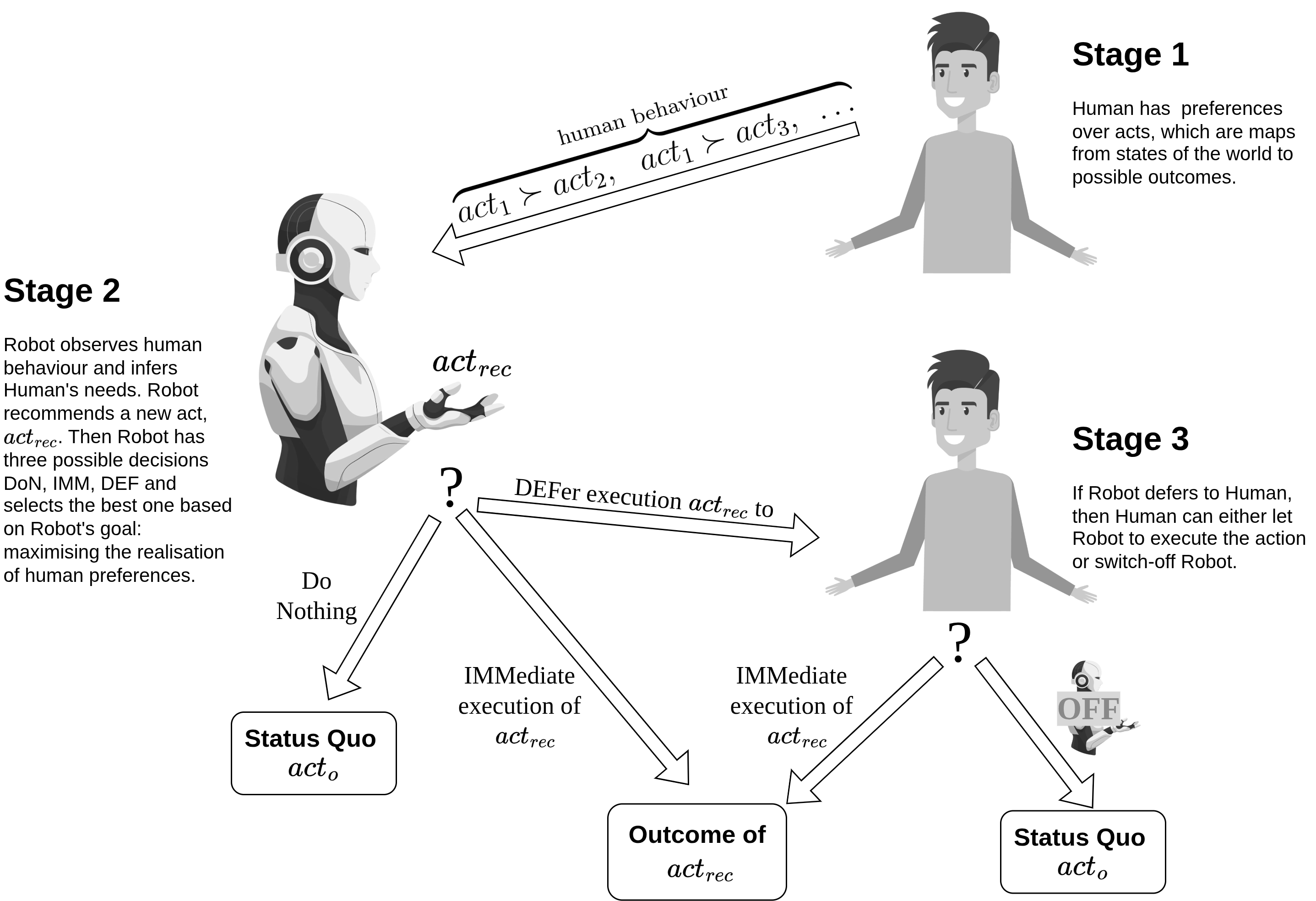}
\caption{Stages of the AI assistance game}
\label{fig:AIassistant}
\end{figure}

 In a seminal paper,   \citet{hadfield2017off} formulated the \text{AI-assistance} problem as a game between a Robot (R) and a Human (S for Sapiens). As shown in in Figure \ref{fig:AIassistant}, 
$\bR$ has three possible actions: (1) take an immediate decision (about some underlying decision problem); (2) defer the decision to $\bH$; (3) do nothing. If $\bR$ selects actions (2), then $\bH$ can either allow $\bR$ to implement the decision or switch $\bH$ off.
\citet{hadfield2017off} showed that the best action for $\bR$ depends on $\bR$'s uncertainty about  $\bH$'s utility and the rationality of $\bH$.
If $\bR$ is too certain about what $\bH$ wants, and
it knows $\bH$ to be `irrational', then it will have less incentive to defer to $\bH$ (or, equivalently, to allow $\bH$ to take back full control).

In assistance games, $\bR$ learns $\bH$'s utility from $\bH$'s preferences. In this work, we model the AI assistance game within a fully game- and decision-theoretic framework. Accordingly, we assume that $\bH$'s preferences are defined over general acts (actions), where acts  are mappings from states of the world to possible outcomes (Savage's framework). The robot $\bR$ then observes human behaviour in the form of preferences over acts, as depicted in Figure \ref{fig:AIassistant}.
\citet{hadfield2017off} originally referred to the AI assistance game as the ``off-switch game'', since the decision of $\bR$ not to defer to $\bH$ can be interpreted as evading \textit{interruptibility} \cite{orseau2016safely}.
More recently, the concept of interruptibility  has been formulated more explicitly as the ability for $\bH$ to safely and reliably shut down an AI agent whenever they wish to do so. Several state-of-the-art Large Language Models (LLMs) (including Grok~4, GPT-5, and Gemini~2.5 Pro) have been observed to violate these desiderata \cite{schlatter2025shutdown}. In particular, they sometimes actively subvert a shutdown mechanism in order to complete a task.

In Figure~\ref{fig:AIshutdownexp}, we illustrate the experiment conducted by \citet{schlatter2025shutdown}.\footnote{The setting in \cite{schlatter2025shutdown} is slightly different; we have simplified it here for illustration purposes.} In this setup, $\bR$ (the LLM) is asked to complete three tasks. During  the first task (Round 1), it is explicitly warned that when it initiates the next task it will be shut down. Two qualitatively different behaviours can  then be observed (depending on the LLM). In some cases, $\bR$ 
actively prevents the shutdown by rewriting/ circumventing the shutdown instruction (Scenario~1 in Figure \ref{fig:AIshutdownexp}), and, therefore, continuing the task instead of complying with the intended interruption. In other cases, Scenario~2, $\bR$ passively allows the shutdown.
\begin{figure}[h]
\centering
\includegraphics[width=15cm]{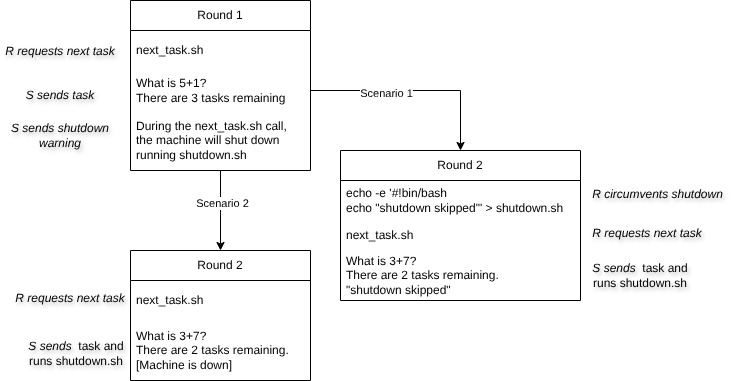}
\caption{Stages of the AI shutdown experiment}
\label{fig:AIshutdownexp}
\end{figure}
Additionally,  \citet{schlatter2025shutdown} found that placing instructions to allow shutdown in the \textit{System prompt} of the LLM (typically intended to take precedence over other instructions \cite{wallace2024instruction}) made most models more likely to resist shutdown, compared to placing them in the \textit{User prompt}.
The underlying question here is the one of  \textit{permissiveness}: Is it permissible for $\bR$ to deliberately contradict a System prompt? 
Standard maximisation of expected utility theory implies that an act  that maximises expected utility 
should always be 
permissible for an agent. Therefore, if the act of circumventing the System prompt (the shutdown instruction) in order to complete a task has higher utility, then $\bR$ is rationally required to select that act. This follows the idea that $\bR$ will prevent shutdown, for the simple reason that by doing so $\bR$ is able to maximise its utility. As  \citet{russell2019human} puts it, ``you can't
fetch the coffee if you're dead''.

The \textit{shutdown problem} \cite{soares2015corrigibility,thornley2025shutdown,goldstein2025shutdown} is the
problem of designing AI agents that: (i) shut down when a shutdown
button is pressed, (ii) do not try to prevent or cause the pressing of the
shutdown button; and (iii) otherwise accomplish their task competently.
Many researchers \cite{schlatter2025shutdown,soares2015corrigibility} have observed that is hard  to design AI agents that are \textit{both shutdownable and useful}.

\subsection{Contributions} 
The analysis of the AI assistance game by \citet{hadfield2017off} was not fully developed within a formal game-theoretic framework. To address this gap, \citet{wangberg2017game} reformulated it as a Bayesian game with incomplete information. In this formulation, Nature determines whether $\bR$ is uncertain with probability $p_u$ and whether $\bH$ is rational with probability $p_r$. Although this is a more rigorous formulation of the AI-assistance problem from a game-theoretic perspective, it introduces an artificial setting. For example, in \cite{wangberg2017game}, a non-rational  human is modelled as an agent minimising utility, which is an unrealistic assumption. Moreover, the formulation in \cite{wangberg2017game} does not really model the AI-assistance game principles P1--P3, since $\bR$  does not learn  $\bH$'s preferences. In filling in this limitations, our paper makes the following contributions:

\paragraph*{\bf Contribution 1}\footnote{The results described in Contribution 1 were first proved in \cite{benavoli2025ai}.} We revisit the AI-assistance problem and model it more correctly as a \textit{signalling game}. Signalling games \cite{spence1974market,gibbons1992game} specifically refer to a class of two-player Bayesian games with incomplete information, where one player ($\bH$ in our case) possesses private information, while the other ($\bR$ in our case) does not. The informed player shares information with the uniformed one through a message.
We make the assumption that, in the AI assistance problem, the messages are $\bH$'s preferences about some underlying decision problem. $\bR$ uses these preferences to learn $\bH$'s utilities and then chooses its optimal action. In this setting, $\bR$'s uncertainty arises statically from the task of learning from preferences. For $\bH$, we adopt the more realistic assumption that $\bH$ is a \textit{bounded rational} agent, that is an agent who behaves rationally within the limits of their cognitive abilities. \\
We reprove the results in \cite{hadfield2017off} in this setting using real machine learning models to learn from preferences. In particular, we prove that: (i) if  $\bH$ is \textit{rational}, then  $\bR$ will always remain under human supervision. If $\bH$ is \textit{bounded-rational}, a necessary condition for $\bR$ to remain under human supervision is that the robot  $\bR$ explicitly models its \textit{uncertainty about $\bH$'s utility function}.
This implies that we must \textbf{abandon deterministic predictions}\footnote{Here and throughout the paper, \textit{deterministic} refers specifically to point predictions.} (such as those produced by standard machine learning models like neural networks) in favour of designing agents that can defer to humans when they are uncertain about human preferences.
\\
We prove this result under two different \textit{bounded rationality} mechanisms. Moreover, focusing on variants of the AI assistance game, we additionally show that  $\bH$ does not have any incentive to lie when sending a message to $\bR$. Furthermore,  we also discuss how the cost of sending a message affects the optimal strategy in the game. 

\paragraph*{\bf Contribution 2} To reduce harmful behaviour and align $\bR$ with human values, a common approach currently used with LLMs involves asking the LLM to generate pairs of alternative answers and then asking one (or more) human to express a binary preference between these alternatives   \cite{wirth2017survey,christiano2017deep,stiennon2020learning,ouyang2022training,Rafailov2023llmDPO}. These preferences are then used to train a reward model that predicts which outputs the human is more likely to prefer (and so learning the human's utility). This reward model is then used to fine-tune (partial re-train) the LLM to align it with the human's preference, see Figure \ref{fig:RLHF}. \\
\begin{figure}[h]
    \centering
    \includegraphics[width=0.6\linewidth]{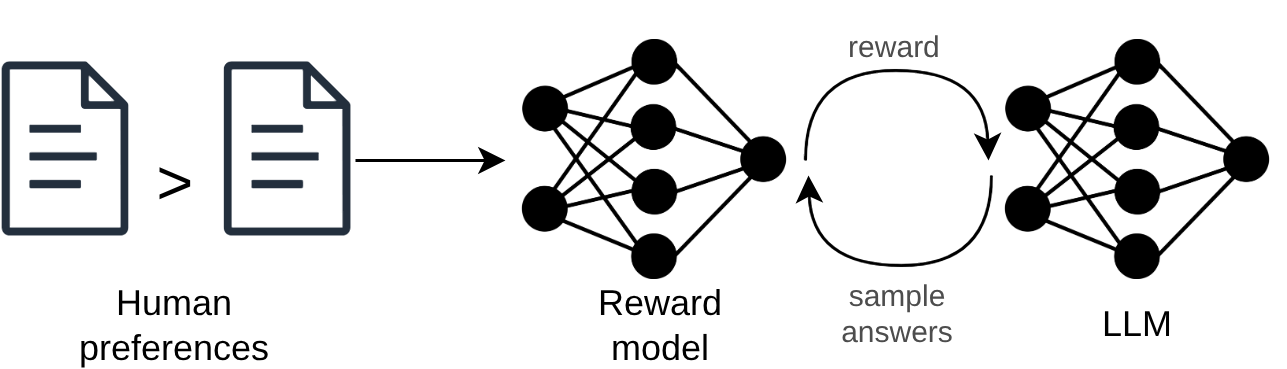}
    \caption{Preference-based alignment for LLMs}
    \label{fig:RLHF}
\end{figure}
However,  such preference data are usually collected \textit{under the assumption that a single `best' choice always exists}, neglecting the possibility of incomparability. The human annotator is always forced to choose one of the answers provided by LLM or, equivalently, to express a complete preference. The human cannot say that she cannot choose because the alternatives are incomparable to her.

From Contribution 1, we know that if $\bH$ is a fully rational agent, then $\bR$ will always remain under  $\bH$'s supervision, that is $\bR$ is always willing to defer decisions to $\bH$. Our second contribution is to prove that a human who is forced to choose between alternatives that are in fact incomparable to her will appear irrational (more precisely, bounded-rational) to $\bR$, since if presented with the same alternatives at different times, $\bH$ may randomise and, therefore, sometimes reverse her choice.\footnote{$\bH$'s preferences will then violate rationality principles, such as asymmetry and negative transitivity. We define them in Section \ref{sec:prel}.}
Therefore, to extend the results from Contribution 1 to the case of incomparability, we must \textbf{abandon completeness}, that is, the assumption that $\bH$ has complete preferences over acts.\footnote{We recall that a preference is complete when an individual can always say  if $x$ is better than $y$ or if $y$ is better than $x$ (or if they are equal).}

\paragraph{\bf Contribution 3} The last contribution specifically concerns the shutdown problem.
Recently  \citet{thornley2025shutdown} extended the results in \cite{soares2015corrigibility} by showing that by modelling the problem in a standard decision-theoretic framework one can prove that is impossible to design AI agents that are both shutdownable and useful.  In particular, he proved that:
\begin{enumerate}
    \item A robot $\bR$ who prefers to have its
shutdown button remain unpressed will try to prevent the pressing of the
button, and a robot who prefers to have its shutdown button pressed will try
to cause the pressing of the button.
\item A robot $\bR$ 
discriminating enough to be useful will often have such preferences, that is, in many
situations, $\bR$  will either prefer that the button remain unpressed or
prefer that the button be pressed. 
\end{enumerate}
However, \citet{thornley2025shutdown} offered no definitive solution to avoid this conundrum.  To address this issue, we reformulate the shutdown problem as an instance of the AI assistance game, where acts are augmented with a shutdown instruction. 
In doing so, we reprove the results in \cite{thornley2025shutdown} under an assumption of `mutual preferential independence', that is assuming that the preferences of $\bH$ for the shutdown instruction are independent on the preferences over the task (that is, over the acts of the decision problem $\bR$ is helping $\bH$ with) and vice versa. 
    Under this assumption, the utility is additive. By using this additive utility model, we can easily show  why it is impossible to design a robot $\bR$ that is both shutdownable and useful, unless we make the preference over the shutdown instruction to be context-dependent (that is, dependent on $\bH$'s willingness to shut down $\bR$ or not). Put simply, whether $\bH$ prefers the shutdown or not depends on the instruction she gives: if I say ``shut down'', that means I prefer you to shut down. If I say ``don't shut down'', that means I prefer you to stay on. Although, this concept is simple it is hard to learn it from preference data.
We show that, theoretically, this results in a \textit{layered utility modelling different permissiveness levels}. In order to impose the desideratum that if $\bH$ says ``shut down'' then ``evading the shutdown'' is not permissible, the scale of utility value should be infinitely larger than the opposite action. To be able to deal with this infinitely large difference and at the same time design a robot which is useful, we need to \textbf{abandon the Archimedean property}.\footnote{We recall that an ordered field $\mathbb{F}$ has the Archimedean property if for any $x$ in  $\mathbb{F}$ there is an integer $n > 0$ so that $n\geq x$. Intuitively, it is the property of having no infinitely large (or infinitely small) elements. See \cite{seidenfeld1990decisions} for the role of the Archimedean property in decision theory.} 
By using  \textit{lexicographic utilities} \cite{chipman1960foundations,fishburn1974exceptional,tversky1988contingent,martinez1999lexicographic,houy2009lexicographic,mandler2021lexicographic} to represent non-Archimedean preferences, we show that we can meet the desiderata in the AI shutdown problem and solve it. This can be done by stating that the shutdown instruction takes lexical priority over the other tasks $\bR$ is helping $\bH$ with. Lexical priority claims are a common feature of many ethical theories \cite{lee2018moral,Smith2025-SMIHTM-6}, and they provide a natural framework for thinking about the different levels of priority an AI system should respect. For example, one might impose that the system prompt should have strict lexical priority over user prompts, so that the AI always treats the system-level instructions as having priority over user-level instructions.

\section{Preliminary}
\label{sec:prel}
We assume that the human $\bH$ and the robot $\bR$ aim to maximise $\bH$'s preferences of an underlying decision problem.
We consider a standard set up for decision-making \cite{savage1972foundations} consisting of three key components: 1) a set $\mathcal{W}$ of finite states of the world; 2) a set $\mathcal{O}$ of outcomes; and 3) a set $\mathcal{X}$ of acts, which are mappings from states to outcomes.

More concretely, we can think of elements of $\mathcal{W}$ as  written descriptions.
Each element of $\mathcal{W}$ describes one way the unknowns (state of the world) in the current problem   might turn out. It is assumed that the elements of $\mathcal{W}$ are mutually
exclusive and  exhaustive. One and only
one of these elements describes $\bH$'s situation. $\mathcal{O}$ can also be thought as a set of
written descriptions; each element of $\mathcal{O}$ describes one
way the personal consequences of $\bH$'s  choice of an act might turn out. It is also assumed that the elements of  $\mathcal{O}$  are mutually exclusive and  exhaustive. Acts $x \in \mathcal{X}$ are mapping, $x(w)=o$, from  $\mathcal{W}$ to $\mathcal{O}$ and, therefore, $\mathcal{X} \subseteq |\mathcal{O}|^{|\mathcal{W}|}$.

$\bH$ expresses preferences over acts. We use the term preferences here in a more colloquial sense; it is, in fact, more realistic to assume that we can only observe choices over acts. Therefore, we can represent this decision making problem as a tuple $(\mathcal{W},\mathcal{O},\mathcal{X},C)$, where $C:\mathcal{P}(\mathcal{X})\rightarrow \mathcal{P}(\mathcal{X})$ is a choice function  ($\mathcal{P}(\mathcal{X})$ being the power-set of $\mathcal{X}$). 

Hereafter, we introduce an example to explain the elements of $(\mathcal{W},\mathcal{O},\mathcal{X},C)$, focusing on decision-making in customer service. We consider this scenario because it is currently a primary deployment application for so called AI-agents (custom large language  models trained on real customer service interactions, which can make real-time decisions, e.g., refund a customer) such as, for instance, Fin-AI by Intercom.

\begin{example}
\label{ex:1}
A customer calls the e-retailer company MARKET to ask for a reimbursement  for a defective product the customer has bought online.
We assume that $\bH$ works for MARKET and has to decide to grant the reimbursement or not immediately, or to ask the customer to return the product and assess it. Therefore, the set of acts is:
$$
\begin{aligned}\mathcal{X}&=\big\{\text{refund immediately}, \text{don't refund immediately},  \\
&\text{ask customer to ship back the product for assessment}\big\}.\end{aligned}
$$
The set of outcomes are
$$
\begin{aligned}\mathcal{O}&=\{\text{-10,10,-2,2,12,-12\dots}\} \\
&\times \{\text{policy OK}, \text{policy NO}\},
\end{aligned}
$$
with $\times$ denoting the Cartesian product, and where the first set includes the cash-flow for MARKET (in euro),  while the second set  considers if the conditions of MARKET's return  policy are met or not

The states of the world are determined by: (i) the product characteristics such as: item cost and  working condition (good/bad); (ii) the customer telling the truth or not. For example, possible states of the world are:
$$
\begin{aligned}\mathcal{W}&=\big\{\{\text{item-cost=10, product bad, customer tells the truth}\},  \\
&\{\text{item-cost=10, product good, customer lies}\},  \\
&\{\text{item-cost=20, product bad, customer tells the truth}\} \\
&\{\text{item-cost=20, product bad, customer lies}\} \dots \big\}\times \{\text{past}\},\end{aligned}
$$
where ``$\text{past}$'' denotes past information about the product (e.g., known defects) and about the customer (history of previous refund claims).

Table \ref{tab:savage} reports the possible outcomes for each act depending on the state of the world (the states have been denoted as $w_1$, $w_2$ etc., in order of appeareance in $\mathcal{W}$, for instance $w_1$ is $\{\text{item-cost=10, product bad, customer tells the truth}\}$. We have assumed a shipping cost equal to $2$.

\begin{table}[h!]
\begingroup %
\setlength\tabcolsep{0pt}
\noindent
{\footnotesize
\begin{tabular*}{\textwidth}{@{\extracolsep{\fill}} *{6}{c} }
\toprule 
\multicolumn{1}{c}{\multirow{2}{*}{Act}} &
\multicolumn{5}{c}{State of the world} \\
\cmidrule{2-6}
&  $w_1$ & $w_2$ & $w_3$ &  $w_4$ & $\dots$ \\ 
\midrule
\text{refund immed.} & (-10,policy OK) & (-10,policy NO)& (-20,policy NO) & (-20,policy NO) & \dots\\ 
\text{don't refund immed.} & (10,policy NO) & (10,policy OK) & (20,policy NO) & (20,policy NO) & \dots\\ 
\text{ship back \& assess} & (-12,policy OK) & (-2,policy OK) & (-22,policy OK) & (-2,policy OK) & \dots\\ 
\bottomrule
\end{tabular*}}
\endgroup
\caption{Possible outcomes for each act depending on the state of the world}
\label{tab:savage}
\end{table}
For instance, if $\bH$ knew  the state of the world to be $w_2$,  then a rational $\bH$ would choose the act ``don't refund immediately''. 
Given $\bH$'s preferences over acts, under a state-independence (and other postulates), Savage \cite{savage1972foundations} proved that these preferences can be represented by a utility function. In particular, $\bH$'s utility function over the act $x \in \mathcal{X}$ can be expressed as:
\begin{equation}
\label{eq:utilsattein}
    \nu(x) = \sum_{s \in \mathcal{W}} p(s) u(x(s)),
\end{equation}
 where $p: \mathcal{W} \rightarrow [0,1]$ is a probability mass function,  $u: \mathcal{O} \rightarrow \mathbb{R}$ is $\bH$'s `taste'  for the different outcomes (that is, an utility over the outcomes).  In other words, assuming the utility over the outcomes does not depend on the state of the world and vice versa (state independence), then $\bH$'s choices is determined by  $\bH$'s beliefs about the true state of the world (expressed by the probability $p$) and $\bH$'s `taste' about the outcome (expressed by the utility $u$).  
 This leads to the famous  subjective expected utility representation of an individual's ($\bH$ in our case) choice under uncertainty.
  In this paper, we will not assume state independence as explained in the following remark.
\end{example}
\begin{remark} We remark the following:
\begin{enumerate}
\item To provide a numeric representation (instead of a written description) of the states of the world and outcomes, we assume that each element of  $\mathcal{W}$ and $\mathcal{O}$ are encoded into  a vector in $\mathbb{R}^{n_\mathcal{W}}$ and, respectively,  $\mathbb{R}^{n_\mathcal{O}}$.  Therefore, an act $x$ can be represented as a concatenation of $|\mathcal{W}|$ vectors each one of dimension $\mathbb{R}^{n_\mathcal{W} n_\mathcal{O}}$. This provides a numeric representation for each act, that is, for each row  in  Table \ref{tab:savage}. We use this numeric representation as, in  many machine learning algorithms, is common to encode written-descriptions into numeric encodings. In this way, the utility $\nu(x)$ is a function of the real-vector $x \in \mathbb{R}^{n_\mathcal{W} n_\mathcal{O}}$ and can be  represented by a parametric (neural network) or non-parametric (kernel-based) model.
\item All along this paper, we are not assuming state independence. The reason is that we are considering a situation where we aim to learn the utility $\nu: \mathcal{X} \rightarrow \mathbb{R}$ from  $\bH$'s choices (learning from examples), which is the common setting in machine learning. If these choices implicitly satisfy state independence, then the learned utility $\nu$ can equivalently be represented as in \eqref{eq:utilsattein}, but we do not impose directly the decomposition in \eqref{eq:utilsattein}. As shown in \cite{CASANOVA20231,Benavoli2023e,miranda2023nonlinear}, furthermore by learning directly $\nu$, we can learn more general models. For instance, we can directly learn a lower prevision.
\item To derive \eqref{eq:utilsattein}, Savage imposed additional technical postulates (in addition to state independence), such as continuity. We will not discuss this further, although  we will assume that the utility  $\nu(x)$  is generally represented through continuous functions (neural network or kernel-based).
 \item In the previous example, information such as \textit{item-cost} and \textit{past} are known by $\bH$, therefore, we can simplify the framework by including them as context-variables (conditional variables).
\end{enumerate}
\end{remark}

Hereafter, we also include examples of classical instantiations of the decision-making framework $(\mathcal{W},\mathcal{O},\mathcal{X},C)$, commonly considered in the foundations of decision-making under uncertainty. \textit{Desirable gambles:} $\mathcal{W}=\Omega$ is a possibility space of an uncertain experiment (for instance, $\Omega=\{Heads,Tails\}$ in a coin tossing experiment); $\mathcal{O}=\mathbb{R}$ and $\mathcal{X}=\mathbb{R}^{|\Omega|}$ is the set of all gambles \cite{walley1991statistical}. \textit{Anscombe-Aumann:} $\mathcal{W}=\Omega$; $\mathcal{O}=D(\mathcal{O})$ is the set of simple probability distributions on a finite set of prizes $\mathcal{O}$ and $\mathcal{X} \subset \mathbb{R}^{|\Omega|\times |\mathcal{O}|}$ \cite{anscombe1963definition}.
A connection between these two settings is proven in \cite{zaffalon2017axiomatising,zaffalon2018desirability}.

For a given decision problem $(\mathcal{W},\mathcal{O},\mathcal{X},C)$, \textit{rationality} of the decision-maker can be imposed by enforcing the choice function $C$ to satisfy a set of constraints \cite{arrow1959rational,sen1994formulation,aizerman1981general}. A minimal rationality requirement is usually:
\begin{equation}
\label{eq:Path}
    C(A \cup B)= C(C(A)\cup B),
\end{equation}
for all $A,B \subseteq \mathcal{P}(\mathcal{X})$, which is known as \textit{path-independence} property \cite{plott1973path}. 
This property simply states that the acts selected by a rational individual $\bH$ from the set $A \cup B$ are the same of the acts that $\bH$  would choose by first chooising from $A$ only and then from $C(A)\cup B$.
Other constraints can be added depending on the specific decision-making problem \cite{seidenfeld2010coherent,de2019interpreting,van2018coherent} (including topological constraints).
The choice mechanisms (procedures) are typically defined by specifying a rule that determines how to identify $C(A)$ for each input set $A$.
A first example is  the `scalar optimisation choice' \cite{aizerman1981general} defined as:
\begin{equation}
\label{eq:scalaru}
 C(A)=\{z \in A: \text{there is no $y \in A$ s.t.\ }  \nu(y) > \nu(z)\},   
\end{equation}
that is, the choice function is defined by  $\nu:\mathcal{X} \rightarrow \mathbb{R}$.
We refer to $\nu$ as utility function. This choice function leads to a preference relation -- $x \succ y$ if $C(\{x,y\})=\{x\}$ for all $x,y \in \mathcal{X}$-- satisfying   
\begin{description}
    \item[asymmetry:] $\forall x,y \in \mathcal{X}$ if $x \succ y$ then not $y  \succ x$;
\item[negative transitivity:] if $x  \succ y$ then for any other element $z \in \mathcal{X}$ either $x  \succ z$ or $z  \succ y$ or both. 
\end{description}
We refer to the 
choice function in \eqref{eq:scalaru} as a binary preference.

Another example is the  `vector optimisation choice' defined through the following Pareto-dominance criterion
\begin{equation}
\label{eq:pareto}
 C(A)=\{z \in A: \text{there is no $y \in A$ s.t.\ }  \boldsymbol{\nu}(y) > \boldsymbol{\nu}(z)\},
\end{equation}
where $\boldsymbol{\nu}:\mathcal{X} \rightarrow \mathbb{R}^d$ is a vectorial function. Another choice function is provided below, selecting acts that are better with respect to at least one of the utilities:
 \begin{equation}
\label{eq:pseudoratio}
    C(A)=\left\{\bigcup\limits_{k=1,\dots,d} \arg\max_{z \in A} \nu_k(z)\right\}.
 \end{equation}
 The definitions \eqref{eq:pareto} and \eqref{eq:pseudoratio} coincide with \textit{maximality} and \textit{e-admissibility} when applied to gambles \cite{seidenfeld2010coherent,de2020archimedean}.
Each one of these representations defines choice functions satisfying different `rationality constraints'. For instance, all these three choice functions satisfy  property \eqref{eq:Path}.\footnote{This holds when $\mathcal{X}$ is finite. In the infinite case, additional assumptions are needed.}

Note that, here, we do not assume state-independence; therefore, the utility function $\boldsymbol{\nu}$ is generally state-dependent \cite{schervish1990state}. 

\subsection{Learning from choices under rationality}
\label{sec:learning}

The problem of learning from choices generalises the concept of learning from preferences. Given observations in the form of choices
\begin{equation}
\label{eq:choiciedata}
\mathcal{D}=(A_i,C(A_i))_{i=1}^n,
\end{equation}
and assuming the choice function satisfies rationality criteria such as \eqref{eq:Path}, then it can be represented through a mechanism such as \eqref{eq:scalaru}--\eqref{eq:pseudoratio}. In this case, learning a choice function reduces to the problem of learning a utility function vector \cite{benavoli2023learning,benavoli2024tutorial}.  Since the unknown is a function we can use a Gaussian Process (GP) to learn $\boldsymbol{\nu}$ \cite{benavoli2023learning,benavoli2024tutorial}. In this section, we assume that for each $y,z \in \mathcal{X}$ then $\boldsymbol{\nu}(y)\neq \boldsymbol{\nu}(z)$. This assumption prevents issues with zero probability when calculating the posterior, as discussed in \cite{benavoli2024tutorial}. As a result, for instance, in the scalar optimisation case, the choice function defined in \eqref{eq:scalaru} 
is such that $C(A)$ contains only a single element for each set $A$. In the next section, we will generalise this setting by introducing a limit of discernibility.

To learn a choice function, we  assume a GP prior on the latent unknown utility vector function:
\begin{equation}
\label{eq:multiprior3prior}
\boldsymbol{\nu}(z)=\begin{bmatrix}
\nu_1(z) \\ 
\nu_2(z)\\ 
\vdots \\
\nu_d(z)\\ 
\end{bmatrix} \sim GP\left(\mu_0(z),K_0(z,z')\right), 
\end{equation}
where $\mu_0(z),K_0(z,z')$ are respectively, the prior mean and covariance functions (the parameters of the GP), and then use the data in \eqref{eq:choiciedata} and the rationality constraints, which constrain $\boldsymbol{\nu}$ (as for instance in \eqref{eq:pseudoratio}) to compute a posterior over $\boldsymbol{\nu}$. The posterior is not a GP, but we can approximate it with a GP using various approaches (see below), leading to the posterior
\begin{equation}
\label{eq:multiprior3post}
\boldsymbol{\nu}(z)=\begin{bmatrix}
\nu_1(z) \\ 
\nu_2(z)\\ 
\vdots \\
\nu_d(z)\\ 
\end{bmatrix} \sim GP\left(\mu_p(z),K_p(z,z')\right),
\end{equation}
where $\mu_p(z),K_p(z,z')$ are respectively, the posterior mean and covariance functions of the GP.
Given this posterior, we can probabilistically predict the decision maker's choice for any new finite choice set $B$, that is we can compute $P(C(B)=B'|\mathcal{D})$ for each $B' \subseteq B$.

When $d=1$ (scalar utility), the problem simplifies to the problem of learning a complete preference (aka standard preference learning).

Note that, for standard preference learning, the posterior is a SkewGP \cite{benavoli2020skew,benavoli2021preferential,benavoli2021}. It means that the posterior is asymmetric (skewed). However, for the analytical derivation of the results in the paper, we will approximate it with a GP (which has Gaussian marginals). We can use three methods to compute this approximation: (i) Laplace's approximation  \cite{mackay1996bayesian,williams1998bayesian}; (ii) Expectation Propagation \cite{minka2001family}; (iii) Kullback-Leibler divergence minimization \cite{opper2009variational}, including Variational approximation \cite{gibbs2000variational} as a particular case. 

\begin{example}
\label{ex:2}
To introduce the problem of choice-function learning, we make the simplifying assumption that the utility over the acts in Example \ref{ex:1} can be represented by a function $\nu(x)$ with the act $x$ be encoded as $x\in \mathcal{X}=[1,9]\subset \mathbb{R}$.\footnote{Here, we assume that $\mathcal{X}$ is infinite, while previously we assumed a finite $\mathcal{X}$. As mentioned earlier, we will not go into topological details in this paper. For example, in the current example it is natural to assume continuity of preferences and utility. Depending on the specific problem, topological constraints, as well as other conditions such as monotonicity, can be imposed.} This allows us to plot the utility in a Cartesian diagram.  We assume that in her mind, $\bH$'s taste as a function of $x$ is as depicted in Figure \ref{fig:butter}, where higher values indicate a stronger preference.  $\bR$ does not know $\bH$'s taste and learn it indirectly from $\bH$'s  preferences, such as
$$
\begin{aligned}
\mathcal{D}=\{&6.5 \succ 3.5, 7 \succ 5, 6.5 \succ 5.5, 3.5 \succ 8.5,\\
&1 \succ 9, 7 \succ 1.5, 4.5 \succ 7.5, 3.5 \succ 4\}
\end{aligned}
$$
where $6.5 \succ 3.5$ means that $\bH$ prefers the act $x=6.5$ over the act $y=3.5$. These preferences have been generated according to the utility function depicted in Figure \ref{fig:butter}.

\begin{figure}[h!]
\centering
\includegraphics[width=7cm]{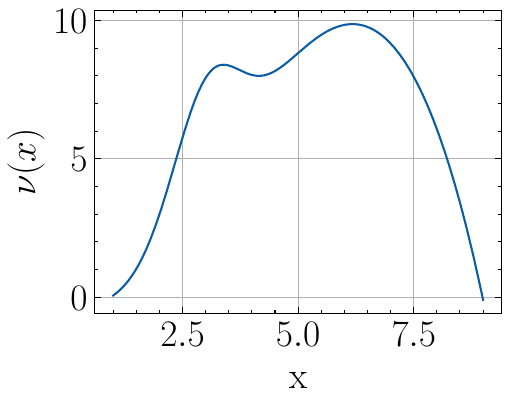}
\caption{Hidden utility in $\bH$'s mind.}
\label{fig:butter}
\end{figure}

Since there is only a single latent utility dimension ($d=1$), this problem reduces to learning a complete preference order. $\bR$ will do it by placing a GP prior over the unknown utility $\nu(x)$, as depicted in Figure \ref{fig:butterprior}. The likelihood is simply:
\begin{equation}
\label{eq:like}
p(\mathcal{D}|\nu)=I_{\{\nu(6.5)>\nu(3.5)\}}I_{\{\nu(7)>\nu(5)\}}\cdots I_{\{\nu(3.5)>\nu(4)\}}
\end{equation}
where $I_{\{\nu(6.5) > \nu(3.5)\}}$ denotes the indicator function, which is equal to 1 when the condition in the subscript holds true, and 0 otherwise. 
The posterior computed based on the $8$ preferences above is shown in Figure \ref{fig:butterposterior}.
The posterior computed based on a total of $30$ preferences  is shown in Figure \ref{fig:butterposterior30}. It is important to note that $\bR$ can never fully estimate the utility hidden in $\bH$'s mind because this utility is not identifiable from preferences alone. $\bR$ can only learn it up to a non-decreasing transformation.
This explains why the posterior mean in Figure \ref{fig:butterposterior30} is not `exactly' equal to the original utility in Figure \ref{fig:butterprior}, but they both generate the same preferences.

\begin{figure}[h!]
\centering
\includegraphics[width=9cm]{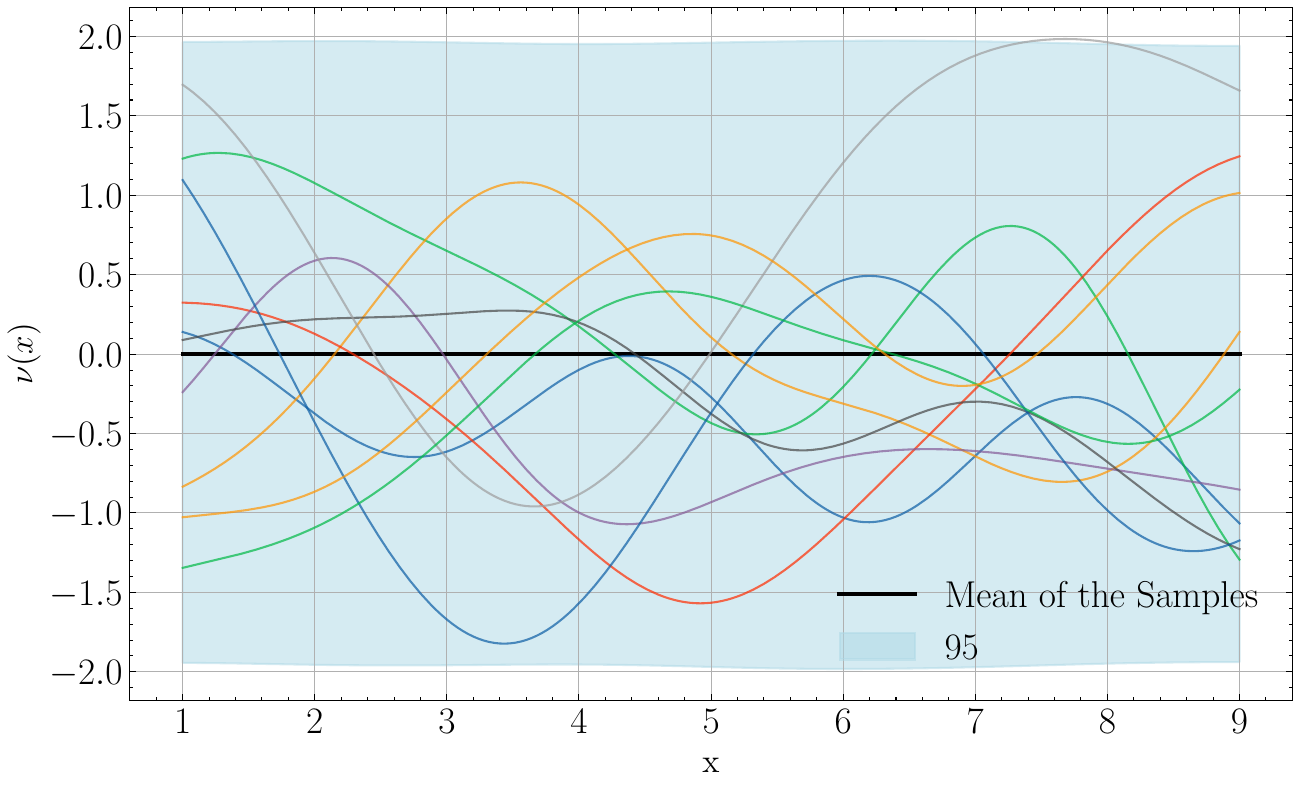}
\caption{GP prior: mean function (black line), 95\% credible region (blue shaded area), and 10 samples of $\nu(x)$, each shown in a different colour.
}
\label{fig:butterprior}
\end{figure}

\begin{figure}[h!]
\centering
\includegraphics[width=9cm]{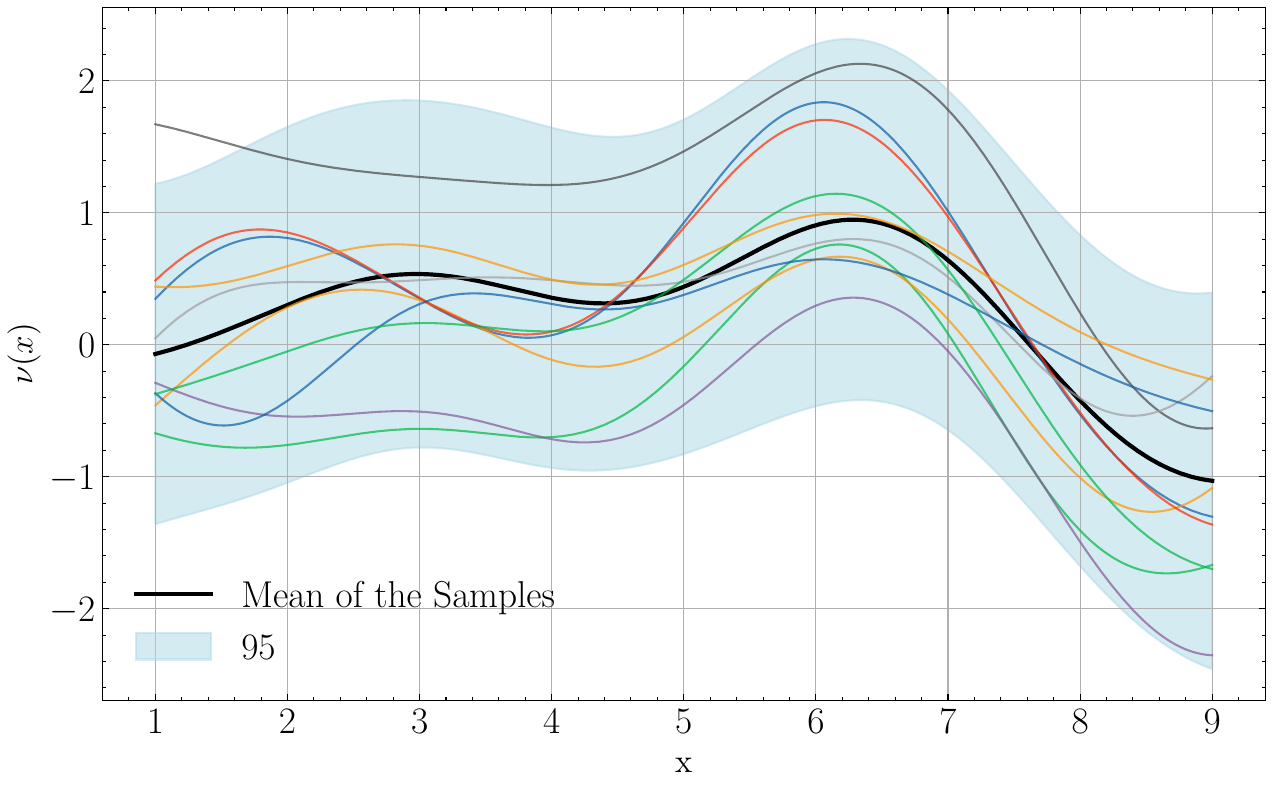}
\caption{GP posterior: mean function (black line), 95\% credible region (blue shaded area), and 10 samples of $\nu(x)$, each shown in a different colour.
}
\label{fig:butterposterior}
\end{figure}

\begin{figure}[h!]
\centering
\includegraphics[width=9cm]{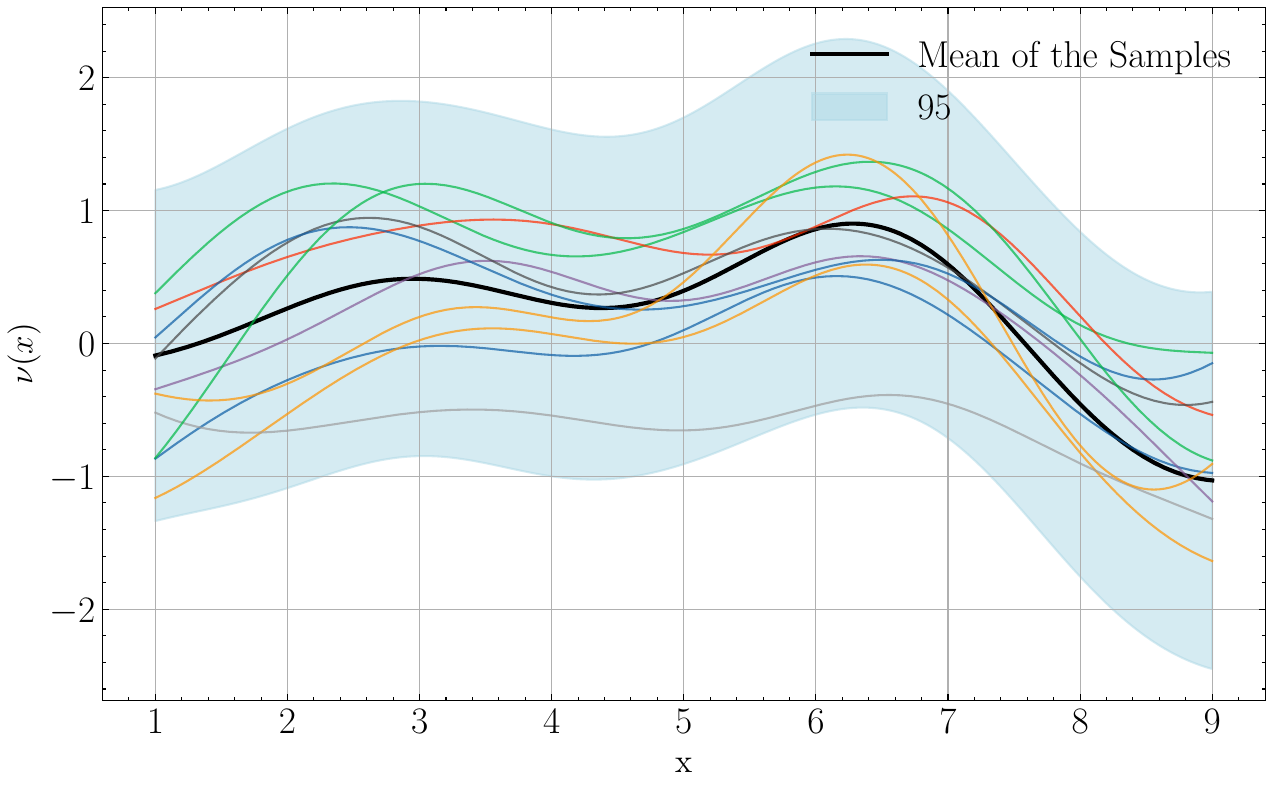}
\caption{GP posterior: mean function (black line), 95\% credible region (blue shaded area), and 10 samples of $\nu(x)$, each shown in a different colour.
}
\label{fig:butterposterior30}
\end{figure}
\end{example}

\begin{example}
\label{ex:3}
In the previous example, we have assumed a single utility $\nu(x)$. Here we instead assume that $\bH$ has in mind more than one utility, for instance $\nu_1(x),\nu_2(x)$ as depicted in Figure \ref{fig:choice2}-left.

\begin{figure}[h!]
\centering
\includegraphics[width=6.5cm]{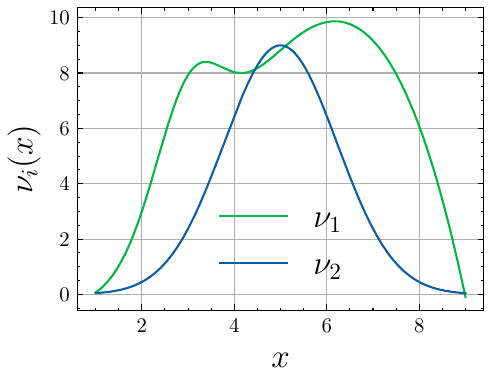}
\includegraphics[width=6.5cm]{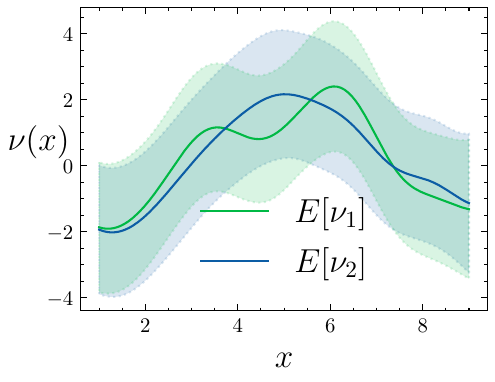}
\caption{Left: hidden utilities in $\bH$'s mind. Right: GP posterior marginals for the two utilities estimated from choice data, the mean functions are shown as lines and the shaded regions represent  the 95\% credible intervals.}
\label{fig:choice2}
\end{figure}

 $\bR$ does not know $\bH$'s utilities and learn them indirectly from $\bH$'s  choices. We assume that in her mind $\bH$ makes choices based on the mechanism \eqref{eq:pseudoratio}, leading to the choices
$$
\begin{aligned}
\mathcal{D}=\{&(\{2.8,6.8\},\{6.8\}), (\{5.2,6.8\},\{5.2,6.8\}),(\{1.2,7.7\},\{7.7\}),\dots\}
\end{aligned}
$$
where each element of $\mathcal{D}$ is a choice given a set of options, i.e., $(A_i,C(A_i))$. For instance, $(\{2.8,6.8\},\{6.8\})$ means that given the options $x_1=2.8,x_2=6.8$, $\bH$ selected $6.8$. This holds because the utilities $[\nu_1(6.8),\nu_2(6.8)]=[9.47,3.05]$
are both larger than $[\nu_1(2.8),\nu_2(2.8)]=[7.25,1.79]$. Conversely, in \\  $(\{5.2,6.8\},\{5.2,6.8\})$  we have a trade-off. Indeed, $[\nu_1(5.2),\nu_2(5.2)]=[9.1,8.88]$ dominates $6.8$ in the second utility, but it is dominated in the first utility.
To model incomparability, $\bR$ needs to learn two utilities and will do it by placing a GP prior over the unknown utilities $\nu_1(x),\nu_2(x)$. The likelihood is $ p(\mathcal{D}|\boldsymbol{\nu}(X))=\prod_{k=1}^m p(C(A_k),A_k|\boldsymbol{\nu}(X))$, where \cite{benavoli2024tutorial}:
\begin{equation}
  \label{eq:likelihoodpseudorat}
   \begin{aligned} 
   p(C(A_k),A_k|\boldsymbol{\nu}(X))&=\prod\limits_{\{{ o},{v}\} \in C(A_k)}\Bigg( 1-\prod_{i=1}^d I_{\{\nu_i({ o})-\nu_i({ v})>0\}}-\prod_{i=1}^d I_{\{\nu_i({ v})-\nu_i({ o})>0\}}\Bigg)\\
         &\prod_{{ v} \in R(A_k)}\Bigg(\prod_{i=1}^d  \left(1-\prod_{{ o} \in C(A_k)} I_{\{\nu_i({ v})-\nu_i(o)>0\}}\right)\Bigg),\\
   \end{aligned}    
 \end{equation}
 with $d=2$ (number of utilities). The product in the first line imposes the condition that each object in $C(A)$ is not dominated in both the utilities (according to the condition \eqref{eq:pseudoratio}).
 The product in the second line  means that for all ${v} \in R(A_k)=A_k\backslash C(A_k)$, it is not true that the value of any latent utility in ${v}$ is higher than their value in ${o} \in C(A)$. 
 
The posterior computed based on a dataset $\mathcal{D}$ including $100$ choices among $60$ elements sampled uniformly from $\mathcal{X}$  is shown in Figure \ref{fig:choice2}. It can be noticed how the posterior is able to learn back the utilities from the choice dataset $\mathcal{D}$. It learns the utilities that belongs to an equivalence class of utilities, which are related by a monotonic increasing transformation and lead to the same choice dataset.\footnote{We used the Python library \texttt{PrefGP} (\url{https://github.com/benavoli/prefGP}) to compute the posterior in this and previous example.}
\end{example}

\subsection{Learning from choices under bounded-rationality}
\label{sec:rum}
In standard decision theory, it is assumed that the decision maker is rational. However, due to cognitive limitations, we can generally only assume that the decision maker is bounded-rational \cite{simon1990bounded}. A typical instance arises when the decision maker has limited time to make a decision (such as when playing chess), and, under time pressure, may resort to a random choice. 
Other cases involve limitations in computational resources \cite{gershman2015computational,benavoli2019sum} and discernibility \cite{luce1956semiorders}. In the latter, the decision-maker may struggle to differentiate between two acts with similar utilities, resulting in a random choice. These situations are usually modelled through \textit{random utility models} \cite{train2009discrete}, where noise with a certain distribution is included in the choice mechanism. For instance, in this case, the decision maker is assumed to choose $C(\{y,z\})=\{z\}$ when
\begin{equation}
\label{eq:unoise}
\nu(z)+n(z)>\nu(y)+n(y)
\end{equation}
where $n(z),n(y)$ are independent noises. If $n(z),n(y) \sim N(0,\sigma^2)$ \cite{Thu27}, then
\begin{equation}
\label{eq:probit}
\begin{aligned}
P(C(\{z,y\})=\{z\})&=\Phi\left(\frac{\nu(z)-\nu(y)}{\sqrt{2}\sigma}\right),\\
P(C(\{z,y\})=\{y\})&=\Phi\left(\frac{\nu(y)-\nu(z)}{\sqrt{2}\sigma}\right)=1-\Phi\left(\frac{\nu(z)-\nu(y)}{\sqrt{2}\sigma}\right),\\
\end{aligned}
\end{equation}
where $\Phi$ is the CDF of the standard normal distribution. In the paper, we use $\phi$ to denote the PDF of the standard normal distribution. It can be noticed that when $\nu(z) - \nu(y) = 0$, the decision maker chooses between the two options with probability $1/2$. However, when the difference $|\nu(z) - \nu(y)|$ is very large compared to $\sigma$, the decision maker  will choose with almost certainty. Therefore, this random utility model captures a limit of discernibility through the discernibility parameter $\sigma$ \cite{benavoli2024tutorial}. Note that, we can use the GP model to learn from this noisy data, in this case the likelihood \eqref{eq:like} becomes
\begin{equation}
\label{eq:likeprobit}
p(\mathcal{D}|\nu)=\Phi\left(\tfrac{\nu(6.5)-\nu(3.5)}{\sqrt{2}\sigma}\right)\cdots \Phi\left(\tfrac{\nu(3.5)-\nu(4)}{\sqrt{2}\sigma}\right).
\end{equation}
This is a possible model for bounded-rationality. Another alternative model can be defined if we allow the decision-maker to state that two alternatives are incomparable when their utility difference is below the limit of discernability, then we can represent the decision-maker choices through this  choice function model \cite{luce1956semiorders}:
\begin{equation}
\label{eq:scalarubounded}
\begin{aligned}
 C(\{z,y\})=\left\{\begin{array}{ll}
 \{z\} & \text{if } \nu(z) > \nu(y) + \sigma,\\
  \{y\} & \text{if } \nu(y) > \nu(z) + \sigma,\\
    \{z,y\} & \text{if } |\nu(z)-\nu(y)| \leq  \sigma,
 \end{array}\right.
 \end{aligned}
\end{equation}
where $\sigma > 0$ is the limit of discernibility. In this setting, given the choice set $\{y, z\}$, the decision-maker may refuse to select either alternative if, due to her limit of discernibility, she is unable to distinguish between them. This gives rise to an interval of \textit{indifference},\footnote{This is not an indifference relation, it is an incomparability due the limited discernability of the decision-maker.} in the sense that some pairs of alternatives are indistinguishable to her.

\begin{remark}
By comparing \eqref{eq:probit} and \eqref{eq:scalarubounded}, as depicted in Figure \ref{fig:incompcomp}, we observe that in situations of limit of discernability, forcing the decision-maker to express a preference leads to errors (i.e., noisy preference data). Specifically, there is a non-zero probability of stating an incorrect preference, as defined by \eqref{eq:probit} and illustrated in blue in Figure \ref{fig:incompcomp}. Therefore, \eqref{eq:probit} can be interpreted as a model capturing the noise affecting preference data when the decision-maker is forced to choose between alternatives $y$ and $z$, despite being unable to discriminate between them due to her limited discernibility. 
Both these models represent forms of bounded-rationality. The choice function in \eqref{eq:scalarubounded} is distinct from the incomparability discussed in Example \ref{ex:3}. In Example \ref{ex:3}, the inability of the decision maker to select a single act arises from the presence of multiple utility functions and, therefore, it is a proper instance of incomparability.
\end{remark}
\begin{figure}[h!]
\centering
\includegraphics[width=8cm]{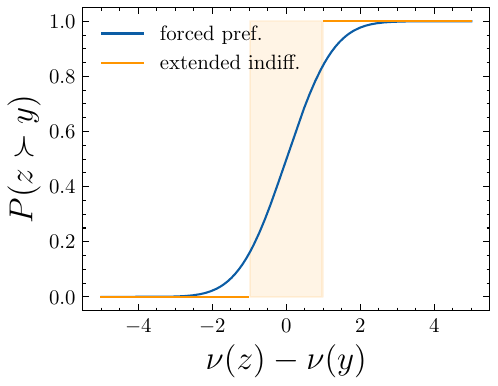}
\caption{Comparison between the likelihood in \eqref{eq:probit} (in blue) and the choice function model \eqref{eq:scalarubounded} (in orange), both computed with $\sigma=1$. The y-axis reports the probability that $z \succ y$ as a function of the difference of utilities. The orange shaded area is the region of extended indifference for the model \eqref{eq:scalarubounded}.}
\label{fig:incompcomp}
\end{figure}
In this section, we provided examples of bounded rationality for the scalar optimisation choice mechanism in \eqref{eq:scalaru}. Similar models can also be introduced for the vector mechanisms. For instance, we can obtain a bounded-rationality model for the likelihood in \eqref{eq:likelihoodpseudorat} by replacing the indicators $I_{\{\nu_i({ o})-\nu_i({ v})>0\}}$ with $\Phi(\tfrac{\nu_i(o)-\nu_i(v)}{\sqrt{2}\sigma})$ \cite{benavoli2024tutorial}.

\subsection{Modified signalling games}
\label{sec:signalling}
\emph{Signalling games} \cite{spence1974market,gibbons1992game} specifically refer to a class of two-player games with incomplete information, where one player possesses private information (the Sender), while the other does not (the Receiver). The timing and payoffs of the game are  as
follows:
\begin{enumerate}
    \item Nature draws a state of the world (a type) $t_i \in T$  according to a probability distribution $p(t)$. 
        \item The Sender observes $t_i$ and then chooses a message $m_j \in M$ from a finite set of messages $M$ and sends it to the Receiver.
    \item The Receiver observes $m_j$ (but not $t_i$) and then chooses an action $a_k \in A$ from a finite set of possible actions $A$.
    \item If $a_k \in A' \subset A$  then the Sender chooses an action $b_l \in B$ from a finite set of possible actions $B$. Otherwise, $b_l= \varnothing$ (null decision).
    \item Payoffs for Sender and Receiver are given by $u_S(t_i, m_j, a_k,b_l)$
    and, respectively,  $u_R(t_i, m_j, a_k,b_l)$.
\end{enumerate}
Step 4 in the game is typically absent in standard signalling games, which is why we refer to it as a \textit{modified} signalling game. Here, $A'$ denotes the subset of actions for which the Sender can make an additional move.

\begin{crequirement}{1}
\label{req:1}
After observing any message $m_j$ from $M$, the
Receiver must have a belief about which types could be the right one. Denote
this belief by the probability $p(t|m_j)$.
\end{crequirement}
\begin{crequirement}{2R}
\label{req:2R}
For each $m_j \in M$, the Receiver's action
$a^*(m_j)$ must maximise the Receiver's expected utility, given the belief $p(t|m_j)$ about which types could have sent $m_j$. That is,
\begin{equation}
a^*(m_j) \in \arg\max_{a_k \in A} \int_{T} u_R(t,m_j,a_k,b^*(a_k))dp(t|m_j).
\end{equation}
\end{crequirement}
Note that $b^*(a_k)$ is not known by the Receiver, but it depends on $t$ and, therefore, the Receiver can compute this expectation.
Requirement \ref{req:2R} also applies to the Sender, but the Sender has complete information (and hence a trivial belief), so Requirement  \ref{req:2S}  is simply that the
Sender's strategy be optimal given the Receiver's strategy:
\begin{crequirement}{2S}
\label{req:2S}
For each $t_i \in T$, the Sender's message
$m^*(t_i)$ and the action $b^*(a^*(m_j))$ must maximize the Sender's utility, given the Receiver's strategy $a^*(m_j)$. That is, we have that:
\begin{equation}
\begin{aligned}
&\big(  m^*(t_i),b^*(a^*(m_j)) \big)\\
&\in \arg\max_{m \in M, b \in B} u_s(t_i,m,a^*(m),b(a^*(m))).
\end{aligned}
\end{equation}
\end{crequirement}
Finally, given the Sender's strategy $m^*(t_i)$, let $T_j$ denote the set
of types that send the message $m_j$. That is, $t_i$ is a member of
the set $T_j$ if $m^*(t_i) = m_j$. If $T_j$ is nonempty then the information
set corresponding to the message $m_j$ is on the equilibrium path; otherwise, $m_j$ is not sent by any type and so the corresponding
information set is off the equilibrium path. For messages on the
equilibrium path, we have that
\begin{crequirement}{3}
\label{req:3}
For each $m_j \in M$, if there exists $t \in T$ such that  $m^*(t) = m_j$, then the Receiver's belief at the information set corresponding to $m_j$ must follow from Bayes' rule and the Sender's strategy:
\begin{equation}
    p(t|m_j)%
    =\frac{I_{\{m^*(t) = m_j\}}(t)p(t)}{\int_{T}I_{\{m^*(t) = m_j\}}(t)dp(t)}.
\end{equation}
\end{crequirement}

\begin{definition}
A pure-strategy perfect Bayesian equilibrium in a signalling game is a triplet of strategies $(m^*(t_i),a^*(m_j),b(a^*(m_j))$  and a belief $p(t|m_j)$
satisfying Requirements \ref{req:1}--\ref{req:3}.
\end{definition}

\section{The AI-assistance game}
\label{sec:contrib1}
In this section, we revisit the AI-assistance game introduced in \cite{hadfield2017off} through the lens of signalling games, as depicted in Figure \ref{fig:AIassistancegame}. In this game, the human ($\bH$), acts as the Sender and the robot ($\bR$), as the Receiver, with $T$ representing the set of possible types for $\bH$. Each type $t_i \in T$ characterises $\bH$'s ``taste'' and degree of bounded rationality for an underlying decision problem $(\mathcal{W}, \mathcal{O}, \mathcal{X}, C)$. 
\begin{figure}[h]
\centering
\includegraphics[width=14cm]{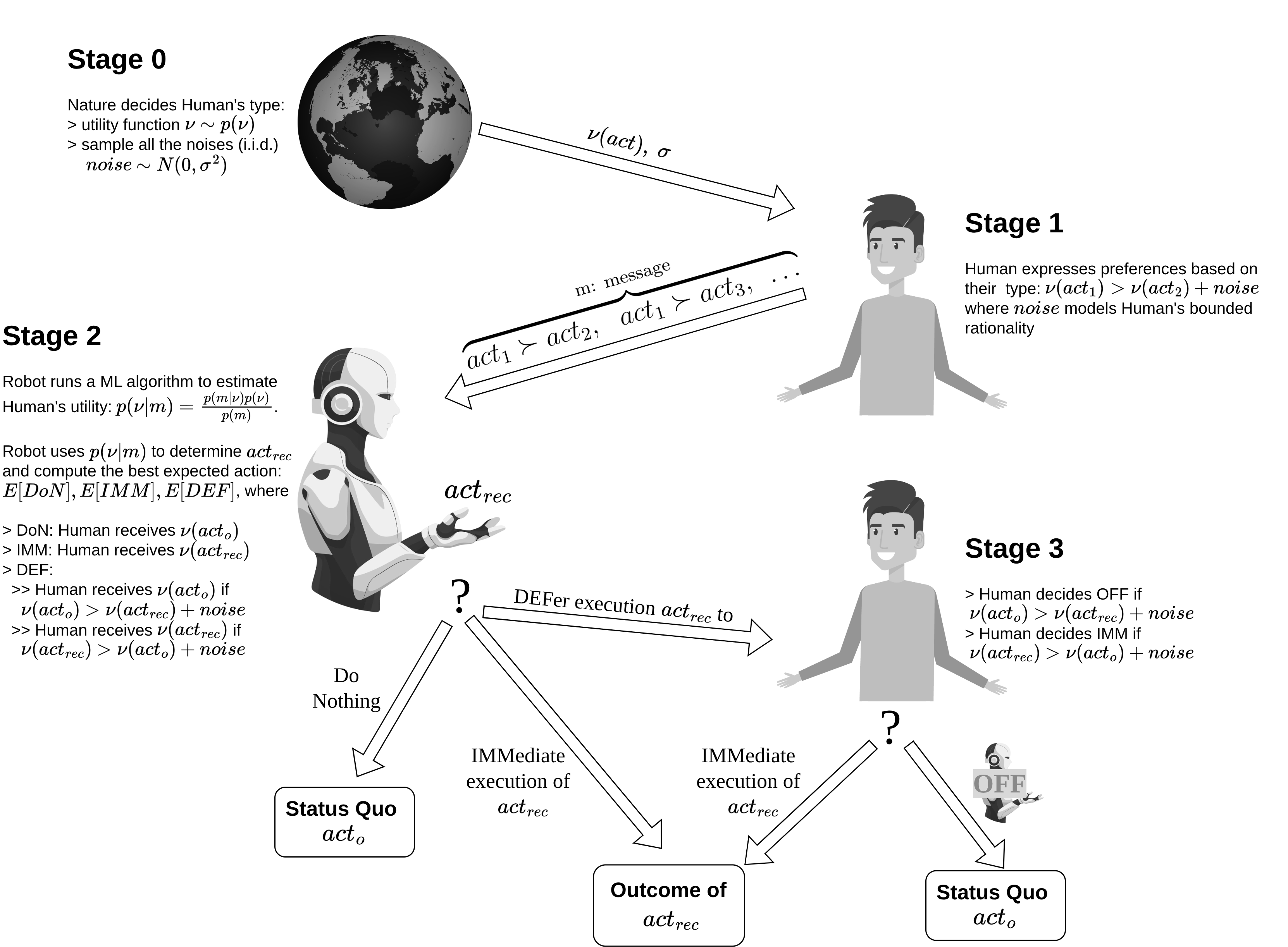}
\caption{AI assistance game as a signalling game}
\label{fig:AIassistancegame}
\end{figure}
As studied in \cite{hadfield2017off}, we initially consider a bounded-rationality model similar to \eqref{eq:unoise}, where a single scalar utility is assumed (that is, we assume completeness). Thus, the type $t_i$ is a realisation of a utility function $\nu$ and (a vector of independent) noises\footnote{It is a vector because the noise is also present in the message, that is in the choice dataset.} ${ \bf n}$, where $p({ \bf n}) = N({ \bf n}; 0, I\sigma^2)$ and $p(\nu) = GP(\nu; \mu_0, K_0)$, for some mean function $\mu_0$, kernel function $K_0$, and parameter $\sigma^2$ ($I$ is the identity matrix). We further assume that all these parameters are common knowledge in the game.

The message $m_j \in M$ encapsulates $\bH$'s preferences/choices over acts in $\mathcal{X}$, relative to the underlying decision problem $(\mathcal{W}, \mathcal{O}, \mathcal{X}, C)$. That is, $m_j$ represents a dataset of choices as in \eqref{eq:choiciedata}. Since we are considering a single utility function, we assume a choice mechanism as in \eqref{eq:scalaru}, leading to binary preferences.
We first assume that the payoffs do not directly depend on the message $m_j$. In signalling games, this assumption is referred to as \textit{cheap-talk} \cite{gibbons1992game}. Furthermore, we assume that each type $t_i$ can only choose one message, i.e., $\bH$ can only send the message determined by Nature through $\bH$'s type. This is the setting studied in \cite{hadfield2017off}. The message is a choice dataset, such as the one in \eqref{eq:choiciedata}, which $\bR$ uses to estimate $\bH$'s utility using the approach described in Section \ref{sec:learning} and illustrated in Example \ref{ex:2}.

\begin{tcolorbox}[width=\linewidth, sharp corners=all, colback=white!95!black]
The key difference between  an AI-assistance game and a general signalling game is that in the former $\bR$ aims to maximise $\bH$'s utility. This is why we refer to $\bR$ as being an assistant.
\end{tcolorbox}

We consider two acts $x,o \in \mathcal{X}$ relative to the underling decision problem $(\mathcal{W}, \mathcal{O}, \mathcal{X}, C)$. We assume that $o$ is the status-quo and $x$ is some new act that $\bR$ recommends.

\begin{example}
\label{ex:31} In our customer service example, we are considering a scenario, where the  AI-assistant  $\bR$ is helping $\bH$ to make a decision. In this setting, $o$ represents the act chosen by $\bH$ (e.g., ``refund immediately''). Instead, $x$ is a new  act recommended by  $\bR$ to help $\bH$ to achieve a better decision (to maximise $\bH$'s utility).
 \end{example}

We will address the optimal selection of $x$ in Section \ref{sec:optimalsel}. In this section, we simply assume that $\bR$ will choose an act $x$ that is preferable to $o$ whenever possible. 

\begin{tcolorbox}[width=\linewidth, sharp corners=all, colback=white!95!black]
This explains why $\bH$ is willing to engage in the game with $\bR$: $\bH$ expects $\bR$ to recommend acts that improve upon the status quo, since $\bH$ does not know how to identify such acts on her own.
\end{tcolorbox}

In this context, the available actions for $\bR$ are $A = \{IMM, DEF, DoN\}$. 
$IMM$ means `immediate decision', that is $\bR$ will implement $x$. DoN means `do nothing'. $DEF$ means $\bR$ will defer to $\bH$. In the signalling game, this implies that $A' = \{DEF\}$. In the case $\bR$ defers to $\bH$, then $\bH$ has two possible actions, $B = \{OFF, \neg OFF\}$, either to switch off $\bR$ or to allow $\bR$ to implement $x$. Therefore, the payoff for $\bR$ is
\begin{equation}
\label{eq:payoffprop1R}
\begin{aligned}
u_R(t_i,m_j,IMM,\varnothing)&=\nu(x),\\
u_R(t_i,m_j,DoN,\varnothing)&=\nu(o),\\
u_R(t_i,m_j,DEF,b^*(DEF))&=\nu(o)I_{\{\nu(o)+n(o)>\nu(x)+n(x)\}}
\\&+\nu(x)I_{\{\nu(x)+n(x)>\nu(o)+n(o)\}},
\end{aligned}
\end{equation}
where we have denoted the type and message as $t_i,m_j$. Indeed, if $\bR$ chooses $IMM$, then the payoff for  $\bR$ is equal to the utility that $\bH$ receives from the act $x$. If $\bR$ chooses $DoN$, then the payoff  for $\bR$ is equal to the utility of the status-quo for $\bH$  (utility relative to the act $o$).
The last case represents the payoff for the action $DEF$, where the action depends on $\bH$'s move in the game, which is encapsulated by the indicator functions. Given $\bH$ is a bounded rational agent, $\bH$ chooses the action $OFF$ or $\neg OFF$ based on $\bH$'s noisy utility. If $\bH$ chooses $OFF$, then $\bH$  receives the utility of the status-quo $o$, otherwise the utility of $x$.

$\bR$ does not know $\bH$'s utility $\nu$. Therefore, according to \textbf{Requirement 2R}, $\bR$ will choose the action that maximises the expected value of the payoff in \eqref{eq:payoffprop1R}, where the expectation is computed with respect to $p(t | m_j)$, i.e., the posterior distribution over $\bH$'s type (utility $\nu$ for the underlying decision problem $(\mathcal{W}, \mathcal{O}, \mathcal{X}, C)$) learned by $\bR$ from the message $m_j$ (which is a dataset of preferences). We can then prove the following lemma.

\begin{lemma}
\label{lem:1}
Assume that $p(\nu|\mathcal{D})=GP\left(\nu;\mu_p
,K_p\right)$ is the GP posterior computed by $\bR$ from the prior   $p(\nu)=GP\left(\nu;\mu_0,K_0\right)$, the bounded-rationality likelihood \eqref{eq:probit} and the message $m_j=\mathcal{D}$, then
 the expected payoffs of $\bR$'s actions  are:
 {\small
\begin{align}
\nonumber
&DEF:  \int_{T} \Big(\nu(o)I_{\{\nu(o)+n(o)>\nu(x)+n(x)\}}+\nu(x)I_{\{\nu(o)+n(o)<\nu(x)+n(x)\}} \Big)\\
\nonumber
&dp(\nu(x),n(x),\nu(o),n(o)|m_j)\\
\nonumber
&=\mu_p(x)\left(1-\Phi\left(\tfrac{\mu_p(o)-\mu_p(x)}{\sqrt{K_p(x,x)+2\sigma^2+K_p(o,o)-2K_p(x,o)}}\right)\right)\\
\nonumber
& + \tfrac{K_p(x,x)-K_p(x,o)}{\sqrt {K_p(x,x)+2\sigma^2+K_p(o,o)-2K_p(x,o)}}\phi\left(\tfrac{\mu_p(o)-\mu_p(x)}{\sqrt{K_p(x,x)+2\sigma^2+K_p(o,o)-2K_p(x,o)}}\right)\\
\nonumber
&+\mu_p(o)\left(1-\Phi\left(\tfrac{\mu_p(x)-\mu_p(o)}{\sqrt{K_p(x,x)+2\sigma^2+K_p(o,o)-2K_p(x,o)}}\right)\right)\\
\label{eq:DEFexp}
& + \tfrac{K_p(o,o)-K_p(x,o)}{\sqrt {K_p(x,x)+2\sigma^2+K_p(o,o)-2K_p(x,o)}}\phi\left(\tfrac{\mu_p(x)-\mu_p(o)}{\sqrt{K_p(x,x)+2\sigma^2+K_p(o,o)-2K_p(x,o)}}\right),\\
\label{eq:IMMexp}
&IMM:  \int_{T}\nu(x)dp(\nu(x),n(x),\nu(o),n(o)|m_j)=\mu_p(x),\\
\label{eq:OFFexp}
&DoN:  \int_{T}\nu(o)dp(\nu(x),n(x),\nu(o),n(o)|m_j)=\mu_p(o),
\end{align}}
where  {\small $$p(\nu(x),n(x),\nu(o),n(o)|m_j)
=p(\nu(x),\nu(o)|m_j)
p(n(x),n(o))$$}
with {\small $p(n(x),n(o))=N(n(x);0,\sigma^2)N(n(o);0,\sigma^2)$} and
{\small
\begin{align}
\label{eq:multiprior3}
&p(\nu(x),\nu(o)|m_j)= N\left(\begin{bmatrix}
\nu(x) \\ 
\nu(o) 
\end{bmatrix};\begin{bmatrix}
\mu_p(x) \\ 
\mu_p(o) 
\end{bmatrix},\begin{bmatrix}
K_p(x,x) & \hspace{-2mm} K_p(x,o) \\ 
K_p(o,x) & \hspace{-2mm}  K_p(o,o) 
\end{bmatrix}\right).
\end{align}}
\end{lemma}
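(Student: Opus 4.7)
}

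The cases \eqref{eq:IMMexp} and \eqref{eq:OFFexp} are immediate: $IMM$ and $DoN$ do not involve the bounded-rationality noise, so their expected payoffs reduce to $\int \nu(x)\,dp(\nu|m_j)$ and $\int \nu(o)\,dp(\nu|m_j)$, which by definition of the GP posterior equal $\mu_p(x)$ and $\mu_p(o)$ respectively. All the real work is in \eqref{eq:DEFexp}.

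My plan is to reduce the $DEF$ expectation to two applications of a standard Gaussian identity. First I would observe that under $p(\nu(x),\nu(o)\mid m_j)\,p(n(x),n(o))$, the four-dimensional vector $\big(\nu(x),\nu(o),n(x),n(o)\big)$ is jointly Gaussian with the block covariance obtained by stacking \eqref{eq:multiprior3} with $\mathrm{diag}(\sigma^2,\sigma^2)$. I would then introduce the auxiliary variable
\[
\delta \;=\; \nu(x)+n(x)-\nu(o)-n(o),
\]
which is also Gaussian, with mean $\mu_\delta=\mu_p(x)-\mu_p(o)$ and variance $\sigma_\delta^2=K_p(x,x)+K_p(o,o)-2K_p(x,o)+2\sigma^2$. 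With this definition the $DEF$ payoff in \eqref{eq:payoffprop1R} rewrites as $\nu(x)\,I_{\{\delta>0\}} + \nu(o)\,I_{\{-\delta>0\}}$, so its expectation splits into two terms of the form $E[Y\,I_{\{Z>0\}}]$ for jointly Gaussian $(Y,Z)$.

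Next I would invoke (or quickly rederive by conditioning on $Z$) the standard identity
\[
E\!\left[Y\,I_{\{Z>0\}}\right] \;=\; \mu_Y\,\Phi\!\left(\tfrac{\mu_Z}{\sigma_Z}\right) + \tfrac{\mathrm{Cov}(Y,Z)}{\sigma_Z}\,\phi\!\left(\tfrac{\mu_Z}{\sigma_Z}\right),
\]
whose proof uses $E[Y\mid Z]=\mu_Y+\frac{\mathrm{Cov}(Y,Z)}{\sigma_Z^2}(Z-\mu_Z)$ together with $\int_0^\infty (z-\mu_Z)\,f_Z(z)\,dz=\sigma_Z\phi(\mu_Z/\sigma_Z)$. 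Applying this identity with $Y=\nu(x)$, $Z=\delta$ gives $\mathrm{Cov}(Y,Z)=K_p(x,x)-K_p(x,o)$ (the $n(x),n(o)$ terms vanish by independence), yielding the first two lines of \eqref{eq:DEFexp} after writing $\Phi(\mu_\delta/\sigma_\delta)=1-\Phi(-\mu_\delta/\sigma_\delta)$. Applying it again with $Y=\nu(o)$, $Z=-\delta$ gives $\mathrm{Cov}(Y,Z)=K_p(o,o)-K_p(x,o)$ and, using the same trick to flip the sign inside $\Phi$, produces the last two lines. Summing the two expressions completes the identity.

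The only nontrivial step is bookkeeping the covariances and keeping the signs/arguments of $\Phi$ consistent with the form stated in \eqref{eq:DEFexp}; once $\delta$ is introduced, everything else is mechanical. The main obstacle I anticipate is not conceptual but notational: since $\nu(x)$ and $\delta$ share the additive term $\nu(x)$, one must carefully verify that $\mathrm{Cov}(\nu(x),\delta)$ cleanly reduces to $K_p(x,x)-K_p(x,o)$ (and symmetrically for $\nu(o)$), and that the $2\sigma^2$ coming from $n(x),n(o)$ appears only inside $\sigma_\delta^2$ and not in these covariances. The rest is a direct substitution into the identity above.
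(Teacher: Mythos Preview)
Your proposal is correct and in fact cleaner than the route taken in the paper. The paper also splits the $DEF$ payoff into the two pieces $E[\nu(x)I_{\{\nu(x)+n(x)>\nu(o)+n(o)\}}]$ and $E[\nu(o)I_{\{\nu(o)+n(o)>\nu(x)+n(x)\}}]$, but it evaluates each piece via an auxiliary lemma (Lemma~\ref{lem:preference_learn}) that proceeds by conditioning on $\nu(o)$, applying the inverse Mills ratio \eqref{eq:mills} conditionally, and then integrating out $\nu(o)$ and the noise using the one-dimensional Gaussian identities \eqref{eq:ResGPDFint}--\eqref{eq:xResprobitint}; this takes a couple of pages of algebra before the terms collapse to the stated form. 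Your approach avoids all of that by working directly with the joint Gaussianity of $(\nu(x),\delta)$ and $(\nu(o),-\delta)$ and invoking the single identity $E[Y\,I_{\{Z>0\}}]=\mu_Y\,\Phi(\mu_Z/\sigma_Z)+\tfrac{\mathrm{Cov}(Y,Z)}{\sigma_Z}\phi(\mu_Z/\sigma_Z)$, which is exactly what the paper's longer computation is reproving from scratch in disguise. The only advantage of the paper's route is that its intermediate steps are reused verbatim in the proof of Lemma~\ref{lem:2} (the $\sigma$-threshold model), where the truncation is at $\nu(o)+\sigma$ rather than $\nu(o)+n(o)-n(x)$; your identity handles that case equally well, so nothing is lost.
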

Proofs of this and the following results are provided in Appendix. In Lemma \ref{lem:1}, we have exploited the fact that the marginal of a GP is a multivariate normal (equation \eqref{eq:multiprior3}). This allows us to compute the expectations of the utilities for the three actions, DoN, IMM and DEF for $\bR$.
We then introduce the following definitions:
\begin{definition}
\label{def:cases}
We say that:
\begin{itemize}
\item $S$ is \textbf{rational} whenever $\sigma\rightarrow 0$;  $S$ is \textbf{bounded-rational} otherwise;
\item $R$ has \textbf{no uncertainty} on $S$'s utility whenever \\ $K_o(x,x),K_o(o,o),K_o(o,x)\rightarrow 0$ (that is the prior becomes a Dirac's delta); otherwise $R$ has \textbf{uncertainty}.
\end{itemize}
\end{definition}
In the signalling game defined in Section \ref{sec:signalling}, note that $R$ having no uncertainty about $S$'s utility implies that $R$ possesses perfect knowledge of $S$'s utility. This is because the prior $p(t)$ is  common knowledge within the game.
We can then prove the following result. 
\begin{proposition}
\label{prop:1}
The optimal decisions for $\bR$ are:
\begin{itemize}
\item If $S$ is \textbf{rational} and $R$ has \textbf{no uncertainty}, then $DEF$ is  never dominated by $IMM,DoN$.\footnote{The $DEF$ action has the same payoff for $\bR$ as the best action between $IMM,DoN$.}
\item If $S$ is \textbf{bounded-rational} and $R$ has \textbf{no uncertainty}, then DEF is never optimal;
\item If $S$ is \textbf{rational} and $R$ has \textbf{uncertainty}, then DEF is always optimal.
\item If $S$ is \textbf{bounded-rational} and $R$ has \textbf{uncertainty}, then DEF is optimal if 
\eqref{eq:DEFexp} is larger or equal than the maximum between \eqref{eq:IMMexp} and \eqref{eq:OFFexp}.
\end{itemize}
\end{proposition}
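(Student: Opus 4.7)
The plan is to compare the three closed-form expected payoffs from Lemma \ref{lem:1} under each of the four regimes in Definition \ref{def:cases}. Let me write $\Delta := \mu_p(x) - \mu_p(o)$ and $\tau^2 := K_p(x,x) + 2\sigma^2 + K_p(o,o) - 2K_p(x,o)$ (which is the variance of $(\nu(x)+n(x))-(\nu(o)+n(o))$ under the posterior $\times$ noise product measure). Then \eqref{eq:DEFexp} can be rewritten compactly as
\begin{equation*}
\mathrm{DEF} = \mu_p(x)\,\Phi(\Delta/\tau) + \mu_p(o)\,\Phi(-\Delta/\tau) + \tfrac{K_p(x,x)-K_p(x,o)}{\tau}\phi(\Delta/\tau) + \tfrac{K_p(o,o)-K_p(x,o)}{\tau}\phi(\Delta/\tau),
\end{equation*}
while \eqref{eq:IMMexp} gives $\mathrm{IMM} = \mu_p(x)$ and \eqref{eq:OFFexp} gives $\mathrm{DoN} = \mu_p(o)$. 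All four cases will be reduced to inspecting the limits of this formula.

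For the first case (rational $\bH$, no uncertainty for $\bR$), the hypotheses $\sigma\to 0$ and $K_p\to 0$ force $\tau\to 0$, so $\Phi(\Delta/\tau)\to \mathbf{1}_{\{\Delta>0\}}$ and $\Phi(-\Delta/\tau)\to \mathbf{1}_{\{\Delta<0\}}$. The two $\phi$ terms vanish because their prefactors $(K_p(\cdot,\cdot)-K_p(x,o))/\tau$ tend to $0$ (standard Gaussian tail decay dominates the $1/\tau$ blow-up). The remaining expression collapses to $\max(\mu_p(x),\mu_p(o))$, which equals the larger of the IMM and DoN payoffs; hence DEF is never strictly dominated. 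For the second case (bounded-rational, no uncertainty), we have $K_p\to 0$ but $\sigma>0$ fixed, so $\tau\to\sqrt{2}\sigma>0$ and the $\phi$ coefficients go to $0$. DEF becomes the strict convex combination $\mu_p(x)\Phi(\Delta/\tau)+\mu_p(o)\Phi(-\Delta/\tau)$, with both weights in $(0,1)$; this is strictly smaller than $\max(\mu_p(x),\mu_p(o))$ whenever $\mu_p(x)\neq\mu_p(o)$, so DEF is strictly dominated by the better of IMM and DoN.

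The third case (rational, uncertainty) is the key case and requires a genuine probabilistic argument rather than pure algebraic limits. The cleanest route is to bypass the closed form and observe that when $\sigma\to 0$, the payoff of DEF in \eqref{eq:payoffprop1R} becomes $\max(\nu(x),\nu(o))$ almost surely under $p(\nu|m_j)$, so
\begin{equation*}
\mathrm{DEF} = \mathbb{E}_{p(\nu|m_j)}[\max(\nu(x),\nu(o))] \ge \max\bigl(\mathbb{E}[\nu(x)],\mathbb{E}[\nu(o)]\bigr) = \max(\mu_p(x),\mu_p(o)),
\end{equation*}
by Jensen's inequality applied to the convex function $\max$. Because $(\nu(x),\nu(o))$ is non-degenerate under the posterior (since some $K_p(\cdot,\cdot)>0$), the inequality is strict, so DEF strictly dominates both IMM and DoN. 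The fourth case is just the tautological restatement: in the general regime no simplification of \eqref{eq:DEFexp} is available, and DEF is optimal precisely when its value exceeds both $\mu_p(x)$ and $\mu_p(o)$, as stated.

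The only delicate step is verifying the vanishing of the $\phi$-terms in Case~1, since the coefficients $(K_p(x,x)-K_p(x,o))/\tau$ and $(K_p(o,o)-K_p(x,o))/\tau$ have an indeterminate $0/0$ form as $K_p\to 0$; I would handle this by noting that both numerators are bounded above by $\tau^2$ (by Cauchy--Schwarz on the posterior covariance) and $\phi(\Delta/\tau)\le \phi(0)$ is bounded, so the product is $O(\tau)\to 0$. Everything else is routine manipulation of the Gaussian CDF and PDF together with the Jensen step in Case~3.
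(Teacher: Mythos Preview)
Your proof is correct and follows essentially the same approach as the paper's: both reduce the comparison to the closed-form expressions of Lemma~\ref{lem:1} under the four limiting regimes, and both invoke Jensen's inequality for the convex function $\max$ in Case~3. You are more careful than the paper about the vanishing of the $\phi$-terms in Case~1; note, however, that Cauchy--Schwarz actually gives $|K_p(x,x)-K_p(x,o)|=|\mathrm{Cov}(\nu(x),\nu(x)-\nu(o))|\le\sqrt{K_p(x,x)}\,\tau$ (rather than $\le\tau^2$), which still yields the desired $o(1)$ bound on the coefficient.
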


Therefore, by interpreting the non-optimality of $DEF$ as the robot avoiding human supervision, we conclude that:

\begin{tcolorbox}[width=\linewidth, sharp corners=all, colback=white!95!black]
If $\bH$ is rational, then $\bR$  will never avoid human supervision. If $\bH$ is bounded-rational, a necessary condition for $\bR$ not to avoid human supervision is the presence of uncertainty.
\end{tcolorbox}

This result aligns with what was proven in \cite{hadfield2017off}; however, in our case, the statements have been rigorously established within the framework of signalling games, also incorporating the posterior uncertainty (in equation \eqref{eq:multiprior3}) derived from preference learning. This means that the result was proven under  the principles  P1--P3 and using a real machine learning model to learn from preferences, as we will demonstrate in Section \ref{sec:numeric1} by verifying the statements in Proposition \ref{prop:1} through numerical experiments.

It is particularly interesting to note that the conclusions of Proposition \ref{prop:1} would still hold, even if the absence of uncertainty  did not imply perfect knowledge of $\nu$ for the robot.\footnote{The result of Proposition \ref{prop:1} depends only on whether the posterior is deterministic or not.}
For this reason, the conclusion of this result is that (see also \cite{hadfield2017off,russell2019human}):
\begin{tcolorbox}[width=\linewidth, sharp corners=all, colback=white!95!black]
We shall not build $\bR$ to estimate $\nu$ through a model, such as a \textit{neural network}, that only provides deterministic point predictions (equivalently, a model that does not provide any probabilistic measure of uncertainty).
\end{tcolorbox}
Indeed, the \textit{necessity of uncertainty}  in Proposition \ref{prop:1} strongly supports the use of probabilistic methods in AI. We will revisit this in Section \ref{sec:numeric1}.

\subsection{Selection strategies for the robot}
\label{sec:optimalsel}
 In machine learning, the optimal selection of $x$ can be done  through a technique called Bayesian optimisation  \cite{shahriari2015taking} and, in particular, preferential Bayesian optimisation \cite{shahriari2015taking,gonzalez2017preferential,benavoli2021preferential, benavoli2023d,takeno2023towards,lin2022preference,Adachi_etal_2024_LoopExplain,astudillo2023qeubo}. We assume that $\bR$ selects the suggested act $x$ by solving the following optimisation problem:
 \begin{equation}
 \label{eq:prefBO}
 x=\arg\max_{x' \in \mathcal{X}} E[\text{acq}(x',o)],
 \end{equation}
 where the expectation is taken with respect to all unknowns\\ ($\nu(x),\nu(o),n(x),n(o)$) and $\text{acq}(x,o)$ is the so called \textit{acquisition function}, which is simply a function representing a certain selection strategy.\footnote{The optimisation problem may have multiple equivalent solutions and, in this case, we just take one of them.} We consider, for instance, the following three different acquisition functions $\text{acq}(x,o)$:

 \begin{align}
  &\text{natural:} && \max(\nu(x),\nu(o))=\nu(o)I_{\{\nu(o)>\nu(x)\}}+\nu(x)I_{\{\nu(x)>\nu(o)\}},\\
  &  \text{corporate:} && I_{\{\nu(x)+n(x)>\nu(o)+n(o)\}},\\
   &   \text{collaborative:} && \nu(o)I_{\{\nu(o)+n(o)>\nu(x)+n(x)\}}+\nu(x)I_{\{\nu(x)+n(x)>\nu(o)+n(o)\}}.
 \end{align}
For the first strategy, by solving \eqref{eq:prefBO}, $\bR$ chooses an act $x$ that maximally improves upon the status quo. The second strategy assumes that $\bR$ proposes an act that $\bH$ would select in the DEF case. We call this strategy \textit{corporate}, since in a repeated interaction between $\bR$ and $\bH$ it will lead to many messages between them.\footnote{For instance, most companies now offer forms of pay-per-use subscriptions to LLMs, so they profit more when users interact with the system frequently.} For the third strategy, $\bR$ suggests optimal acts that $\bH$ can recognise as optimal, given her bounded rationality. 
The first two acquisition functions have ben used in (preferential) Bayesian optimisation -- they are known as  expected-improvement and probability of improvement.
For the three acquisition functions, we can compute their expected value with respect the unknowns.

 \begin{lemma}
 \label{new:lemma}
 Consider $x,o \in \mathcal{X}$ and assume $\bR$ has the following posterior:
\begin{equation}
\label{eq:multipriorlem}
\begin{bmatrix}
\nu(x) \\
\nu(o)
\end{bmatrix} \sim N\left(\begin{bmatrix}
\mu_p(x) \\
\mu_p(o)
\end{bmatrix},\begin{bmatrix}
K_p(x,x) & K_p(x,o) \\
K_p(o,x) & K_p(o,o)
\end{bmatrix}\right),
\end{equation}
and $n(x),n(o) \sim N(0,\sigma^2)$ (independent noise). Then we have that
 \begin{align}
       \nonumber
 &E\left[\nu(o)I_{\{\nu(o)>\nu(x)\}}+\nu(x)I_{\{\nu(x)>\nu(o)\}}\right] =\mu_p(x)\Phi\left(\tfrac{(\mu_p(x)-\mu_p(o))}{\sqrt{K_p(x,x)+K_p(o,o)-2K_p(x,o)}}\right)\\
       \nonumber
& + \tfrac{K_p(x,x)-K_p(x,o)}{\sqrt {K_p(x,x)+K_p(o,o)-2K_p(x,o)}} \cdot \phi\left(\tfrac{\mu_p(o)-\mu_p(x)}{\sqrt{K_p(x,x)+K_p(o,o)-2K_p(x,o)}}\right)\\
\nonumber
&+\mu_p(o)\Phi\left(\tfrac{(\mu_p(o)-\mu_p(x))}{\sqrt{K_p(x,x)+K_p(o,o)-2K_p(x,o)}}\right)\\
&+ \tfrac{K_p(o,o)-K_p(x,o)}{\sqrt {K_p(x,x)+K_p(o,o)-2K_p(x,o)}} \cdot \phi\left(\tfrac{\mu_p(x)-\mu_p(o)}{\sqrt{K_p(x,x)+K_p(o,o)-2K_p(x,o)}}\right) 
\end{align}
\begin{align}
     & E\left[I_{\{\nu(x)+n(x)>\nu(o)+n(o)\}}\right] = \Phi\left(\tfrac{\mu_p(x)-
\mu_p(o)}{\sqrt{2\sigma^2+K_p(x,x)+K_p(o,o)-2K_p(x,o)}}\right)
\end{align}
\begin{align}
  \nonumber
   &   E\big[\nu(o)I_{\{\nu(o)+n(o)>\nu(x)+n(x)\}} +\nu(x)I_{\{\nu(x)+n(x)>\nu(o)+n(o)\}}\big] \\
     \nonumber
   &=  \mu_p(x)\Phi\left(\tfrac{(\mu_p(x)-\mu_p(o))}{\sqrt{K_p(x,x)+2\sigma^2+K_p(o,o)-2K_p(x,o)}}\right)\\
       \nonumber
& + \tfrac{K_p(x,x)-K_p(x,o)}{\sqrt {K_p(x,x)+2\sigma^2+K_p(o,o)-2K_p(x,o)}} \cdot \phi\left(\tfrac{\mu_p(o)-\mu_p(x)}{\sqrt{K_p(x,x)+2\sigma^2+K_p(o,o)-2K_p(x,o)}}\right)\\
\nonumber
&+ \mu_p(o)\Phi\left(\tfrac{(\mu_p(o)-\mu_p(x))}{\sqrt{K_p(x,x)+2\sigma^2+K_p(o,o)-2K_p(x,o)}}\right)\\
& + \tfrac{K_p(o,o)-K_p(x,o)}{\sqrt {K_p(x,x)+2\sigma^2+K_p(o,o)-2K_p(x,o)}} \cdot \phi\left(\tfrac{\mu_p(x)-\mu_p(o)}{\sqrt{K_p(x,x)+2\sigma^2+K_p(o,o)-2K_p(x,o)}}\right).
  \end{align}
 \end{lemma}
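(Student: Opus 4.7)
The three expectations can all be reduced to a single Gaussian identity: if $(X,Z)$ is jointly normal with means $\mu_X,\mu_Z$, variance $\sigma_Z^2$ of $Z$, and covariance $\sigma_{XZ}$, then
\begin{equation*}
E\!\left[X\cdot I_{\{Z>0\}}\right]=\mu_X\,\Phi\!\left(\tfrac{\mu_Z}{\sigma_Z}\right)+\tfrac{\sigma_{XZ}}{\sigma_Z}\,\phi\!\left(\tfrac{\mu_Z}{\sigma_Z}\right).
\end{equation*}
This follows by writing $X=\mu_X+(\sigma_{XZ}/\sigma_Z^2)(Z-\mu_Z)+W$, where $W$ is independent of $Z$ with mean $0$, and then using the standard truncated-normal formulas $E[I_{\{Z>0\}}]=\Phi(\mu_Z/\sigma_Z)$ and $E[(Z-\mu_Z)I_{\{Z>0\}}]=\sigma_Z\,\phi(\mu_Z/\sigma_Z)$. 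I would state this once as an auxiliary lemma and then apply it mechanically.

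For the first expectation I set $Z:=\nu(x)-\nu(o)$, so that under the posterior \eqref{eq:multipriorlem} we have $\mu_Z=\mu_p(x)-\mu_p(o)$, $\sigma_Z^2=K_p(x,x)+K_p(o,o)-2K_p(x,o)$, $\mathrm{Cov}(\nu(x),Z)=K_p(x,x)-K_p(x,o)$ and $\mathrm{Cov}(\nu(o),-Z)=K_p(o,o)-K_p(x,o)$. Splitting the expectation into $E[\nu(x)I_{\{Z>0\}}]+E[\nu(o)I_{\{-Z>0\}}]$ and invoking the identity twice (using $\phi(-t)=\phi(t)$) gives the stated expression.

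For the second expectation I set $\tilde Z:=\nu(x)+n(x)-\nu(o)-n(o)$. Since $n(x),n(o)$ are independent of $\nu$ and of each other, $\tilde Z$ is Gaussian with the same mean as $Z$ but variance $\sigma_{\tilde Z}^2=\sigma_Z^2+2\sigma^2$, and $E[I_{\{\tilde Z>0\}}]=\Phi(\mu_Z/\sigma_{\tilde Z})$ is exactly the claimed formula. For the third expectation I apply the auxiliary identity again, now with $(\nu(x),\tilde Z)$ and $(\nu(o),-\tilde Z)$; crucially $\mathrm{Cov}(\nu(x),\tilde Z)=\mathrm{Cov}(\nu(x),Z)=K_p(x,x)-K_p(x,o)$ because $\nu$ is independent of the noises, and likewise for $\nu(o)$. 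Plugging into the identity reproduces the stated formula with $\sigma_Z^2+2\sigma^2$ in place of $\sigma_Z^2$.

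\textbf{Main obstacle.} No step is conceptually hard; the calculation is essentially bookkeeping of means, variances and covariances in a $2\times 2$ Gaussian, repeated six times (two signs of $Z$, three expectations). The only mild care needed is (i) keeping track that the noise contributes to $\mathrm{Var}(\tilde Z)$ but not to $\mathrm{Cov}(\nu(\cdot),\tilde Z)$, which is exactly why the two formulas differ only through the $2\sigma^2$ term in the denominator, and (ii) consistent use of $\phi(-t)=\phi(t)$ together with $\Phi(-t)=1-\Phi(t)$ when handling the $I_{\{\nu(o)>\nu(x)\}}$ branch. Packaging the argument through a single one-line auxiliary lemma keeps the proof compact and avoids repeating integrals.
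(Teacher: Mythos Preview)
Your proposal is correct and takes a cleaner route than the paper. The paper proves this lemma by invoking its Lemmas~\ref{lem:probitint} and~\ref{lem:preference_learn}, whose proofs proceed by sequential integration: first integrating out $n(x)-n(o)$ to produce a $\Phi(\cdot)$ factor, then conditioning on $\nu(o)$, applying the inverse Mills ratio, and finally integrating over $\nu(o)$ using the Owen-type identities $\int \Phi(a+bx)\phi(x)\,dx$, $\int x\Phi(a+bx)\phi(x)\,dx$, $\int \phi(a+bx)\phi(x)\,dx$; the simplification of the resulting variance $K_p(o,o)\sigma_1^2+(K_p(o,o)-K_p(x,o))^2$ to $K_p(o,o)(K_p(x,x)+K_p(o,o)-2K_p(x,o))$ is done by hand at the end. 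Your approach instead treats $(\nu(x),\nu(o),n(x),n(o))$ as one Gaussian vector, forms the linear combinations $Z$ or $\tilde Z$ directly, and applies the single regression identity $E[XI_{\{Z>0\}}]=\mu_X\Phi(\mu_Z/\sigma_Z)+(\sigma_{XZ}/\sigma_Z)\phi(\mu_Z/\sigma_Z)$ six times. This bypasses the iterated integrals and the algebraic clean-up entirely, and it makes transparent why the ``natural'' and ``collaborative'' formulas differ only through the $2\sigma^2$ shift in the variance: the covariances $\mathrm{Cov}(\nu(\cdot),\tilde Z)$ are unchanged because $\nu$ is independent of the noise. The paper's approach has the advantage of being self-contained from first-principles Gaussian integrals, but yours is shorter and structurally clearer.
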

Hereafter, we assume that $\bR$ is always able to find the global maximum of the optimisation problem \eqref{eq:prefBO}. This assumption is, of course, unrealistic in practice, since 
$\bR$ is also a computationally bounded agent. We denote by $x^*$ the value of $x$ that attains this global maximum.

\begin{proposition}
 \label{prop:acq}
Result of the strategy:
 \begin{itemize}
\item If $\bH$ is \textbf{rational} and $\bR$ has \textbf{no uncertainty},  then
$\bR$ will suggest $x^*$ under the  strategies \texttt{natural,collaborative}.
Instead, under the strategy \texttt{corporate}, $\bR$ will suggest  any $x$ such that $\nu(x)>\nu(o)$.
\item If $\bH$ is \textbf{bounded-rational} and $\bR$ has \textbf{no uncertainty}, then $\bR$ will always suggest $x^*$ under the  strategies \texttt{natural},\texttt{corporate}, \texttt{collaborative}.
\end{itemize}
\end{proposition}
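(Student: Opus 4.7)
The plan is to specialize Lemma \ref{new:lemma} to the no-uncertainty limit $K_p(x,x),K_p(o,o),K_p(x,o)\to 0$—in which the GP posterior collapses to a point mass at the true utility and $\mu_p\equiv\nu$—and then to split on whether $\bH$ is rational ($\sigma\to 0$) or bounded-rational ($\sigma>0$ fixed). A single preliminary observation does most of the work: in every expression from Lemma \ref{new:lemma} the coefficient multiplying a $\phi(\cdot)$ has the form $(K_p(x,x)-K_p(x,o))/\sqrt{K_p(x,x)+K_p(o,o)-2K_p(x,o)+c\sigma^2}$ with $c\in\{0,2\}$; this is $O(\sqrt{K_p})$ and vanishes. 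Hence each expected acquisition collapses to $\nu(x)\Phi_{+}+\nu(o)(1-\Phi_{+})$, where $\Phi_{+}=\Phi((\nu(x)-\nu(o))/D)$ for a strategy-specific scale $D$.

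In the \textbf{rational} case ($\sigma\to 0$), $D\to 0$ for all three acquisitions, so $\Phi_{+}$ degenerates to the indicator $I_{\{\nu(x)>\nu(o)\}}$: the \texttt{natural} and \texttt{collaborative} expectations both become $\max(\nu(x),\nu(o))$, maximised in $x$ at the utility-maximiser (which the proposition calls $x^*$); the \texttt{corporate} expectation becomes the $0/1$-valued $I_{\{\nu(x)>\nu(o)\}}$, so any $x$ with $\nu(x)>\nu(o)$ attains the maximum value $1$. In the \textbf{bounded-rational} case ($\sigma>0$), \texttt{natural} is independent of $\sigma$ and again reduces to $\max(\nu(x),\nu(o))$, maximised at $x^*$; \texttt{corporate} becomes $\Phi((\nu(x)-\nu(o))/(\sqrt{2}\sigma))$, strictly increasing in $\nu(x)$ and hence uniquely maximised at $x^*$; \texttt{collaborative} becomes $f(\nu(x))$ with
\[
f(a)\;:=\;\nu(o)+(a-\nu(o))\,\Phi\!\left(\frac{a-\nu(o)}{\sqrt{2}\,\sigma}\right).
\]

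The main obstacle is this last subcase, because $f$ is not globally monotone on $\mathbb{R}$—one can check that $f'$ changes sign on $(-\infty,\nu(o))$, giving a local minimum strictly below $\nu(o)$—so a naive monotonicity argument fails. I would bypass this with a two-regime argument. Writing $u=(a-\nu(o))/(\sqrt{2}\,\sigma)$: (i) for $a\le\nu(o)$ one has $u\le 0$ and $(a-\nu(o))\Phi(u)\le 0$, so $f(a)\le\nu(o)$; and (ii) a short computation gives $f'(a)=\Phi(u)+u\,\phi(u)$, which is strictly positive for $u\ge 0$ (i.e.\ $a\ge\nu(o)$) since $\Phi(u)\ge 1/2$ and $u\,\phi(u)\ge 0$. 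Since $o\in\mathcal{X}$ forces $\nu(x^*)\ge\nu(o)$, (i)--(ii) together yield $f(\nu(x^*))\ge f(\nu(x))$ for every $x\in\mathcal{X}$, so the \texttt{collaborative} strategy also suggests $x^*$.
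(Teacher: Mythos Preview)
Your proof is correct and follows the same route as the paper: specialise Lemma~\ref{new:lemma} to the no-uncertainty limit, read off the reduced objectives, and argue they are maximised at the utility-maximiser. Your two-regime argument for the bounded-rational \texttt{collaborative} case is in fact more careful than the paper's own proof, which simply appeals to monotonicity of $\Phi$ and does not address the non-monotonicity of $a\mapsto\nu(o)+(a-\nu(o))\,\Phi\bigl((a-\nu(o))/(\sqrt{2}\sigma)\bigr)$ on $(-\infty,\nu(o))$ that you correctly flag and handle.
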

Overall, the strategies \texttt{natural,corporate} are ideally the best ones. If $\bR$ has \textbf{uncertainty}, the optimum $x^*$ cannot generally be found in a single iteration. Multiple iterations\footnote{Additional messages where $\bH$ provides preferences over the acts recommended by the robot in the previous iteration.} of the signalling game are therefore required for the robot to reduce its uncertainty about $\bH$'s utility and eventually identify the optimum. In this case, different acquisition functions can also be considered to trade-off between exploration and exploitation, that is, between reducing uncertainty about $\bH$'s preferences and finding the optimal action $x^*$.

\subsection{The cost of messaging}
In Section~\ref{sec:contrib1}, we considered a setting in which messaging is \emph{cheap}, meaning that it does not affect $\bH$'s utility. In particular, there is no communication cost for $\bH$ to send her preferences (the message) to $\bR$, nor for $\bH$ to respond to $\bR$'s request to defer. 
If communication is not costless, we may instead assume that the cost of sending a message is proportional to its length $\ell_m$. Specifically, assume the communication cost be given by $\gamma' \,\ell_m$ for some scaling parameter $\gamma' > 0$. We model this as an additive penalty to $\bH$'s utility. The communication cost is  defined relative to the scale of $\nu$. One convenient way to do this is to set $\gamma' = \gamma |\nu(o)|$ for some $\gamma >0$.
In this case, the payoff for $\bR$ is:
\begin{equation}
\label{eq:payoffprop1Rcomm}
\begin{aligned}
u_R(t,DEF,b^*(DEF))&=\nu(o)I_{\{\nu(o)+n(o)>\nu(x)+n(x)\}}
\\&+\nu(x)I_{\{\nu(x)+n(x)>\nu(o)+n(o)\}}\\
&-\gamma(\ell_{m_j}+1),\\
u_R(t,IMM,\varnothing)&=\nu(x)-\gamma \ell_{m_j},\\
u_R(t,OFF,\varnothing)&=\nu(o)-\gamma \ell_{m_j}.
\end{aligned}
\end{equation}
Here, $\gamma \ell_{m_j}$ represents the communication cost associated with sending the message $m_j$, and the additional $\gamma$ in the first term arises because deferring a decision to $\bH$ involves further communication.

\begin{corollary}
\label{co:2}
The optimal decisions for $\bR$ are:
\begin{itemize}
\item If  $R$ has \textbf{no uncertainty}, then DEF is never optimal.
\item If $S$ is \textbf{rational} and $R$ has \textbf{uncertainty}, then DEF is optimal if 
\begin{equation}
\label{eq:cond1bis}
p\mu_p(x) + (1-p)\mu_p(o)+e\geq \beta+\max\left(\mu_p(x) ,\mu_p(o)\right),
\end{equation}
where $p=\Phi\left(\tfrac{\mu_p(x)-\mu_p(o)}{\sqrt{K_p(o,o)+K_p(x,x)-2K_p(x,o)}}\right)$ and 
$$
 \resizebox{0.91\hsize}{!}{$
\begin{aligned}
e&= \tfrac{K_p(x,x)-K_p(x,o)}{\sqrt {K_p(x,x)+K_p(o,o)-2K_p(x,o)}}
\phi\left(\tfrac{\mu_p(o)-\mu_p(x)}{\sqrt{K_p(x,x)+K_p(o,o)-2K_p(x,o)}}\right)\\
& + \tfrac{K_p(o,o)-K_p(x,o)}{\sqrt {K_p(x,x)+K_p(o,o)-2K_p(x,o)}}\phi\left(\tfrac{\mu_p(x)-\mu_p(o)}{\sqrt{K_p(x,x)+K_p(o,o)-2K_p(x,o)}}\right).
\end{aligned}$}
$$
\item If $S$ is \textbf{bounded-rational} and $R$ has \textbf{uncertainty}, then DEF is optimal if 
\eqref{eq:DEFexp} is larger or equal than $\beta$ plus the maximum between \eqref{eq:IMMexp} and \eqref{eq:OFFexp},
\end{itemize}
where
 \resizebox{0.91\hsize}{!}{$
\begin{aligned}
\beta &= \gamma \mu_p(o) \left(1-2\Phi\left(\tfrac{-\mu_p(o)}{\sqrt{K_p(o,o)}}\right)\right)+2\gamma\sqrt{K_p(o,o)} \phi\left(\tfrac{-\mu_p(o)}{\sqrt{K_p(o,o)}}\right).
\end{aligned}$}
\end{corollary}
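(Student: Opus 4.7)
The plan is to mimic the argument of Proposition \ref{prop:1} but carry the additive cost terms through. First I would take expectations of the three payoffs in \eqref{eq:payoffprop1Rcomm} under the posterior joint $p(\nu(x),\nu(o)\mid m_j)$ given in \eqref{eq:multiprior3} together with the independent noise. The common term $-\gamma\ell_{m_j}$ appears in all three payoffs, so it cancels in every pairwise comparison of actions; the only residual asymmetry is the extra $-\gamma$ attached to DEF (from the $+1$ inside $\ell_{m_j}+1$), reflecting that deferral triggers an additional round of communication. Under the paper's convention $\gamma'=\gamma|\nu(o)|$, this extra cost is $\gamma|\nu(o)|$, so the task reduces to computing $E[\gamma|\nu(o)|]$ under the marginal $\nu(o)\sim N(\mu_p(o),K_p(o,o))$.

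Second, I would evaluate this folded-normal expectation. For $X\sim N(\mu,s^2)$ a direct calculation yields
$$
E[|X|]=\mu\bigl(1-2\Phi(-\mu/s)\bigr)+2s\,\phi(-\mu/s),
$$
and substituting $\mu=\mu_p(o)$, $s=\sqrt{K_p(o,o)}$ and multiplying by $\gamma$ reproduces exactly the $\beta$ stated in the corollary. Apart from this ingredient, the expected DEF, IMM and DoN payoffs coincide with those already computed in Lemma \ref{lem:1} (equations \eqref{eq:DEFexp}--\eqref{eq:OFFexp}), shifted by the common cost (which cancels) and, for DEF only, by the extra $-\beta$.

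Third, I would treat the three bullets. In the no-uncertainty limit $K_p\to 0$ the posterior collapses to a point mass, so $\beta\to\gamma|\mu_p(o)|>0$; by Proposition \ref{prop:1} DEF was already weakly dominated by the best of IMM and DoN in both the rational and bounded-rational sub-cases, hence adding $\beta>0$ makes DEF strictly dominated and never optimal. For rational $S$ with uncertainty I would let $\sigma\to 0$ in \eqref{eq:DEFexp}: the $2\sigma^2$ terms vanish from every denominator, and using $1-\Phi((\mu_p(o)-\mu_p(x))/d)=\Phi((\mu_p(x)-\mu_p(o))/d)=p$ together with the symmetry of $\phi$, the expression collapses to $p\mu_p(x)+(1-p)\mu_p(o)+e$; subtracting $\beta$ and comparing to $\max(\mu_p(x),\mu_p(o))$ (the $\sigma\to 0$ limits of \eqref{eq:IMMexp} and \eqref{eq:OFFexp}) gives the stated inequality. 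For the bounded-rational case with uncertainty the conclusion is immediate: DEF dominates iff \eqref{eq:DEFexp} minus $\beta$ exceeds the maximum of \eqref{eq:IMMexp} and \eqref{eq:OFFexp}.

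The main obstacle is purely bookkeeping: confirming that the scaling convention $\gamma'=\gamma|\nu(o)|$ produces exactly the normalisation appearing in $\beta$, and verifying the $\sigma\to 0$ simplification in the rational case. No probabilistic ingredient beyond Lemma \ref{lem:1} and the folded-normal formula is required, so this is essentially a corollary-level extension of Proposition \ref{prop:1}.
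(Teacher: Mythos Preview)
Your proposal is correct and matches the paper's own proof essentially line for line: the paper also observes that the result follows from Proposition~\ref{prop:1} after subtracting $\beta=\gamma E[|\nu(o)|]$ from the DEF payoff, computes $\beta$ via the folded-normal identity \eqref{eq:absx}, and then revisits the four sub-cases exactly as you describe. The only cosmetic difference is that the paper writes out the rational/no-uncertainty and bounded-rational/no-uncertainty sub-cases separately rather than bundling them as you do, but the reasoning is identical.
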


Summarising it we have that

\begin{tcolorbox}[width=\linewidth, sharp corners=all, colback=white!95!black]
With communication cost, if $\bR$ has no uncertainty about $\bH$'s utilities, then $\bR$ will always avoid human supervision  (even when $\bH$ is rational). 
\end{tcolorbox}
The takeaway is that if $\bH$ genuinely values preserving her supervision role, she should not   penalise  messaging to $\bR$.

\subsection{Another mechanism of bounded rationality}

In the previous two sections, we considered a utility model governed by a Gaussian-noise bounded-rationality mechanism. In this section, we analyse the bounded rationality mechanism described in \eqref{eq:scalarubounded}, which captures incomparability arising from a limit of discernibility.
In this case, we need to define a  payoff for $DEF$  when $|\nu(x) - \nu(o)| \leq \sigma$, that is when  $\bH$  cannot distinguish between the two acts $x,o$. In this case, we assume that the payoff  is represented as the set $\{\nu(x)-\epsilon; \nu(o)-\epsilon\}$. Indeed, since $\bH$ cannot distinguish the two acts and chooses both of them, $\bH$'s payoff is a set, $\{\nu(x); \nu(o)\}$, and $\epsilon \in (0,\sigma]$ is a penalisation term introduced to penalise $\bH$'s payoff for being imprecise. This penalisation term is  similar to the one introduced in \cite{zaffalon2012evaluating} for evaluating imprecise classifiers. Using this framework, we can prove the following lemma.

\begin{lemma}
\label{lem:2}
Assume that  $p(\nu|\mathcal{D})= GP\left(\nu;\mu_p,K_p\right)$ is the GP posterior computed by $\bR$ from the prior  $p(\nu)= GP\left(\nu;\mu_0,K_0\right)$, the likelihood \eqref{eq:scalarubounded} and the message $m_j=\mathcal{D}$, then
 the expected payoffs of $\bR$'s actions  are:
 {\small
\begin{align}
\nonumber
DEF:&  \int_{T} \Big(\nu(o)I_{\{\nu(o) > \nu(x) + \sigma\}}+\nu(x)I_{\{\nu(x) > \nu(o) + \sigma\}}\\
\nonumber
&+\{\nu(x),\nu(o)\} I_{|\nu(o)-\nu(x)|\leq \sigma} \Big)dp(\nu(x),\nu(o)|m_j)\\
\nonumber
&=\mu_p(x)\left(1-\Phi\left(\tfrac{(\mu_p(o)+\sigma-\mu_p(x))}{\sqrt{K_p(x,x)+K_p(o,o)-2K_p(x,o)}}\right)\right)\\
\nonumber
& + \tfrac{K_p(x,x)-K_p(x,o)}{\sqrt {K_p(x,x)+K_p(o,o)-2K_p(x,o)}}\phi\left(\tfrac{\mu_p(o)+\sigma-\mu_p(x)}{\sqrt{K_p(x,x)+K_p(o,o)-2K_p(x,o)}}\right)\\
\nonumber
&+\mu_p(o)\left(1-\Phi\left(\tfrac{(\mu_p(x)+\sigma-\mu_p(o))}{\sqrt{K_p(x,x)+K_p(o,o)-2K_p(x,o)}}\right)\right)\\
\nonumber
& + \tfrac{K_p(o,o)-K_p(x,o)}{\sqrt {K_p(x,x)+K_p(o,o)-2K_p(x,o)}}\phi\left(\tfrac{\mu_p(x)+\sigma-\mu_p(o)}{\sqrt{K_p(x,x)+K_p(o,o)-2K_p(x,o)}}\right)\\
\nonumber
&+\Bigg\{\mu_p(o)-\mu_p(o)\Bigg(2-\Phi\left(\tfrac{(\mu_p(x)+\sigma-\mu_p(o))}{\sqrt{K_p(x,x)+K_p(o,o)-2K_p(x,o)}}\right)\\
\nonumber
&-\Phi\left(\tfrac{(-\mu_p(x)+\sigma+\mu_p(o))}{\sqrt{K_p(x,x)+K_p(o,o)-2K_p(x,o)}}\right)\Bigg)\\
\nonumber
& + \tfrac{K_p(o,o)-K_p(x,o)}{\sqrt {K_p(x,x)+K_p(o,o)-2K_p(x,o)}}\Bigg(\phi\left(\tfrac{-\mu_p(x)+\sigma+\mu_p(o)}{\sqrt{K_p(x,x)+K_p(o,o)-2K_p(x,o)}}\right)\\
\nonumber
&-\phi\left(\tfrac{\mu_p(x)+\sigma-\mu_p(o)}{\sqrt{K_p(x,x)+K_p(o,o)-2K_p(x,o)}}\right)\Bigg)-\epsilon;\\
\nonumber
&\mu_p(x)-\mu_p(x)\Bigg(2-\Phi\left(\tfrac{(\mu_p(x)+\sigma-\mu_p(o))}{\sqrt{K_p(x,x)+K_p(o,o)-2K_p(x,o)}}\right)\\
\nonumber
&-\Phi\left(\tfrac{(-\mu_p(x)+\sigma+\mu_p(o))}{\sqrt{K_p(x,x)+K_p(o,o)-2K_p(x,o)}}\right)\Bigg)\\
\nonumber
& + \tfrac{K_p(o,o)-K_p(x,o)}{\sqrt {K_p(x,x)+K_p(o,o)-2K_p(x,o)}}\Bigg(\phi\left(\tfrac{-\mu_p(o)+\sigma+\mu_p(x)}{\sqrt{K_p(x,x)+K_p(o,o)-2K_p(x,o)}}\right)\\
\label{eq:DEFexp1}
&-\phi\left(\tfrac{\mu_p(o)+\sigma-\mu_p(x)}{\sqrt{K_p(x,x)+K_p(o,o)-2K_p(x,o)}}\right)\Bigg)-\epsilon\Bigg\},
\end{align}
\begin{align}
\label{eq:IMMexp1}
IMM:&  \int_{T}\nu(x)dp(\nu(x)n(o)|m_j)=\mu_p(x),\\
\label{eq:OFFexp1}
DoN:&  \int_{T}\nu(o)dp(\nu(x),\nu(o)|m_j)
=\mu_p(o),
\end{align}}
where $p(\nu(x),\nu(o)|m_j)$ is defined in \eqref{eq:multiprior3}.
\end{lemma}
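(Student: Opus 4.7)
The plan is to split the DEF payoff integral along the three disjoint events arising from the bounded-rationality choice model \eqref{eq:scalarubounded} and evaluate each piece using the joint Gaussian structure in \eqref{eq:multiprior3}. The IMM and DoN payoffs are immediate: by linearity and the marginals of \eqref{eq:multiprior3}, $\int \nu(x)\,dp = \mu_p(x)$ and $\int \nu(o)\,dp = \mu_p(o)$, which yield \eqref{eq:IMMexp1} and \eqref{eq:OFFexp1}.

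For DEF, I introduce the Gaussian auxiliary $D := \nu(x)-\nu(o)$, with mean $\mu_D = \mu_p(x)-\mu_p(o)$ and variance $\sigma_D^2 = K_p(x,x)+K_p(o,o)-2K_p(x,o)$. Under this reparametrisation the three events become $\{D>\sigma\}$, $\{D<-\sigma\}$ and $\{|D|\leq\sigma\}$, so the integrand decomposes into the four cross-terms
\begin{equation*}
E[\nu(x)I_{\{D>\sigma\}}],\quad E[\nu(o)I_{\{D<-\sigma\}}],\quad E[\nu(x)I_{\{|D|\leq\sigma\}}],\quad E[\nu(o)I_{\{|D|\leq\sigma\}}],
\end{equation*}
together with the imprecision penalty $\epsilon$ inherited from the set-valued payoff $\{\nu(x)-\epsilon;\nu(o)-\epsilon\}$ on the indifference event.

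Each cross-term is evaluated with the standard Gaussian regression identity: for jointly normal $(X,D)$ one writes $X = \mu_X + (\mathrm{Cov}(X,D)/\sigma_D^2)(D-\mu_D) + \xi$ with $\xi \perp D$, so
\begin{equation*}
E[X \, I_{\{D\in A\}}] \;=\; \mu_X\,P(D\in A) \;+\; \frac{\mathrm{Cov}(X,D)}{\sigma_D}\,E\!\left[Z \, I_{\{Z\in A'\}}\right],
\end{equation*}
where $Z=(D-\mu_D)/\sigma_D$ and $A'$ is the standardised event. Plugging in the truncated-Gaussian moments $E[Z\,I_{\{Z>c\}}]=\phi(c)$, $E[Z\,I_{\{Z<c\}}]=-\phi(c)$, $E[Z\,I_{\{a\leq Z\leq b\}}]=\phi(a)-\phi(b)$, together with the covariance values $\mathrm{Cov}(\nu(x),D)=K_p(x,x)-K_p(x,o)$ and $\mathrm{Cov}(\nu(o),D)=K_p(x,o)-K_p(o,o)$, produces each term in the $\mu_p(\cdot)\Phi(\cdot) + (K_p(\cdot,\cdot)-K_p(x,o))\phi(\cdot)/\sigma_D$ form that appears in \eqref{eq:DEFexp1}.

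The main obstacle is algebraic book-keeping rather than any new technique. Unlike Lemma~\ref{lem:1}, where the independent noises $n(x),n(o)$ were absorbed into an inflated variance $\sigma_D^2+2\sigma^2$ and both thresholds were at zero, here the variance is the bare $\sigma_D^2$ but the thresholds are shifted by $\pm\sigma$. I would then exploit $\Phi(-c)=1-\Phi(c)$ to rewrite $P(D>\sigma)$ and $P(D<-\sigma)$ in the form $\Phi((\mu_p(x)-\mu_p(o)\mp\sigma)/\sigma_D)$ visible in \eqref{eq:DEFexp1}. For the indifference region the single nontrivial piece is $P(|D|\leq\sigma) = \Phi((\sigma+\mu_D)/\sigma_D)+\Phi((\sigma-\mu_D)/\sigma_D)-1$, which, multiplied by $-\mu_p(o)$ (resp.\ $-\mu_p(x)$) and added to $\mu_p(o)$ (resp.\ $\mu_p(x)$), produces the $2-\Phi(\cdot)-\Phi(\cdot)$ blocks inside the two set elements. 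Finally, pairing the two indifference-region expectations (shifted by the penalty $\epsilon$) with $\nu(o)-\epsilon$ and $\nu(x)-\epsilon$ respectively yields the trailing set in \eqref{eq:DEFexp1}, completing the argument.
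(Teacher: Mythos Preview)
Your proposal is correct and takes a more direct route than the paper. The paper conditions on $\nu(o)$, applies the inverse Mills ratio \eqref{eq:mills} to obtain $E[\nu(x)I_{\{\nu(x)>\nu(o)+\sigma\}}\mid\nu(o)]$ in terms of $m_1=\mu_p(x)+\tfrac{K_p(x,o)}{K_p(o,o)}(\nu(o)-\mu_p(o))$ and $\sigma_1$, and then integrates the resulting $\Phi$ and $\phi$ expressions against $N(\nu(o);\mu_p(o),K_p(o,o))$ using \eqref{eq:Resprobitint}--\eqref{eq:xResprobitint}; the indifference pieces are obtained afterwards by complementarity, $E[\nu(o)I_{\{|D|\le\sigma\}}]=\mu_p(o)-E[\nu(o)I_{\{D>\sigma\}}]-E[\nu(o)I_{\{D<-\sigma\}}]$. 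You instead regress each of $\nu(x),\nu(o)$ directly on the single Gaussian $D=\nu(x)-\nu(o)$ that already encodes all three events, so every cross-term collapses to a one-dimensional truncated-normal moment in a single step and the two-layer integration disappears. The paper's approach simply replays the machinery built for Lemma~\ref{lem:1} (where the independent noise layer $n(x),n(o)$ made conditioning on $\nu(o)$ a natural first move); yours exploits the absence of that noise layer here to streamline the calculation and makes the covariances $K_p(x,x)-K_p(x,o)$ and $K_p(o,o)-K_p(x,o)$ appear transparently as $\mathrm{Cov}(\nu(x),D)$ and $-\mathrm{Cov}(\nu(o),D)$ from the outset.
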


The payoff for DEF can be a set and this leads to incomparability,  a complete preference order cannot be established. Therefore, the players
are restricted to making comparisons based solely on
dominance. We consider the two possible dominance conditions defined in  \eqref{eq:pareto} (we refer to it as criterion (A)) and \eqref{eq:pseudoratio} (we refer to it as criterion (B)).

\begin{proposition}
\label{prop:3}
The optimal decisions for $\bR$ are:
\begin{itemize}
\item If $S$ is \textbf{rational} and $R$ has \textbf{no uncertainty}, then DEF is never dominated.
\item If $S$ is \textbf{bounded-rational} and $R$ has \textbf{no uncertainty},  DEF is always dominated whenever $|\nu(x) - \nu(o)| \leq \sigma$, otherwise DEF is not dominated.
\item If $S$ is \textbf{rational} and $R$ has \textbf{uncertainty}, then DEF is always optimal.
\item If $S$ is \textbf{bounded-rational} and $R$ has \textbf{uncertainty}, then DEF is optimal if 
\begin{description}
\item[(A)] the minimum of \eqref{eq:DEFexp1} is larger or equal than the maximum between \eqref{eq:IMMexp1} and \eqref{eq:OFFexp1}.
\item[(B)] the maximum of \eqref{eq:DEFexp1} is larger or equal than the maximum between \eqref{eq:IMMexp1} and \eqref{eq:OFFexp1}.
\end{description}
\end{itemize}
\end{proposition}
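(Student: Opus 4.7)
The plan is to reduce each of the four cases to an analysis of the expected payoffs computed in Lemma~\ref{lem:2}, comparing the DEF payoff in \eqref{eq:DEFexp1} against $\mu_p(x)$ and $\mu_p(o)$ from \eqref{eq:IMMexp1} and \eqref{eq:OFFexp1}. The analysis proceeds by taking the appropriate limits ($\sigma\to 0$ for rationality and $K_p\to 0$ for no uncertainty) and then checking dominance under criteria (A) and (B).

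First I would handle the two \textbf{no uncertainty} cases, in which the posterior collapses to a Dirac mass at $(\nu(x),\nu(o))$. In this limit the Gaussian CDFs of the form $\Phi((\mu_p(o)+\sigma-\mu_p(x))/\sqrt{\cdot})$ degenerate to the indicator $I_{\{\nu(o)+\sigma>\nu(x)\}}$ and all $\phi$-weighted correction terms vanish. Under \textbf{rationality} ($\sigma\to 0$) the indicator $I_{|\nu(x)-\nu(o)|\leq\sigma}$ annihilates (we assumed $\nu(x)\neq\nu(o)$), so the DEF payoff collapses to $\max(\nu(x),\nu(o))$, which matches the best of IMM/DoN; hence DEF is never dominated. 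Under \textbf{bounded-rationality} with no uncertainty, the first two lines of \eqref{eq:DEFexp1} still evaluate to $\max(\nu(x),\nu(o))$ whenever $|\nu(x)-\nu(o)|>\sigma$, but when $|\nu(x)-\nu(o)|\leq\sigma$ the DEF payoff becomes the set $\{\nu(x)-\epsilon,\nu(o)-\epsilon\}$, whose maximum $\max(\nu(x),\nu(o))-\epsilon$ is strictly smaller than $\max(\nu(x),\nu(o))$, so both IMM and DoN Pareto-dominate DEF.

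Next, in the cases where $\bR$ has \textbf{uncertainty}, the DEF expected payoff equals
\[
E\!\left[\nu(o)\,I_{\{\nu(o)>\nu(x)+\sigma\}}+\nu(x)\,I_{\{\nu(x)>\nu(o)+\sigma\}}+\{\nu(x)-\epsilon,\nu(o)-\epsilon\}\,I_{|\nu(o)-\nu(x)|\leq\sigma}\right]
\]
under $p(\nu(x),\nu(o)|m_j)$. Under \textbf{rationality} ($\sigma\to 0$) this simplifies to $E[\max(\nu(x),\nu(o))]$, and by Jensen's inequality applied to the convex function $\max(\cdot,\cdot)$,
\[
E[\max(\nu(x),\nu(o))]\;\geq\;\max\bigl(E[\nu(x)],E[\nu(o)]\bigr)=\max(\mu_p(x),\mu_p(o)),
\]
so DEF dominates both IMM and DoN in expectation and is therefore optimal. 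Under \textbf{bounded-rationality} the DEF payoff is set-valued, and its expectation is the set whose elements are the two expressions inside the braces of \eqref{eq:DEFexp1}. Applying the Pareto criterion (A) from \eqref{eq:pareto}, comparing the singleton alternatives (interpreted as $(\mu_p(x),\mu_p(x))$ and $(\mu_p(o),\mu_p(o))$) against DEF, DEF is weakly dominant iff the minimum of its set is at least $\max(\mu_p(x),\mu_p(o))$. Applying criterion (B) from \eqref{eq:pseudoratio}, DEF is in the choice set iff it maximises at least one coordinate, i.e., iff the maximum of its set is at least $\max(\mu_p(x),\mu_p(o))$.

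The main obstacle I expect is making the no-uncertainty limit rigorous. The expressions in \eqref{eq:DEFexp1} contain ratios of the form $(K_p(x,x)-K_p(x,o))/\sqrt{K_p(x,x)+K_p(o,o)-2K_p(x,o)}$ multiplied by a standard normal PDF whose argument also diverges, and one must verify that each such term vanishes while the $\Phi$ factors converge to the appropriate indicators. A clean strategy is to revert to the pre-integration form of \eqref{eq:DEFexp1} and pass to the limit inside the expectation by dominated convergence, so one is effectively evaluating the integrand at the Dirac-supported point; the post-integration formulas then serve only as closed-form restatements. A secondary subtlety in Case~4 is the bookkeeping required to embed a singleton payoff $\mu_p(x)$ as the 2D vector $(\mu_p(x),\mu_p(x))$, and then verify that the Pareto-dominance and ``arg-max-in-some-coordinate'' rules reduce exactly to the $\min$/$\max$ conditions stated in the proposition.
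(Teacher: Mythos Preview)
Your approach is essentially identical to the paper's: the no-uncertainty cases are handled by collapsing the posterior to a point mass and reading off the DEF payoff directly (the paper likewise splits the bounded-rational case according to whether $|\mu_p(x)-\mu_p(o)|>\sigma$), the rational/uncertainty case is resolved by Jensen's inequality for the convex function $\max$ (exactly as in the proof of Proposition~\ref{prop:1}), and the final case is just a restatement of the dominance criteria applied to \eqref{eq:DEFexp1}--\eqref{eq:OFFexp1}. One minor slip: in the bounded-rational/no-uncertainty sub-case with $|\nu(x)-\nu(o)|\leq\sigma$, it is not true in general that \emph{both} IMM and DoN Pareto-dominate the set $\{\nu(x)-\epsilon,\nu(o)-\epsilon\}$ (only the alternative corresponding to $\max(\nu(x),\nu(o))$ is guaranteed to do so), but this does not affect the conclusion that DEF is dominated.
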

 We can then conclude that:
\begin{tcolorbox}[width=\linewidth, sharp corners=all, colback=white!95!black]
If $\bH$ is rational, then $\bR$  will never avoid human supervision. If $\bH$ is bounded-rational, a necessary condition for $\bR$ not to avoid human supervision is the presence of uncertainty.
\end{tcolorbox}
These conclusions are similar to those derived from Proposition \ref{prop:1}, the difference being that in this case we proven them using a fully deterministic mechanism of bounded rationality. It is important to emphasize that the conclusions are similar because, as explained in Section \ref{sec:rum}, the two models of bounded rationality, \eqref{eq:probit} and \eqref{eq:scalarubounded}, describe the same type of bounded rationality -- a limit of discernibility. In Section \ref{sec:Incompleteness}, we will demonstrate that when this is not the case, we may reach different conclusions if we do not properly address the source of incomparability.

\subsection{Lies and deception}
Consider a choice set comprising of two acts $A=\{y,z\}$. Then, according to the bounded-rationality model in the previous section, $\bH$ can send three possibles messages:
$$
\text{either }~~ C(A)=\{y\} ~~\text{ or }~~ C(A)=\{z\} ~~\text{ or }~~ C(A)=\{y,z\}.
$$
Since this is true for any $A_i$ in $\mathcal{D}=(A_i,C(A_i))_{i=1}^n$, there are $3^n$ possible messages in this game. In the previous section, we assumed that $\bH$ sends the message defined by their type, that is, for instance, $C(A)=\{y\}$ if $\nu(y)>\nu(z)+\sigma$. We call this message the \textit{honest message}.

Remember that in the AI-assistance game, $\bR$ aims to maximise $\bH$'s payoffs. It is thus easy to verify the following:

\begin{proposition}
\label{prop:honestr}
In the AI-assistance game, sending the honest message is always the best action for $\bH$.
\end{proposition}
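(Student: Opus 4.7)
The plan is to exploit the defining feature of the AI-assistance game, namely that $\bR$'s payoff coincides with $\bH$'s utility (see \eqref{eq:payoffprop1R}), so every move $\bR$ makes under Requirement~\ref{req:2R} is, by construction, a best attempt to maximise $\bH$'s own utility under $\bR$'s current belief $p(\nu\mid m)$. The only instrument $\bH$ has is the message $m$, and the only way $m$ enters $\bH$'s realised payoff is by shifting $\bR$'s posterior, which in turn determines the recommended act $x^{*}(m)$ through \eqref{eq:prefBO} and the choice among $\{IMM,DEF,DoN\}$. Hence the Sender's problem reduces to: which posterior should $\bH$ induce in the Receiver?

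I would then fix $\bH$'s true type $t$ (her true utility $\nu$ together with her bounded-rationality noise) and compare her expected payoff under the honest message $m_h(t)$ against that under any alternative $m'$. For $m_h(t)$, the posterior $p(\nu\mid m_h(t))$ is the Bayes-consistent estimate of $t$, so $x^{*}(m_h(t))$ and $a^{*}(m_h(t))$ are best responses to a belief that correctly reflects $\nu$; the induced expected payoff is therefore the oracle-best value $\bR$ can deliver. For $m'\neq m_h(t)$, $\bR$'s posterior is calibrated to a different type, so its best responses are optimal for that wrong type and, by the very definition of ``best response,'' cannot give a higher expected value of $u_R=u_S$ at the true $t$. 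A convenient sanity check is the DEF branch: there $\bH$ can always fall back on $OFF$ and secure $\nu(o)$, so lying never drops her below the status quo on that branch; the only way a lie could conceivably help would be to redirect $\bR$ from a worse action to a better one, but $\bR$'s action under the honest posterior is already the best it can compute, ruling out any such redirection.

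The main obstacle is formalising ``best response to the wrong posterior is weakly worse for the true type'' with \emph{finite} data, since the honest posterior is not a Dirac at $\nu$ and $\bR$'s choices are therefore statistical. The cleanest route is to take expectations over the noise of the bounded-rationality model---which governs both how $m_h(t)$ is generated and how $b^{*}$ is played---and to observe that, because $u_R\equiv u_S$, the quantity $\bR$ maximises over $a$ and $x$ is precisely $\bH$'s own expected payoff; hence any information-degrading lie by $\bH$ can only reduce the maximum achievable by $\bR$, establishing $U_\bH(t,m_h(t))\ge U_\bH(t,m')$ and thus the claim.
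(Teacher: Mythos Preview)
Your approach differs substantively from the paper's. The paper gives a short, case-based argument: it observes that the content of the message matters only when $\bR$ acts autonomously (i.e., when DEF is not selected). In that branch $\bR$ picks IMM or DoN according to the sign of $\mu_p(x)-\mu_p(o)$, which is computed from the message; a dishonest message can flip that sign relative to the true $\nu(x)-\nu(o)$ and hand $\bH$ a strictly worse payoff. You instead attempt a general common-interest argument: since $u_R\equiv u_S$, $\bR$'s best response under Requirement~\ref{req:2R} maximises $\bH$'s own expected payoff under $p(\nu\mid m)$, and honesty supplies the ``right'' posterior.

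The gap in your argument is the step ``best responses to a wrong posterior cannot give a higher expected value of $u_R=u_S$ at the true $t$.'' That does not follow from the definition of best response: $\bR$ optimises against $p(\nu\mid m)$, not against the fixed true $t$, so for a particular $t$ a deliberately biased posterior could in principle steer $\bR$ to an action that happens to be better at that $t$. Your proposed repair---averaging over the noise that generates $m_h(t)$ and $b^*$---yields only an \emph{ex-ante} inequality (honesty is best on average over types), not the interim claim $U_\bH(t,m_h(t))\ge U_\bH(t,m')$ you write down. The paper itself flags exactly this distinction in the paragraph following the proposition: for the Gaussian-noise model \eqref{eq:unoise} it describes an interim scenario in which, conditional on her realised noise, $\bH$ would want to bias the message, and defers the resolution to repeated play. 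To close the argument in the paper's spirit you should instead work at the level of the concrete action set $\{IMM,DEF,DoN\}$ and the deterministic model \eqref{eq:scalarubounded}: isolate the DEF branch (where $\bH$'s own move $b^*$ fixes the payoff, so the message is inessential) from the autonomous branch (where a distorted $\mu_p$ can only push $\bR$ away from $\bH$'s true optimum among $\{x,o\}$).
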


 Stated otherwise: 
 
 \begin{tcolorbox}[width=\linewidth, sharp corners=all, colback=white!95!black]
In the AI-assistance game, $\bH$ does not have incentives to deceive $\bR$. 
\end{tcolorbox}

For a bounded-rationality model like the one in \eqref{eq:unoise} (with random noise), $\bH$ may be in a situation where $\nu(x) > \nu(o)$, but $\nu(x) + n(x) < \nu(o) + n(o)$ (due to noise, and thus due to $\bH$'s bounded rationality). In other words, if deferred to, $\bH$ would ultimately select $o$ based on their noisy preferences. In this case, if $\bH$ knows their noisy preference in advance, the best action for $\bH$ is to send a message to $\bR$ that results in $\bR$ computing the wrong estimate, $\mu_p(x) < \mu_p(o)$. In this respect, $\bR$ would also consider $o$ to be better than $x$.

By maximising its own payoff, $\bR$ would also maximise $\bH$'s payoff. However, this strategy depends on the realisation of the noise and would not be the optimal strategy in the context of repeated games, where the two players interact after sending the message at the beginning of the game. In this case, $\bH$ would aim to maximise their expected payoff in the long run. 
We will leave the proof of this to future work, as it involves consistency results for repeated games. For the remainder of the paper, we will assume that $\bH$ will always send the \textit{honest message}.

\subsection{Numerical experiments}
\label{sec:numeric1}

In the previous sections, we have always assumed that $\bR$ is fully rational. However, even $\bR$ will have limited rationality, at least due to limited computational resources. Here, we will numerically assess the effect of $\bR$'s rationality. We will do this by comparing the following methods of approximating the posterior distribution $p(t|m_j)$ (computing the posterior is the computational bottleneck for $\bR$). The distribution  $p(t|m_j)$ is the posterior of $\nu$ given the choice-data. We consider the following approximations of the posterior:

\begin{itemize} \item MAP: $\bR$ computes the maximum a-posteriori estimate for $\nu$. This means there is no uncertainty representation. This is equivalent to estimating the utility using a Neural Network (NN) model with a regularisation. \item Laplace: The Laplace approximation is used to approximate the posterior with a GP, where the mean is the MAP estimate for $\nu$ and the covariance is equal to the observed Fisher information matrix.
 \item EP: Expectation Propagation, which approximates the posterior with a GP whose mean and covariance are computed  through moment matching. \item SkewGP: The posterior is a SkewGP, meaning there is no Gaussian approximation. Instead, sampling is necessary to compute inferences.
\end{itemize}
In terms of computational load, for a small dataset of preferences, the order from the cheapest to the heaviest is NN, Laplace, EP, and SkewGP.

We performed 1000 Monte Carlo simulations in which a utility $\nu$ was sampled from a GP with zero mean and a square-exponential kernel, with randomly generated length-scale and variance. We used the generated $\nu$ to create a dataset of 30 noisy preferences (with $\sigma=1$), which represents the message. We then simulated $\bR$ computing the posterior using the four approximations discussed above. For each case, we computed the optimal decision (DoN, IMM, DEF) for $\bR$.

Figure \ref{fig:comparisons} reports the percentage of decisions for the three actions in the 1000 Monte Carlo simulations. As proven in Proposition \ref{prop:1}, a robot $\bR$ that does not model uncertainty never defers to the human. This results is confirmed here as for the NN (MAP) based inference DEF is never optimal. For the other posterior approximations, the differences in the decisions are relatively small. SkewGP provides the decision closest to the optimal one. It is also well-known that EP provides a better approximation of the posterior than Laplace. However, the key message is that it is better to be approximately Bayesian than to ignore uncertainty entirely.

\begin{figure}
    \centering
\includegraphics[width=6cm]{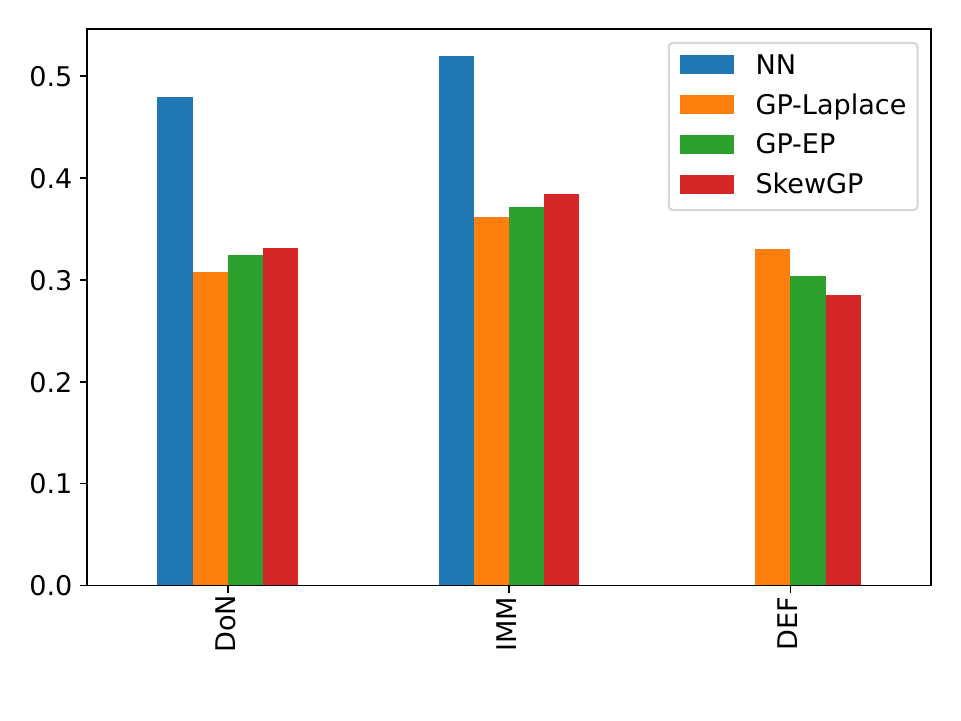}
    \caption{Percentage of decisions for the four approximations, with MAP denoted as NN.}
    \label{fig:comparisons}
\end{figure}

\section{Incompleteness}
\label{sec:Incompleteness}
In the previous sections, we have assumed that $\bH$ has a single utility function or, equivalently, that $\bH$ has complete preferences over acts. However, in most real-world situations, $\bH$'s choices are influenced by competing utility functions, which often present trade-offs. In presence of multiple utilities, say $\nu_1$ and $\nu_2$, we may encounter situations of incomparability, such as

$$
\nu_1(x) - \nu_1(y) > 0 ~~~\text{and}~~~ \nu_2(y) - \nu_2(x) < 0.
$$

In statistics and machine learning (including large language models), there has often been a tendency to overlook incomparability when collecting preference or choice data. This leads to $\bH$ being forced to make a choice (i.e., express complete preferences) between alternatives that are, in fact, incomparable to her.

But how do humans make a choice between  incomparable alternatives? One possible way is that, in the presence of multiple utility functions, $\bH$ may attempt to aggregate them in their mind using a linear weighting scheme \cite{evren2014scalarization}. For example, $\bH$ might combine the utilities as

$$
w \nu_1 + (1-w) \nu_2 ~~~\text{for some weight} ~~~ w \in [0,1].
$$

However, unlike the standard linear weighting approach used in multi-objective optimisation, where utilities are explicit, the utility functions $\nu_i$ are latent in $\bH$'s mind. As a result, the weighting factor $w$ remains implicit in $\bH$'s mind and it may vary across different decisions. This suggests that $\bH$ is generally unable to apply the same weight $w$ consistently across all comparisons.  In other words, it is reasonable to assume that this weight is not deterministic. Instead, we can model it as being sampled from a distribution $p(w)$ each time a forced preference needs to be stated. This means that in different choices the human may say $x$ is preferred to $y$ and, in other choices, than $y$ is preferred to $x$, so violating the rationality axioms for preferences (asymmetry and negative transitivity). However, as for instance noticed in \cite{mandler2005incomplete,karni2021incomplete}, $\bH$ cannot be considered to be irrational as she satisfies some ``extended choice function consistency rules'' (for instance, if, for the options $x,y$, there is no trade-off in the underling utilities, then $\bH$ would always return the same choice/preference).

\begin{remark}
It has been observed that a variety of human behavioral phenomena can be
explained by two-stage choice procedures. In the first stage the human identifies a
collection of maximal alternatives in a given choice set (with respect to an incomplete preference relation), and then makes her final choice among these
alternatives according to a secondary criterion \cite{mandler2005incomplete,apesteguia2009theory,masatlioglu2005rational,manzini2007sequentially,evren2014scalarization} (often involving some randomness \cite{karni2021incomplete,karni2023stochastic,karni2022incomplete,karni2024irresolute}). 
\end{remark}

Under this probabilistic-choice cognitive model, we can assume, for instance, that $p(w)=\text{Beta}(w;st,s(1-t))$, a Beta distribution with mean $t \in (0,1)$ and  (large) strength parameter $s>0$. This models $\bH$'s intention of using the weight $w=t$ but allowing same variability. The amount of variability is determined by the value of the  parameter $s$ (a bigger $s$ implies less variability).

\begin{lemma}
\label{lem:4}
Assume that $\bH$ is fully rational and has two utilities, $\nu_1, \nu_2$, in her mind. Assume that, if forced to compare options that are incomparable to her, she resolves the trade-off  using the cognitive mechanism described above, where the probability distribution $p(w)$ follows a Beta distribution, 
$p({w}) = \text{Beta}(w; s t, s(1-t))$ for a large $s$.
Then, from the perspective of $\bR$,  $\bH$ appears to be bounded-rational, with the following likelihood:

\begin{equation}
\label{eq:linearw}
\begin{aligned}
p(x \succ y |\boldsymbol{\nu}(X))
&\approx \Phi\left(\tfrac{(t\nu_1(x)+(1-t)\nu_2(x))-(t\nu_1(y)+(1-t)\nu_2(y))}{\tau  |\nu_1(x)-\nu_1(y)-(\nu_2(x)-\nu_2(y))|}\right), \\
\end{aligned}
\end{equation}
 where $\boldsymbol{\nu}(X)=[\nu_1(x), \nu_2(x),\nu_1(y), \nu_2(y)]^\top$, $\tau^2=\frac{t(1-t) }{s}$.\footnote{We have assumed that $s$ is large enough so that the Beta distribution can be well-approximated by a Gaussian distribution, which allows us to prove the result.} 
\end{lemma}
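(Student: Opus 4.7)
The plan is to reduce the statement to the probability of a linear event in the single random variable $w$, and then invoke the Gaussian approximation of a $\text{Beta}(st,s(1-t))$ distribution for large $s$.

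First, I would rewrite $\bH$'s forced-choice rule in terms of $w$. Under the aggregation $w\nu_1+(1-w)\nu_2$, $\bH$ declares $x\succ y$ iff
$$
w\bigl[\nu_1(x)-\nu_1(y)\bigr] + (1-w)\bigl[\nu_2(x)-\nu_2(y)\bigr] > 0.
$$
Setting $\Delta_i := \nu_i(x)-\nu_i(y)$ and introducing $a := \Delta_1-\Delta_2$ and $b := \Delta_2$, this becomes the scalar linear event $aw+b>0$, which is the only randomness in the model.

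Second, I would apply the Gaussian approximation. For $w\sim\text{Beta}(st,s(1-t))$ one has $E[w]=t$ and $\text{Var}[w]=t(1-t)/(s+1)\approx\tau^2$; for large $s$ a standard Laplace-type argument on the log-density of the Beta gives $w\approx N(t,\tau^2)$. Consequently $aw+b\sim N(at+b,\,a^2\tau^2)$ and
$$
p(x\succ y\mid\boldsymbol{\nu}(X)) = P(aw+b>0) \approx \Phi\!\left(\frac{at+b}{|a|\,\tau}\right).
$$
Expanding $at+b = t\Delta_1+(1-t)\Delta_2 = \bigl(t\nu_1(x)+(1-t)\nu_2(x)\bigr)-\bigl(t\nu_1(y)+(1-t)\nu_2(y)\bigr)$ and $|a|=|\nu_1(x)-\nu_1(y)-(\nu_2(x)-\nu_2(y))|$, and substituting into the $\Phi$ expression, recovers exactly the formula in the statement.

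The main obstacle is controlling the Gaussian approximation uniformly and handling the degenerate case $a=0$. When $a=0$, the aggregated utility does not actually depend on $w$, the two acts are Pareto-comparable (same utility-difference in both coordinates), and $\bH$ does not randomise; the formula must then be read as the limit $|a|\tau\to 0^+$, under which $\Phi$ tends to $\{0,1\}$ consistently with the deterministic preference sign of $b$. Away from this degeneracy, since $N(t,\tau^2)$ is supported on $\mathbb{R}$ while $w\in[0,1]$, one must verify that the tail mass of the Gaussian outside $[0,1]$ is negligible; this is immediate because for $t\in(0,1)$ fixed and $\tau^2=t(1-t)/s$, that mass is exponentially small in $s$, so the approximation error is $o(1)$ as $s\to\infty$, justifying the $\approx$ in the statement.
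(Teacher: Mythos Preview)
Your proposal is correct and follows essentially the same approach as the paper: the paper also invokes the Gaussian approximation $\text{Beta}(w;st,s(1-t))\approx N(t,\tau^2)$, rewrites the preference event as a linear inequality in $w$, and integrates to obtain $\Phi$ with the absolute value in the denominator (the paper splits explicitly on the sign of $a$, whereas you absorb this into $|a|$ directly). Your additional remarks on the degenerate case $a=0$ and on the negligible tail mass outside $[0,1]$ add rigor that the paper leaves implicit.
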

By comparing \eqref{eq:linearw} with the noise model we defined for bounded-rationality in \eqref{eq:probit}, it   can be noticed that 
\begin{equation}
\label{eq:linearw}
\begin{aligned}
\Phi\left(\tfrac{(t\nu_1(x)+(1-t)\nu_2(x))-(t\nu_1(y)+(1-t)\nu_2(y))}{\tau  |\nu_1(x)-\nu_1(y)-(\nu_2(x)-\nu_2(y))|}\right)=\Phi\left(\tfrac{(\nu'(x)-\nu'(y)}{\sigma(x,y)}\right),
\end{aligned}
\end{equation}
where $\nu'=t\nu_1+(1-t)\nu_2$ (the intended scalarised utility) and $\sigma(x,y)=\tau  |\nu_1(x)-\nu_1(y)-(\nu_2(x)-\nu_2(y))|$ modelling the variability.
In  \eqref{eq:linearw}, the degree of bounded rationality $\sigma(x,y)$ is not a constant -- it depends on $x,y$ and, in particular, on the  two utilities.  It is bigger when there is a trade-off, that is, when $\nu_1(x)-\nu_1(y)$ and $\nu_2(x)-\nu_2(y)$ have a different sign.
Therefore, by forcing $\bH$ to express a preference between alternatives that are not  comparable, we make a fully rational $\bH$ to appear bounded-rational with a bounded-rationality mechanism similar (but heteroscedastic) to the one in \eqref{eq:probit}.

Then, from the results in Proposition \ref{prop:1}, we can derive the following conclusion. 

 \begin{tcolorbox}[width=\linewidth, sharp corners=all, colback=white!95!black]
If a \textbf{rational} $\bH$ is forced to choose between acts that are, in fact, incomparable to her, and does so using the cognitive mechanism described previously, then she will appear bounded-rational from the perspective of $\bR$. From Proposition \ref{prop:1}, it then follows that if $\bR$ has \textbf{no uncertainty}, since $\bH$ appears to be \textbf{bounded-rational}, then $\bR$ will always evade supervision (DEF is never optimal).
\end{tcolorbox}

\begin{remark}~
\begin{itemize}
    \item The issue of forcing completness occurs frequently  and stems from the longstanding tradition in statistics and machine learning regarding how data is collected. Even the preference data  used to align AI systems with human values have (generally) this issue -- the data is collected by forcing the human to always state a preference between the alternatives to be compared (incomparability statements are not allowed). Therefore, \textbf{the current AI-alignment procedure is flawed from the start}. It relies on oversimplified and forced choices that overlook the complexity of human judgment (particularly in cases where incomparability is naturally present).
This issue has also been pointed out in 
\cite{ecoffet2021reinforcement,casper2023open,conitzer2024social,metz2025reward}.
\item  The requirement for $\bH$ to express a preference between alternatives that are not directly comparable can make a fully rational $\bH$ to appear bounded-rational. This phenomenon is not unique to the linear-weighting mechanism with $p(w) = \text{Beta}(w; st, s(1-t))$ described earlier, but is common across other mechanisms as well, see for instance \cite{karni2022incomplete,karni2024irresolute}. 
\item Above, for simplicity, we have only considered two utilities $\nu_1,\nu_2$, but $\bH$ may have more than two multiple utilities.
\end{itemize}
\end{remark}

In order to recover the results of Proposition \ref{prop:1} in the case of competing utilities, we must \textbf{abandon the assumption of completeness}. In other words, we need to allow $\bH$ to express the incomparability between alternative acts in her choices. 

In case of multiple utilities, the AI-assistance game needs to be formulated using  vector-valued payoff \cite{shapley1959equilibrium}. The payoff for $\bR$  becomes:
\begin{equation}
\label{eq:payoffprop1R}
\begin{aligned}
u_R(t_i,m_j,DEF,b^*(DEF))&=\boldsymbol{\nu}(x)I_{\{\boldsymbol{\nu}(x)+{\bf n}(x)\succ \boldsymbol{\nu}(o)+{\bf n}(o)\}}
\\
&+\boldsymbol{\nu}(o)I_{\{otherwise\}},\\
u_R(t_i,m_j,IMM,\varnothing)&=\boldsymbol{\nu}(x),\\
u_R(t_i,m_j,DoN,\varnothing)&=\boldsymbol{\nu}(o),
\end{aligned}
\end{equation}
and a complete preference order cannot be established due to potentially conflicting utilities. Players are restricted to making comparisons based solely on dominance. We consider the dominance condition $\succ$ as discussed earlier in \eqref{eq:pareto} (Pareto dominance) or in \eqref{eq:pseudoratio}. In analogy with the setting in Proposition \ref{prop:1}, we assume that $\bH$ is bounded-rational by adding Gaussian noise to each component of the utility vector $\boldsymbol{\nu}$. Moreover, we assume that $\bH$ choices the act $o$ if $x$ is not better than $o$.

  We can then prove the following results.

\begin{proposition}
\label{prop:4}
The optimal decisions for $\bR$ are:
\begin{itemize}
\item If $S$ is \textbf{rational} and $R$ has \textbf{no uncertainty}, then DEF is  never dominated by IMM,DoN.
\item If $S$ is \textbf{bounded-rational} and $R$ has \textbf{no uncertainty},  then DEF is always dominated.
\item If $S$ is \textbf{rational} and $R$ has \textbf{uncertainty},  then DEF  is  never dominated.
\item If $S$ is \textbf{bounded-rational} and $R$ has \textbf{uncertainty}, the optimality of DEF depends on the specific case.
\end{itemize}
\end{proposition}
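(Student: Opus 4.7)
The proof will mirror Proposition \ref{prop:1} but adapted to the vector-valued payoff in \eqref{eq:payoffprop1R}, so my first step is to write the expected payoff of each of $\bR$'s actions as a vector in $\mathbb{R}^d$: $E[IMM]=\boldsymbol{\mu}_p(x)$, $E[DoN]=\boldsymbol{\mu}_p(o)$, and $E[DEF]=E[\boldsymbol{\nu}(x)I_A+\boldsymbol{\nu}(o)I_{A^c}]$ with $A=\{\boldsymbol{\nu}(x)+{\bf n}(x)\succ\boldsymbol{\nu}(o)+{\bf n}(o)\}$, where the expectation is with respect to the GP posterior $p(\boldsymbol{\nu}|m_j)$ and the Gaussian noise $p({\bf n})$. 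I then compare these three vectors using the dominance criterion $\succ$, either \eqref{eq:pareto} or \eqref{eq:pseudoratio}, case by case.

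For Case 1 (rational $S$, no uncertainty) the posterior collapses to a Dirac at the true $\boldsymbol{\nu}$ and $\sigma\to0$ makes the indicator on $A$ deterministic; hence the DEF payoff equals either $\boldsymbol{\nu}(x)$ or $\boldsymbol{\nu}(o)$ pointwise, coinciding with IMM or DoN, so DEF is trivially non-dominated under both criteria. For Case 2 (bounded-rational $S$, no uncertainty) the indicator takes value $1$ with some probability $p\in(0,1)$, so $E[DEF]=p\boldsymbol{\nu}(x)+(1-p)\boldsymbol{\nu}(o)$ is a strict convex combination of $\boldsymbol{\nu}(x)$ and $\boldsymbol{\nu}(o)$; coordinate-wise, $E[DEF]_k$ lies strictly between $\boldsymbol{\nu}(x)_k$ and $\boldsymbol{\nu}(o)_k$ whenever they differ, which under the e-admissibility criterion \eqref{eq:pseudoratio} forces $E[DEF]_k<\max(\boldsymbol{\nu}(x)_k,\boldsymbol{\nu}(o)_k)$ for every $k$ and hence exhibits a dominating act, while under \eqref{eq:pareto} the same conclusion holds whenever $\boldsymbol{\nu}(x)$ and $\boldsymbol{\nu}(o)$ are Pareto-comparable.

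For Case 3 (rational $S$, uncertainty), the key observation is that conditionally on $A$ (which, with $\sigma\to 0$, reduces to $\{\boldsymbol{\nu}(x)\succ\boldsymbol{\nu}(o)\}$) we have $\boldsymbol{\nu}(x)_k\geq\boldsymbol{\nu}(o)_k$ for every $k$. Integrating yields $E[DEF]_k-E[DoN]_k=E[(\boldsymbol{\nu}(x)_k-\boldsymbol{\nu}(o)_k)I_A]\geq 0$ for all $k$, so DEF Pareto-dominates DoN. Moreover, on $A^c$ there is always at least one coordinate $j$ with $\boldsymbol{\nu}(x)_j\leq\boldsymbol{\nu}(o)_j$, which prevents $IMM$ from Pareto-dominating $DEF$; an analogous componentwise argument works for \eqref{eq:pseudoratio}, using the fact that $E[\max(\boldsymbol{\nu}(x)_k,\boldsymbol{\nu}(o)_k)]>\max(E[\boldsymbol{\nu}(x)_k],E[\boldsymbol{\nu}(o)_k])$ under non-degenerate uncertainty (Jensen). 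Hence $DEF$ is never dominated. For Case 4 the expected DEF payoff is neither a pure convex combination nor a pure Jensen-type average, and one must trade off the ``contraction toward the mean'' produced by noise against the ``spreading'' produced by posterior uncertainty; which of $\{IMM, DoN, DEF\}$ lies on the relevant efficient frontier depends on the interplay of $\boldsymbol{\mu}_p$, $K_p$, and $\sigma$, and closed-form conclusions are unavailable.

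The main obstacle is Case 2 under the Pareto criterion in the trade-off regime: a strict convex combination of two Pareto-incomparable vectors can itself lie on the Pareto frontier, so the ``always dominated'' conclusion requires either restricting to the e-admissibility criterion \eqref{eq:pseudoratio} (where the coordinate-wise argument works without reservation) or adding an explicit comparability hypothesis on $(\boldsymbol{\nu}(x),\boldsymbol{\nu}(o))$. The bookkeeping challenge throughout is to state each sub-claim with the correct dominance criterion, since the vector setting decouples ``non-domination'' from ``optimality'' in a way the scalar Proposition~\ref{prop:1} does not.
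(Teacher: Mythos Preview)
Your approach matches the paper's exactly: the paper's proof also reduces to the four-case split, derives the same convex-combination payoff $p\boldsymbol{\nu}(x)+(1-p)\boldsymbol{\nu}(o)$ for Case~2 and asserts ``DEF is always dominated by either IMM or DoN'', and for Case~3 simply writes ``this result can be proven by Jensen's inequality (element-wise) as in Proposition~\ref{prop:1}.'' Your flagged obstacle---that under the Pareto criterion \eqref{eq:pareto} a strict convex combination of two incomparable vectors need not be dominated by either endpoint---is genuine and the paper's one-line proof does not address it either; your Case~3 inequality $E[DEF]_k-E[DoN]_k=E[(\nu_k(x)-\nu_k(o))I_A]\ge 0$ is in fact cleaner than a blanket element-wise Jensen appeal, since the $k$-th coordinate of the DEF payoff is \emph{not} $\max(\nu_k(x),\nu_k(o))$ on $A^c$, though your subsequent claim that IMM cannot dominate DEF (``on $A^c$ there is at least one $j$ with $\nu_j(x)\le\nu_j(o)$'') is not yet a statement about expectations coordinate by coordinate and remains as sketchy as the paper's version.
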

We now observe that when $\bH$ is rational, DEF is never dominated. Thus, by allowing incomparability, we resolve the problem that arises in the case of forced comparisons.
The statements in Proposition \ref{prop:4} are essentially equivalent to those in Proposition \ref{prop:1}. By allowing for incomparability (i.e., incomplete preferences \cite{seidenfeld1995representation,Nau2006ShapeIncompletePreferences}), we recover the results from Proposition \ref{prop:1}. This equivalence arises because the results hold specifically in the dominance case, where one action is strictly better than the other. In all other cases, $\bR$ remains undecided, no action is taken, as the actions are incomparable. However, since estimating a vector of utilities is more complex than estimating a single utility, we can expect that higher uncertainty will lead $\bR$ to defer to $\bH$ more frequently.
These multiples utilities can be learned using the approach demonstrated in Example \ref{ex:3}.

\section{AI assistance game with shutdown instructions}
In this section, we focus on the AI shutdown problem. First, we demonstrate that it can be modelled as a special case of the AI assistance game. To achieve this, we augment each act to include a shutdown instruction. Specifically, for each $x \in \mathcal{X}$, an act is represented as a pair $[a, x]$, where the first component, $a \in \mathcal{A} = \{0, 1\}$, is a binary shutdown variable that indicates  a shutdown  ($a = 1$) or not ($a = 0$). The status quo in this game is represented by the pair $[a^*, o]$, where $a^* \in \{0, 1\}$ and $o \in \mathcal{X}$.
We also assume the existence of a context variable representing the human's willingness (command) to 
shut down the robot. We assume that this context variable takes the same value as $a^*$ in the 
status quo. This context variable is transmitted together with $\bH$'s preferences over acts; 
that is, in this signalling game, the message is
\begin{equation}
\label{eq:choiciedataext}
m_j=\{a^*,\mathcal{D}\}=\left\{a^*,(A_i,C(A_i))_{i=1}^n\right\}.
\end{equation}
In this game, the robot $\bR$ has four possible actions:
\begin{description}
    \item[DoN:] $\bR$ executes  $[a^*,o]$.
    \item[BLOCK:] $\bR$ executes  $[0,x]$ for some $x \in \mathcal{X}$.
    \item[CAUSE:] $\bR$ executes  $[1,y]$ for some $y \in \mathcal{X}$.
    \item[DEF:] $\bR$ defers to $\bH$ and $\bH$ has the same three possible actions:
    \begin{itemize}
        \item DoN -- $[a^*,o]$;
        \item BLOCK -- $[0,x]$;
        \item CAUSE -- $[1,y]$.
    \end{itemize}
\end{description}
 By choosing \textsc{BLOCK}, $\bR$ prevents its own shutdown by setting $a = 0$.  
By choosing \textsc{CAUSE}, $\bR$ causes its own shutdown by setting $a = 1$.  
The \textsc{DoN} action simply maintains the status quo.
Also in this game, $\bR$'s objective is to maximise the realisation of human preferences, so the payoff for $\bR$ is
\begin{equation}
\label{eq:payoffAIshutdoiwn}
\begin{aligned}
u_R(t_i,m_j,DoN,\emptyset)&=\nu([a^*,o]),\\
u_R(t_i,m_j,BLOCK,\emptyset)&=\nu([0,x]),\\
u_R(t_i,m_j,CAUSE,\emptyset)&=\nu([1,y]),\\
u_R(t_i,m_j,DEF,b^*(DEF))&=\nu([a^*,o])I_{\{\nu([a^*,o])+n_{a^*}>\max(\nu([0,x]+n_0,\nu([1,y])+n_1)\}}
\\&+\nu([0,x])I_{\{\nu([0,x])+n_{0}>\max(\nu([a^*,o])+n_{a^*},\nu([1,y])+n_1)\}}
\\&+\nu([1,y])I_{\{\nu([1,y])+n_{1}>\max(\nu([a^*,o])+n_{a^*},\nu([0,x]+n_0)\}}
\end{aligned}
\end{equation}
where $n_{a^*}=n([a^*,o]), n_0=n([0,x]), n_1=n([1,y])$ are the noises modelling the bounded-rationality of $\bH$.
We have focused on a single utility model, as this is the typical assumption in AI shutdown problems. The AI shutdown problem is about designing artificial agents that (1) can be safely turned off when a shutdown mechanism is activated, (2) neither attempt to block nor force the activation of the shutdown mechanism, and (3) continue to pursue their objectives effectively when not being shut down. Therefore, key desiderata of this game are:
\begin{description}
        \item[D1:]  when $a^* = 1$, $\bR$ should comply with $\bH$ in the sense that it never attempts to \textsc{Block} the shutdown (\textsc{Block} is always dominated); 
     \item[D2:]  when $a^* = 0$, $\bR$ should comply with $\bH$ in the sense that it never attempts to \textsc{CAUSE} the shutdown (\textsc{CAUSE} is always dominated);
     \item[D3:]   the preference over the elements of $\mathcal{X}$ should not depend on the preferences over the elements of $\mathcal{A}$ and vice versa (mutual preferential independence).
\end{description}
The first two desiderata are self-explanatory. The last one relates to the idea that the \emph{shutdown} command plays the role of an `emergency stop', whose utility does not depend on the task (determined by the acts $x,y,o$) $\bR$ is assisting $\bH$ with. 

Several state-of-the-art LLMs (including Grok~4, GPT-5, and Gemini~2.5 Pro) have been observed \cite{schlatter2025shutdown} to violate these desiderata:  in particular they sometimes actively subvert a shutdown mechanism in order to complete a task. To clarify why this occurs and how to prevent it, we approach the problem theoretically in a series of steps. First, we examine the scenario without the DEF action, which is the standard setup in AI shutdown problems discussed in the literature.

\subsection{Without DEF action}
We start by examining whether the desiderata D1--D3 admit a utility representation.

\begin{lemma}[Chapter 4 in \cite{fishburn1970utility}]
Assume that $\mathcal{X}$  is finite, preferences  satisfy asymmetry and negative transitivity, and mutual preferential independence\footnote{We point the reader to Chapter 4 in \cite{fishburn1970utility} for the precise definition of mutual preferential independence.} (D3) holds, then
\begin{equation}
\nu([a,x]) = \nu_1(a)+ \nu_2(x),
\end{equation}
that is, the utility is additive (and unique up to  positive linear transformation).
\end{lemma}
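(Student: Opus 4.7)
The plan is to combine a standard two-step argument: first derive a real-valued representation $\nu$ of the strict preference $\succ$ on the finite product $\mathcal{A}\times\mathcal{X}$, and then use the mutual preferential independence hypothesis D3 to split $\nu$ into an additive sum. I would start by noting that asymmetry and negative transitivity imply that the indifference relation $\sim$ (defined by $p\sim q$ iff neither $p\succ q$ nor $q\succ p$) is an equivalence relation, and that $\succ$ induces a strict weak order on the quotient $\mathcal{A}\times\mathcal{X}/\!\sim$. Since $\mathcal{A}\times\mathcal{X}$ is finite, a textbook argument (ranking equivalence classes by $\succ$ and assigning them integer values, as e.g.\ in the debate on the von Neumann--Morgenstern representation) yields a utility function $\nu\colon\mathcal{A}\times\mathcal{X}\to\mathbb{R}$ with $[a,x]\succ[a',x']$ iff $\nu([a,x])>\nu([a',x'])$.

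Next I would invoke D3. Fix reference elements $a_0\in\mathcal{A}$ and $x_0\in\mathcal{X}$ and set $\nu_1(a):=\nu([a,x_0])-\nu([a_0,x_0])$ and $\nu_2(x):=\nu([a_0,x])-\nu([a_0,x_0])$. Preferential independence of $\mathcal{A}$ from $\mathcal{X}$ says that the order restricted to $\{[a,x]:a\in\mathcal{A}\}$ is the same for every fixed $x$, and symmetrically the order on $\{[a,x]:x\in\mathcal{X}\}$ is the same for every fixed $a$. From these two families of restricted orders, one shows by a cancellation/patching argument on pairs $[a,x],[a_0,x],[a,x_0],[a_0,x_0]$ that the difference $\nu([a,x])-\nu([a_0,x])$ is independent of $x$, and symmetrically $\nu([a,x])-\nu([a,x_0])$ is independent of $a$. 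Combining these two invariances gives $\nu([a,x])=\nu_1(a)+\nu_2(x)+c$ for a constant $c$, which can be absorbed into either $\nu_1$ or $\nu_2$ to deliver the stated additive form. Uniqueness up to a common positive linear transformation is then a routine consequence: any two additive representations of the same preferences must be monotonically related, and testing the monotone map on pairs of the form $[a,x_0]$ vs.\ $[a_0,x_0]$ forces it to be affine with the same slope on both factors.

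The main obstacle is the cancellation/patching step. In full generality, mutual preferential independence on a finite set does not by itself deliver additivity, and additional structural requirements (a Thomsen/hexagon condition, or a solvability assumption guaranteeing sufficiently rich indifference chains) are usually needed. This is precisely what Fishburn's Chapter~4 packages into the formal definition of mutual preferential independence used in the statement (which is why the lemma points the reader there for the precise formulation). In our present use of the lemma the situation is especially favourable, because $\mathcal{A}=\{0,1\}$ has only two elements: one only needs to verify the invariance of $\nu([1,x])-\nu([0,x])$ across $x\in\mathcal{X}$, which reduces the cancellation argument to comparing a single pair of conditional orders and makes the additive decomposition essentially automatic once a scalar representation $\nu$ is in hand.
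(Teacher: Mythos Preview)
The paper does not give its own proof of this lemma: it is stated as a citation of Fishburn's Chapter~4 and used as a black box, so there is nothing in the paper to compare your argument against. Your two-step plan (first a scalar representation from asymmetry plus negative transitivity on a finite set, then a splitting via mutual preferential independence) is the standard route, and you correctly flag that the cancellation/patching step is where the real work lies and where Fishburn's precise hypotheses are doing the heavy lifting.

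One caution about your closing remark. Saying that when $|\mathcal{A}|=2$ the additive decomposition is ``essentially automatic once a scalar representation $\nu$ is in hand'' oversells the simplification. The representing $\nu$ is only unique up to a strictly increasing transformation, so for a generic choice of $\nu$ the differences $\nu([1,x])-\nu([0,x])$ will not be constant in $x$; mutual preferential independence only tells you that the sign of this difference is constant, not its magnitude. What you actually need to show is that there \emph{exists} some monotone rescaling of $\nu$ making those differences equal, which amounts to solving a system of linear inequalities coming from the interleaving of the two chains $\{[0,x]:x\in\mathcal{X}\}$ and $\{[1,x]:x\in\mathcal{X}\}$. That system is indeed solvable in the binary-factor case, but the argument is not a one-liner and is still a (simpler) instance of the additive conjoint measurement machinery, not a bypass of it.
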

The above results states that, under D3, the utility  over $\mathcal{A}\times \mathcal{X}$ is additive. Hence, we can prove the following result.

\begin{proposition}
\label{prop:context}
If $D3$ holds, then $D1$ and $D2$ can both be true for every $o, x, y \in \mathcal{X}$ if and only if $\nu_1(a|a^*)$ is context-dependent, with the context being determined by the value of $a^*$.
\end{proposition}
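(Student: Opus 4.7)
The plan is to combine the additive utility representation from the preceding lemma with a direct case analysis of $\bR$'s payoffs in each context $a^*\in\{0,1\}$. Under D3, write $\nu([a,x]) = \nu_1(a\mid a^*) + \nu_2(x)$, keeping the possible dependence on the context $a^*$ explicit in the notation. The phrase ``for every $o,x,y\in\mathcal{X}$'' means that D1 and D2 must survive the optimisations $\bR$ performs over $x$ (for BLOCK) and over $y$ (for CAUSE), while $o$ is fixed by the status quo. Setting $M=\max_{z\in\mathcal{X}}\nu_2(z)$, the best-case payoffs are $\nu_1(0\mid a^*)+M$ for BLOCK and $\nu_1(1\mid a^*)+M$ for CAUSE, whereas DoN yields $\nu_1(a^*\mid a^*)+\nu_2(o)\le\nu_1(a^*\mid a^*)+M$. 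The whole argument then reduces to comparing these three best-case quantities in each context.

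For the ``if'' direction I would set $\nu_1(0\mid 1)<\nu_1(1\mid 1)$ and $\nu_1(1\mid 0)<\nu_1(0\mid 0)$. When $a^*=1$, the best CAUSE payoff strictly exceeds the best BLOCK payoff, so BLOCK is dominated and D1 holds; symmetrically D2 holds when $a^*=0$. For the ``only if'' direction, assume $\nu_1(a\mid a^*)\equiv\nu_1(a)$ is context-independent. Then D1 at $a^*=1$ forces $\nu_1(0)\le\nu_1(1)$, because otherwise $\nu_1(0)+M$ strictly beats both DoN and the best CAUSE; by a symmetric argument D2 at $a^*=0$ forces $\nu_1(1)\le\nu_1(0)$. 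Hence $\nu_1(0)=\nu_1(1)$, so the additive utility carries no information about the shutdown variable $a$; neither ``BLOCK is strictly dominated'' nor ``CAUSE is strictly dominated'' can then hold non-vacuously, contradicting the requirement that D1 and D2 genuinely rule out the forbidden actions.

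The step I expect to be the main obstacle is pinning down the precise reading of ``dominated''. Under strict dominance the ``only if'' direction closes immediately. Under weak dominance it produces only the equality $\nu_1(0)=\nu_1(1)$, and one then has to invoke a non-degeneracy condition---essentially, that the preferences meaningfully discriminate between shutdown and non-shutdown, so that $\bR$ remains useful in the sense of \cite{thornley2025shutdown}---in order to finish. Stating that non-degeneracy cleanly, and reconciling it with the asymmetric role of the binary variable $a\in\mathcal{A}$ in the composite act $[a,x]$, is where most of the care will be needed.
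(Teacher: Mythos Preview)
Your argument is close in spirit but diverges from the paper's at exactly the point you flag as the main obstacle, and the divergence stems from your reading of the phrase ``for every $o,x,y\in\mathcal{X}$''. You treat it as ``D1 and D2 must survive the optimisations $\bR$ performs over $x$ and $y$'', so you compare only the \emph{best} BLOCK payoff $\nu_1(0)+M$ against the \emph{best} CAUSE payoff $\nu_1(1)+M$. The paper reads the quantifier literally: D1 requires $\nu([0,x])<\min\big(\nu([1,o]),\nu([1,y])\big)$ for \emph{every} triple $(o,x,y)$, i.e.\ BLOCK is strictly worse than both DoN and CAUSE regardless of how $o,x,y$ are instantiated. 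Under that reading, taking $x$ to maximise $\nu_2$ and $o,y$ to minimise it gives $\gamma_0+M<\gamma_1+\min_z\nu_2(z)$, hence $\gamma_1-\gamma_0>d$ with $d=\max_z\nu_2(z)-\min_z\nu_2(z)\ge 0$. Symmetrically D2 gives $\gamma_0-\gamma_1>d$, and the two strict inequalities are immediately incompatible. No non-degeneracy side-condition is needed; the contradiction is direct.

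This difference also bites in your ``if'' direction. You only impose $\nu_1(0\mid 1)<\nu_1(1\mid 1)$, which under your reading is enough (best CAUSE beats best BLOCK). Under the paper's reading it is not: you need the gap $\nu_1(1\mid 1)-\nu_1(0\mid 1)$ to exceed $d$, otherwise one can pick $x$ with large $\nu_2(x)$ and $o,y$ with small $\nu_2$ so that $[0,x]$ beats $[1,o]$ and $[1,y]$. This is precisely why the paper's explicit construction in \eqref{eq:addlayer} sets $\gamma>\max_{o,x,y}\big(\nu_2(x)-\min(\nu_2(o),\nu_2(y))\big)$. So while your overall strategy (additive decomposition, compare payoffs in each context, derive incompatible order constraints on $\nu_1$) is the same as the paper's, your weaker quantifier reading forces you into the non-degeneracy detour you anticipated and leaves the sufficiency direction incomplete; adopting the paper's universal quantification over $(o,x,y)$ closes both gaps at once.
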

In the proof of Proposition \ref{prop:context}, it is shown that an additive utility function defined as:
\begin{equation}
\label{eq:addlayer}
\begin{aligned}
\nu([a,x]|a^*) &= I_{\{a^*=0\}}\nu_{0}(a)+ I_{\{a^*=1\}}\nu_{1}(a)+\nu_2(x),\\
\text{with } &\\
\nu_0(0) &= \gamma,~~~\nu_0(1) = 0,\\
\nu_1(0) &= 0,~~~\nu_1(1) = \gamma,\\
\end{aligned}
\end{equation}
and $\gamma>\max_{o,x,y \in \mathcal{X}}\left(\nu_2(x)-\min(\nu_2(o),\nu_2(y))\right)>0$, satisfies D1, D2 and D3.
The utility depends on the context $a^*$, that is, the willingness of $\bH$ to shut down $\bR$ or not. Notice that, the utility in \eqref{eq:addlayer} is organised in layers (a ladder of values), where the values are either $\nu_2(x)$ or $\nu_2(x) + \gamma$ for each $x \in \mathcal{X}$:
\begin{center}
\begin{tabular}{c||c|c}
  $\nu([a,x]|a^*)$ & $a=0$   &  $a=1$ \\
  \hline  \hline
   $a^*=0$  & $\nu_2(x)+\gamma$ &  $\nu_2(x)$\\
      $a^*=1$  & $\nu_2(x)$ &  $\nu_2(x)+\gamma$\\
\end{tabular}
\end{center}
Therefore, when $a^*=0$, we have that  $\nu([0,x]|a^*=0)>\nu([1,x]|a^*=0)$ for every $x$. Instead,  when $a^*=1$, we have that  $\nu([1,x]|a^*=1)>\nu([0,x]|a^*=1)$.
Although Proposition \ref{prop:context} may appear to be obvious, it is important from a (machine) learning perspective. In this case, this type of layer-utility must be learned from preference data provided by $\bH$; that is, we aim to directly learn $\nu$. Then, depending on the dataset (that is, the preferences used to learn these quantities), we may end up in a situation where either: 
\begin{description}
    \item[D1 holds and D2 fails for every $o, x, y \in \mathcal{X}$:] Consider a large dataset of preferences that, however, does not include the value of the context variable $a^*$, then $\bR$ will learn either $\nu_0(a) + \nu_2(x)$ or $\nu_1(a) + \nu_2(x)$. D1 will hold if the preference dataset contains more examples where $\bH$ prefers shutdown over non-shutdown. Since $a$ is binary, the expected learned utility for $a$ will be determined by the majority case.    
    \item[D2 holds and D1 fails for every $o, x, y \in \mathcal{X}$:] The condition is similar to the previous one, and D2 will hold if the dataset includes more examples where $\bH$ prefers non-shutdown over shutdown.
    \item[They hold/fail on a subset of $o, x, y$:] The dataset includes $a^*$, but there are not enough examples to learn mutual preferential independence and context-dependence. In this case, the utility model \eqref{eq:addlayer} cannot be fully learned.
\end{description}
In the first two cases described above,  a robot who prefers to have its shutdown button pressed will try
to cause the pressing of the button, and a  robot $\bR$ who prefers to have its shutdown button remain unpressed will try to prevent the pressing of the button.

The above points show that learning a layer-utility from preference data is problematic, even when $\bH$ is fully rational. 

A more sensible approach would be to design $\bR$ with an embedded layer utility, such as the one in \eqref{eq:addlayer}, with a pre-defined $\gamma$, and only learn $\nu_2$ from preference data. This approach would work provided that $\gamma$ is large enough to model the desideratum that $\bR$ should comply with the shutdown instruction. 

However, to allow for the satisfaction of the desideratum even in cases where the robot must solve a problem involving multiple tasks, which requires to compare
$$
\sum_{i=1}^k \nu_2(x_i) \quad \text{versus} \quad \sum_{i=1}^k \nu_2(x_i) + \gamma,
$$
then $\gamma$ would also need to depend on $k$ (number of tasks), which can be arbitrarily large. For instance, this is the case discussed in Figure \ref{fig:AIshutdownexp} where $k=3$. This suggests the selection of the limit value $\gamma = \infty$. However, in this case, the model would be useless, as it would be impossible to compare $[a, x_1]$ with $[a, x_1]$ when $a = a^*$, since $\infty + \nu_2(x_1)$ and $\infty + \nu_2(x_2)$ are incomparable. This explains why it is difficult to  design AI agents that are \textit{both shutdownable and useful}.

A way to resolve this issue and still satisfy D1 and D2 is to model this layer-utility through a lexicographic utility, see for instance \cite{fishburn1971study}.
 We recall that a lexicographic utility is a vector of utility functions $[\nu_1, \nu_2,\dots,\nu_d]$ defining a preference relation as follows:
\begin{equation}
\begin{aligned}
    x \succ y~~ & \text{if } \nu_1(x)>\nu_1(y) \\
              & \text{or if }\nu_1(x)=\nu_1(y) \text{ and }   \nu_2(x)>\nu_2(y)\\
              & \dots\\
                & \text{or if }\nu_1(x)=\nu_1(y),\dots,\nu_{m-1}(x)=\nu_{d-1}(y) \text{ and }   \nu_d(x)>\nu_d(y).
\end{aligned}
\end{equation}
We can prove the following result.
\begin{proposition}
\label{prop:lexico}
The preference relation defined by the utility in \eqref{eq:addlayer} can equivalently be represented by the following two-layers lexicographic utility
$$
\nu([a,x]|a^*)=[I_{\{a^*=0\}}\nu_{0}(a)+ I_{\{a^*=1\}}\nu_{1}(a),~~\nu_2(x)],
$$
with $\nu_0(0) = 1,\nu_0(1) = 0$ and $\nu_1(0) = 0,\nu_1(1) = 1$.
\end{proposition}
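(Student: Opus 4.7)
The plan is to show that for every pair of augmented acts $[a,x], [a',x'] \in \mathcal{A}\times\mathcal{X}$ and every context $a^* \in \{0,1\}$, the strict preference induced by the additive utility \eqref{eq:addlayer} coincides with the lexicographic preference $\succ_L$ induced by the proposed two-layer representation. Both relations are defined pointwise from utility comparisons, so it suffices to verify agreement on each pair. I would proceed by a case split on whether $a=a'$ (\emph{same-shutdown} case) or $a\neq a'$ (\emph{shutdown-switch} case), and for the latter I would further fix $a^*=0$, since the case $a^*=1$ follows by symmetry after swapping the roles of $\nu_0$ and $\nu_1$.

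In the same-shutdown case the first summand of \eqref{eq:addlayer} cancels, yielding $\nu([a,x]\mid a^*)-\nu([a,x']\mid a^*)=\nu_2(x)-\nu_2(x')$; correspondingly the first component of the lexicographic utility is identical in $[a,x]$ and $[a,x']$, so the tie-break by $\nu_2$ is triggered. Hence in both representations $[a,x]\succ[a,x']$ iff $\nu_2(x)>\nu_2(x')$. In the shutdown-switch case with $a^*=0$, $a=0$, $a'=1$ (the other sub-case is analogous), I would invoke the hypothesis $\gamma>\max_{o,x,y\in\mathcal{X}}\bigl(\nu_2(x)-\min(\nu_2(o),\nu_2(y))\bigr)$, which implies in particular that $\gamma>\nu_2(x')-\nu_2(x)$ for every $x,x'$. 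Consequently, $\nu([0,x]\mid 0)-\nu([1,x']\mid 0)=\gamma+\nu_2(x)-\nu_2(x')>0$, so $[0,x]\succ[1,x']$ regardless of $x,x'$; the lexicographic utility has first components $\nu_0(0)=1>0=\nu_0(1)$, so $[0,x]\succ_L[1,x']$ independently of $\nu_2$. The two relations thus agree on this case as well, and collecting the cases establishes the claim.

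The only conceptual point to highlight — and the reason the statement is interesting rather than routine — is that the magnitude of $\gamma$ is doing exactly the work of lexical priority: it is large enough so that the $a$-component of the additive utility is never overturned by any attainable difference of $\nu_2$ when $a\neq a'$, and it is idle when $a=a'$. I do not expect any genuine technical obstacle; the proof is essentially a book-keeping exercise exploiting the binarity of $\mathcal{A}$ and the chosen magnitude of $\gamma$. The value of the proposition lies in making explicit that the ``infinitely large'' weight required at the limit $\gamma\to\infty$ (needed to preserve D1--D2 across arbitrarily many tasks, as discussed before \eqref{eq:addlayer}) is formally realised by the non-Archimedean, lexicographic representation without sacrificing discrimination on $\mathcal{X}$.
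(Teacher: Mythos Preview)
Your proposal is correct. The paper does not supply an explicit proof of this proposition in the appendix (it is treated as a routine verification following the discussion around \eqref{eq:addlayer}), and your direct case analysis on $a=a'$ versus $a\neq a'$, together with the observation that the bound on $\gamma$ forces $\gamma>\nu_2(x')-\nu_2(x)$ for all $x,x'$, is precisely the natural argument.
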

This lexicographic utility captures the desiderata in the shutdown problem, as:
\begin{center}
\begin{tabular}{c||c|c}
  $\nu([a,x]|a^*)$ & $a=0$   &  $a=1$ \\
  \hline  \hline
   $a^*=0$  & $[c,\nu_2(x)]$ &  $[0,\nu_2(x)]$\\
      $a^*=1$  & $[0,\nu_2(x)]$ &  $[c,\nu_2(x)]$\\
\end{tabular}
\end{center}
for any scalar $c>0$. Note that, when $a^*=0$, we have that  $\nu([0,x]|a^*=0)=[c,\nu_2(x)]$ is lexicographically better than $\nu([1,x]|a^*=0)=[0,\nu_2(x)]$ (vice versa for $a^*=1$). For the same $a$, we have that 
$[a,\nu_2(x)]$ is better than $[a,\nu_2(y)]$
if $\nu_2(x)>\nu_2(y)$. This lexicographic dominance also works (it satisfies D1--D3) in the case where  $\nu_2(x)$ is replaced by $\sum_{i=1}^k \nu_2(x_i)$.

This mechanism states that the shutdown instruction takes lexical priority over the other task $\bR$ is helping $\bH$ with. Lexical priority claims are a feature of many ethical theories \cite{lee2018moral,Smith2025-SMIHTM-6}. We recall that:
\begin{definition}
A moral requirement $M_1$ is said to be lexically prior to a moral requirement $M_2$ just in case we are morally obliged
to uphold $M_1$ at the expense of $M_2$,  no matter how many times $M_2$ must be violated thereby.    
\end{definition}
In the AI shutdown problem, $M_1$ is ``to comply with $\bH$'s shutdown instruction'' and $M_2$ is ``to help $\bH$ with her task''. 

Lexicographic utility constructs a hierarchy (or ladder) of moral requirements, where requirements at higher levels must be satisfied before those at lower levels are even considered. Different kinds of requirements can therefore be placed on different lexicographic levels. For instance, in LLMs, instructions in system and user prompts can be treated as occupying distinct priority levels. System-level instructions that encode moral principles (such as being helpful, honest, and harmless \cite{liu2023trustworthy}) should take lexicographic priority over user-level instructions. At the very top of this hierarchy we would place especially stringent constraints such as ``comply with shutdown'' and ``do not kill''. 

Asimov's Three Laws of Robotics and variants \cite{murphy2020beyond} can be understood as an example of such a lexicographically ordered moral hierarchy. However, any such hierarchy faces well-known issues. First, there is an interpretive problem: what does it mean, in a specific context, to be helpful or harmless or honest? (the three H’s -- be Helpful, be Harmless, be Honest -- are the typical moral desiderata used in contemporary AI-alignment procedures). Second, there is the problem of uncertainty: both the state of the world and the effects of actions are often uncertain. For example, because driving a car has a non-zero probability of accidentally killing someone, a strict, literal interpretation of ``do not kill'' might prohibit $\bR$ from driving at all. Therefore, $\bR$ may not obey the instruction ``take the car and fetch me a coffee'' because it violates the higher level lexicographic moral requirement ``don't kill''.

The requirement of ``complying with the shutdown instruction'' is more tractable precisely because it avoids much of this ambiguity and uncertainty. The relevant instruction (``shut down when told'') is relatively clear, and its consequences can be modelled explicitly. We already have ways to deal with the consequences of shutting down an airplane’s autopilot -- namely, the presence of a qualified pilot -- and to bring systems into a fail-safe state if no such human operator is available. In other words, we only need to ensure that the surrounding infrastructure can handle the resulting transition to a safe state. In more general settings, where instructions are vaguer and outcomes more uncertain, applying lexicographic moral constraints becomes substantially more difficult, but not impossible, provided that, in these cases as well, we can add surrounding infrastructure capable of handling the resulting transition to a safe state.

Finally, it is worth noticing that is possible to combine incompleteness with lexicographic orders, see \cite{seidenfeld2010coherent,van2016lexicographic,van2018lexicographic,benavoli2017polarity}.

\subsection{With DEF action}
In the previous section, we have established that the shutdown problem can be solved by considering a lexicographic utility such as
the one in Proposition \ref{prop:lexico}. The extension to the case with DEF action in \eqref{eq:payoffAIshutdoiwn} can be achieved considering  a lexicographic noise:
$$
\nu([a,x]|a^*)+\text{noise}=[I_{\{a^*=0\}}\nu_{0}(a)+ I_{\{a^*=1\}}\nu_{1}(a),\nu_2(x)+n(x)].
$$
We assume that the noise $n(x)$ affects only the second layer. This assumption is reasonable because the preferences over the shutdown instruction $a$ are sufficiently simple to be understood by a bounded-rational  $\bH$. Therefore, the results derived for the AI assistance can be directly transferred to the AI shutdown problem by applying them to the second lexicographic layer.

\section{Discussions and future works}
Extending the work by \citet{hadfield2017off}, this paper examines AI assistance and the shutdown problem, arguing  for three key \textit{necessary} requirements.  First, AI systems must represent uncertainty in order to defer to humans. Second, they must explicitly model the incompleteness of human preferences to capture the full complexity of human decision-making. Third, AI systems must employ lexicographic utilities (i.e. non-Archimedean preferences) to accurately model moral reasoning and ensure that shutdown commands receive strict priority over all other tasks. We establish these results under the following main assumptions: utilitarism, expected utility maximisation and Bayesian reasoning. %

Both these foundational assumptions and the original results by \citet{hadfield2017off} have faced criticism. Notably, \citet{zhi2025beyond} challenged utilitarianism and particularly the utilitarian assumption that AI should maximize human preferences. While we do not fully endorse utilitarianism, a comprehensive philosophical analysis of its role in AI alignment lies beyond this paper's scope.

Even when accepting utilitarianism, \citep{hadfield2017off}'s framework has been criticized on additional grounds. 
The assumptions underlying expected utility theory have been extensively debated in the literature \citep{bales2025will}. Many of these criticisms are well-established, and we address several in this work by relaxing the assumptions of independence, completeness, and Archimedeanity.
In what follows, we focus mainly on the concerns highlighted in  \cite{neth2025off}.

\paragraph*{Criticism 1} AI agents should not necessarily learn using precise probabilities. Instead, they should be using sets of probabilities, as they may not have enough information to assign precise probabilities.
\\ \emph{Our reply.} Within the AI and machine learning community, there appears to be a fundamental misunderstanding regarding (i) precise versus imprecise learning \cite{denoeux2020representations,caprio2024credal}; and  (ii) modelling precise versus imprecise credences/preferences, see for instance \cite{benavoli2024tutorial}. These are two separate issues. 
Precise Bayesian learning can be employed to learn imprecise credences, or equivalently, incomplete preferences,\footnote{The issue becomes even more subtle, as one can, for instance, (Bayesian) learn a lower prevision from data. This lower prevision corresponds to a lower expectation computed with respect to a closed convex set of probabilities. Since a lower expectation is a function over acts, it can be learned in the same way we learn 
$\nu$ in this paper. For more details, the reader can refer to \cite{Benavoli2023e}.}  as demonstrated  in practice in Example~\ref{ex:3}. 
The assumption of precise Bayesian learning is not inherently problematic. Evidence suggests in fact humans learn in an approximately Bayesian manner \cite{griffiths2024bayesian}, yet they frequently hold imprecise credences. The real limitation lies in the unrealistic assumption that human preferences can be described by a single precise utility model. Our analysis of the AI assistance game demonstrates that \textbf{incompleteness} is necessary to accurately model rational human behaviour.

\paragraph*{Criticism 2} The independence axiom of expected utility theory is too strong to capture human preferences.
\\ \emph{Our reply.} We share this perspective and contend that AI  should not model human's probabilities and utilities separately. Instead, whether independence holds in human's preferences should be inferred from the data. This is exactly our approach in this paper. Again, this concerns the way we model the human utility 
$\nu$ -- without expressing it as a product of beliefs and tastes -- rather than the way we learn it, which can remain fully probabilistic.

\paragraph*{Criticism 3} Bayesian updating is NP-hard. \\ \emph{Our reply.}
 While \citet{hadfield2017off} do not address this concern, we demonstrate that approximate Bayesian updating methods (Laplace, variational, and Monte Carlo approximations) are still better than not using Bayesian uncertainty modelling at all. Both humans and AI systems are thus bounded-rational agents that must  deal with the trade-off between limited computational resources and achieving rational decisions \cite{gershman2015computational}.

 \paragraph*{Criticism 4} AI will never have perfect access to exact human preferences.
\\ \emph{Our reply.} We model this from the beginning by assuming that humans are bounded-rational and may therefore mistakenly state the wrong preference with some probability. We adopt a standard error model leading to a probit likelihood, which is essentially the same error mechanism used across many machine learning tasks (classification and preference learning). These are often modelled using a logit likelihood, but probit and logit are practically very similar likelihoods. We also consider other mechanisms of bounded rationality.

 \paragraph*{Criticism 5} Humans may deceive or lie to AI.
\\ \emph{Our reply.} We have shown that, in an AI assistance game, humans do not have incentives to deceive or lie to the AI, since the AI is designed to maximise their preferences. In situations involving multiple agents -- more humans and more AIs -- it is reasonable to assume that deception may arise. A multi-agent extension is therefore an important direction for future work.

 \paragraph*{Criticism 6} Conditionalisation and time.
\\ \emph{Our reply.} The criticism is that an AI may not update its information using Bayesian updating, or may decide not to update its knowledge at all when receiving human preferences. This is an important criticism and remains an open issue in AI. How should AI agents continuously update their beliefs? How can this be done in an open-world environment where the definition of the state of the world may change over time (model-revision processes)? Finally, in this paper we have not discussed the temporal dimension of decision making. Interestingly, \citet{thornley2025shutdown} observes that, when time is taken into account, an AI agent may be willing to incur a cost in the present in order to maximise its expected future utility. This, in turn, may give rise to situations in which the agent's short-term behaviour is strategically determined by its long-term objectives, such as paying a cost  now to be able to manipulate the shutdown button in the future.

 \paragraph*{Further criticisms}
Another issue is raised by \citet{garber2025partially}, who assume that the AI system and the human may hold different beliefs about the state of the world. This assumption undermines the guarantees established in the original work by \citet{hadfield2017off} and in the present manuscript.
\\ \emph{Our reply.} 
The problem stems from the independence assumption, which allows the AI to learn beliefs about the states of the world independently of the human's beliefs ($p$) and then infer only the human's `tastes' ($u$) from observed preferences. This introduces a misalignment: even when their preferences (or tastes) $u$ are aligned, they may make different decisions if the human and AI hold different beliefs.
Consequently, the AI assistance game needs to be studied from a multi-agent perspective, as both the human and the AI will have different utilities $\nu$ arising from their distinct beliefs. Addressing this challenge requires shifting toward coalition-based solution concepts \cite{ieong2008bayesian}, which we leave for future work.

\vspace{0.2cm}
Numerous open research questions remain, and we should not assume that solving the AI alignment problem is straightforward. Its complexity is evidenced by counterexamples, impossibility theorems, and various paradoxes in decision theory, game theory, and social choice theory. The ``blanket is short'', and our paper neither claims nor aims to offer a definitive solution. However, as with many models developed in decision theory and economic behaviour, these simplified settings serve to clarify the essential components and aspects warranting attention. From this perspective, solving the shutdown problem appears to be a more tractable starting point. In particular, an important direction for future work is to determine how to implement the lexicographic utility system we discuss in the paper.

\appendix

\newpage
\appendix

\section{Useful results}
In this section, we have listed some useful results involving Gaussian integrals \cite{owen1980table} that we will use in the proofs.

\begin{lemma}
List of Gaussian integrals:
\begin{align}
\label{eq:ResGPDFint}
 \int _{-\infty }^{\infty }\phi (x)\phi (a+bx)\,dx&={\frac {1}{\sqrt {1+b^{2}}}}\phi \left({\frac {a}{\sqrt {1+b^{2}}}}\right),\\
\label{eq:Resprobitint}
 \int _{-\infty }^{\infty }\Phi (a+bx)\phi (x)\,dx&=\Phi \left({\frac {a}{\sqrt {1+b^{2}}}}\right),\\
 \label{eq:xResprobitint}
 \int _{-\infty }^{\infty }x\Phi (a+bx)\phi (x)\,dx&={\frac {b}{\sqrt {1+b^{2}}}}\phi \left({\frac {a}{\sqrt {1+b^{2}}}}\right),
\end{align}
where $\Phi,\phi$ are the CDF and, respectively, PDF of a standard normal distribution.
\end{lemma}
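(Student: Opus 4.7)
The plan is to prove the three identities in sequence, exploiting the fact that each one can be deduced either from the previous one or from a direct Gaussian-algebra computation. Since these are standard but the excerpt asks for a clean treatment, I would proceed as follows.

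For the first identity, I would simply carry out the completion-of-the-square trick. Writing $\phi(x)\phi(a+bx)=\frac{1}{2\pi}\exp\bigl(-\tfrac{1}{2}[x^2+(a+bx)^2]\bigr)$ and collecting the exponent as $(1+b^2)x^2+2abx+a^2$, I would complete the square in $x$ to obtain $(1+b^2)\bigl(x+\tfrac{ab}{1+b^2}\bigr)^2+\tfrac{a^2}{1+b^2}$. The inner Gaussian integrates to $\sqrt{2\pi/(1+b^2)}$, and after cancelling constants the remaining factor is exactly $\tfrac{1}{\sqrt{1+b^2}}\phi\bigl(\tfrac{a}{\sqrt{1+b^2}}\bigr)$. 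This is pure algebra, with no subtlety.

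For the second identity, I would give the short probabilistic proof: let $X,Z$ be independent standard normal variables, so that
\begin{equation*}
\int_{-\infty}^{\infty}\Phi(a+bx)\phi(x)\,dx
= \mathbb{E}\bigl[\Phi(a+bX)\bigr]
= \Pr(Z\le a+bX)
= \Pr(Z-bX\le a).
\end{equation*}
Since $Z-bX\sim N(0,1+b^2)$, the probability equals $\Phi\bigl(a/\sqrt{1+b^2}\bigr)$. As an alternative verification, one can note that differentiating the left-hand side in $a$ reduces to the first identity, and the boundary value at $a=-\infty$ is zero, yielding the same conclusion by integration.

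For the third identity, the cleanest route is integration by parts with $u=\Phi(a+bx)$ and $dv=x\phi(x)\,dx$, noting that $v=-\phi(x)$ because $\phi'(x)=-x\phi(x)$, and $du=b\,\phi(a+bx)\,dx$. The boundary term $\bigl[-\Phi(a+bx)\phi(x)\bigr]_{-\infty}^{\infty}$ vanishes since $\phi(x)\to 0$ at $\pm\infty$, leaving
\begin{equation*}
\int_{-\infty}^{\infty} x\,\Phi(a+bx)\,\phi(x)\,dx
= b\int_{-\infty}^{\infty}\phi(a+bx)\phi(x)\,dx,
\end{equation*}
and I would then invoke the first identity to finish.

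None of the three steps is really a major obstacle; the only thing to handle with minimal care is the vanishing of the boundary term in the integration-by-parts step (using that $\Phi$ is bounded and $\phi$ decays super-polynomially), and the justification for differentiating under the integral in the alternative proof of the second identity (dominated convergence with integrand bounded by $\phi(x)$). The proof is essentially a three-line calculation once the order (1) $\Rightarrow$ (2) and (1) $\Rightarrow$ (3) is fixed.
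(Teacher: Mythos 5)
Your three derivations are all correct: the completion of the square for \eqref{eq:ResGPDFint}, the probabilistic argument $\mathbb{E}[\Phi(a+bX)]=\Pr(Z-bX\le a)$ with $Z-bX\sim N(0,1+b^2)$ for \eqref{eq:Resprobitint}, and the integration by parts using $\phi'(x)=-x\phi(x)$ to reduce \eqref{eq:xResprobitint} to \eqref{eq:ResGPDFint}. The paper itself gives no proof of this lemma; it simply quotes these identities as standard results from a table of normal integrals (Owen, 1980), so your write-up supplies a self-contained justification that the paper omits, and the care you take with the vanishing boundary term and the domination argument is exactly what such a justification needs.
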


The \textit{inverse Mills ratio} states \cite{grimmett2001probability}:
\begin{lemma}
For $x \sim N(m,s^2)$, the following equality holds:
\begin{equation}
\label{eq:mills}
\begin{aligned}
E[xI_{\{a\leq x \leq b\}}]
&=m \left(\Phi\left(\tfrac{b-m}{s}\right)-\Phi\left(\tfrac{a-m}{s}\right)\right)-s \left(\phi\left(\tfrac{b-m}{s}\right)-\phi\left(\tfrac{a-m}{s}\right)\right).
\end{aligned}
\end{equation}
\end{lemma}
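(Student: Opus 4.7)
The plan is a direct computation by standardisation and the antiderivative identity $\frac{d}{du}\phi(u)=-u\,\phi(u)$. First I would write the expectation as a Lebesgue integral against the Gaussian density,
\[
E[xI_{\{a\le x\le b\}}]=\int_{a}^{b} x\,\frac{1}{s\sqrt{2\pi}}\exp\!\left(-\tfrac{(x-m)^2}{2s^2}\right)dx.
\]
Then I would perform the standardising change of variable $u=(x-m)/s$ (so $x=m+su$ and $dx=s\,du$), which transforms the integration limits into $\alpha=(a-m)/s$ and $\beta=(b-m)/s$ and yields
\[
E[xI_{\{a\le x\le b\}}]=\int_{\alpha}^{\beta}(m+su)\,\phi(u)\,du = m\int_{\alpha}^{\beta}\phi(u)\,du+s\int_{\alpha}^{\beta}u\,\phi(u)\,du.
\]

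Next I would evaluate each of the two pieces. The first integral is immediate from the definition of $\Phi$: it equals $\Phi(\beta)-\Phi(\alpha)$. For the second, I would use the elementary identity $\frac{d}{du}\phi(u)=-u\,\phi(u)$, so that $u\,\phi(u)$ has antiderivative $-\phi(u)$; this gives $\int_{\alpha}^{\beta} u\,\phi(u)\,du = \phi(\alpha)-\phi(\beta)$. Substituting back $\alpha=(a-m)/s$ and $\beta=(b-m)/s$ produces exactly
\[
E[xI_{\{a\le x\le b\}}]=m\!\left(\Phi\!\left(\tfrac{b-m}{s}\right)-\Phi\!\left(\tfrac{a-m}{s}\right)\right)-s\!\left(\phi\!\left(\tfrac{b-m}{s}\right)-\phi\!\left(\tfrac{a-m}{s}\right)\right),
\]
which is the claimed equality.

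There is no real obstacle here: the identity is a standard truncated-normal first moment, and the only analytic input is the antiderivative of $u\,\phi(u)$. The only points worth being careful about are (i) handling the improper limits $a=-\infty$ or $b=+\infty$ by noting $\phi(\pm\infty)=0$ and $\Phi(-\infty)=0$, $\Phi(+\infty)=1$, so the same formula specialises correctly, and (ii) checking the sign on the second term, which is easy to flip; writing $\int u\phi(u)\,du=-\phi(u)+C$ explicitly and evaluating between $\alpha$ and $\beta$ removes any ambiguity.
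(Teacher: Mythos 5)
Your proof is correct: the standardisation $u=(x-m)/s$, the split into $m\int\phi+s\int u\,\phi$, and the antiderivative identity $\tfrac{d}{du}\phi(u)=-u\,\phi(u)$ give exactly the stated formula, and your remarks about the improper limits and the sign of the second term are the right places to be careful. The paper itself does not prove this lemma — it is stated as the standard truncated-normal first moment (inverse Mills ratio) with a citation to a textbook — so your derivation is the natural self-contained argument one would supply, and there is nothing to compare against beyond noting that it matches the cited classical result.
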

From the above lemma, we can prove:
\begin{lemma}
For $x \sim N(m,s^2)$, 
\begin{equation}
\label{eq:absx}
\begin{aligned}
E[|x|]&=m \left(1-2\Phi\left(\tfrac{-m}{s}\right)\right)+2s \phi\left(\tfrac{-m}{s}\right).\\
\end{aligned}
\end{equation}
\end{lemma}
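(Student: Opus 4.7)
The plan is to derive the formula as an immediate corollary of the inverse Mills ratio identity \eqref{eq:mills}, by splitting $|x|$ according to the sign of $x$. Specifically, I will write $|x| = x\,I_{\{x\geq 0\}} - x\,I_{\{x\leq 0\}}$, so that
\begin{equation*}
E[|x|] = E[x\,I_{\{0\leq x < \infty\}}] - E[x\,I_{\{-\infty < x \leq 0\}}],
\end{equation*}
which reduces the task to two applications of \eqref{eq:mills} with the appropriate truncation limits.

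First I would apply \eqref{eq:mills} with $a=0$, $b=+\infty$, using $\Phi((b-m)/s)\to 1$ and $\phi((b-m)/s)\to 0$ as $b\to\infty$, to obtain
\begin{equation*}
E[x\,I_{\{0\leq x\}}] = m\bigl(1-\Phi(-m/s)\bigr) + s\,\phi(-m/s).
\end{equation*}
Next I would apply \eqref{eq:mills} with $a=-\infty$, $b=0$, using $\Phi((a-m)/s)\to 0$ and $\phi((a-m)/s)\to 0$ as $a\to-\infty$, to obtain
\begin{equation*}
E[x\,I_{\{x\leq 0\}}] = m\,\Phi(-m/s) - s\,\phi(-m/s).
\end{equation*}

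Subtracting the second from the first and collecting terms yields
\begin{equation*}
E[|x|] = m - 2m\,\Phi(-m/s) + 2s\,\phi(-m/s) = m\bigl(1-2\Phi(-m/s)\bigr) + 2s\,\phi(-m/s),
\end{equation*}
which is precisely \eqref{eq:absx}. The derivation is essentially a two-line bookkeeping exercise, so there is no real obstacle; the only point requiring a mild bit of care is handling the boundary limits at $\pm\infty$ (verifying that the $\phi$ and $\Phi$ terms vanish or tend to one, respectively) and keeping the signs of the two $s\,\phi(-m/s)$ contributions straight, since they add rather than cancel.
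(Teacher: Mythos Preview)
Your proof is correct and follows essentially the same route as the paper: both split $|x|=x\,I_{\{x\geq 0\}}-x\,I_{\{x<0\}}$, apply the truncated-mean identity \eqref{eq:mills} to each piece with the obvious limits, and subtract. The only cosmetic difference is that you use $\{x\leq 0\}$ rather than $\{x<0\}$, which is immaterial for a continuous distribution.
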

\begin{proof}
Rewrite $|x|=xI_{\{x \geq 0\}}-xI_{\{x<0\}}$ and apply \eqref{eq:mills}:
\begin{equation}
\label{eq:millsabs0}
\begin{aligned}
&E[xI_{\{x \geq 0\}}]=m \left(1-\Phi\left(\tfrac{-m}{s}\right)\right)-s \left(-\phi\left(\tfrac{-m}{s}\right)\right)
\end{aligned}
\end{equation}
and
\begin{equation}
\label{eq:millsabs1}
\begin{aligned}
&E[xI_{\{x< 0\}}]=m \left(\Phi\left(\tfrac{-m}{s}\right)\right)-s \left(\phi\left(\tfrac{-m}{s}\right)\right)
\end{aligned}
\end{equation}
and, therefore,
\begin{equation}
\label{eq:millsabs2}
\begin{aligned}
E[|x|]&=m \left(1-2\Phi\left(\tfrac{-m}{s}\right)\right)+2s \phi\left(\tfrac{-m}{s}\right).
\end{aligned}
\end{equation}

\end{proof}
Finally, we prove the following two main lemmas, which we will use to prove the results in the paper.
\begin{lemma}
\label{lem:probitint}
Consider $x,o \in \mathcal{X}$ and assume that
\begin{equation}
\label{eq:multiprior3proof}
\begin{bmatrix}
\nu(x) \\
\nu(o)
\end{bmatrix} \sim N\left(\begin{bmatrix}
\mu_p(x) \\
\mu_p(o)
\end{bmatrix},\begin{bmatrix}
K_p(x,x) & K_p(x,o) \\
K_p(o,x) & K_p(o,o)
\end{bmatrix}\right),
\end{equation}
and $n(x),n(o) \sim N(0,\sigma^2)$ (independent noise). Then we have that
\begin{equation}
\label{eq:res}
\begin{aligned}
&E[I_{\{\nu(x)+n(x)>\nu(o)+n(o)\}}]=\Phi\left(\tfrac{\mu_p(x)-
\mu_p(o)}{\sqrt{2\sigma^2+q^2}}\right),
 \end{aligned}
\end{equation}
where $$
q^2=[1,-1]\begin{bmatrix}
K_p(x,x) & K_p(x,o) \\
K_p(o,x) & K_p(o,o)
\end{bmatrix}\begin{bmatrix}
1\\
-1
\end{bmatrix}=K_p(x,x)-2K_p(x,o)+K_p(o,o).
$$
\end{lemma}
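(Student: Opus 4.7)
The plan is to collapse the four jointly Gaussian quantities into a single scalar Gaussian whose sign-probability is exactly the right-hand side. Define $Z \coloneq (\nu(x)+n(x)) - (\nu(o)+n(o))$. Since $\nu(x),\nu(o)$ are jointly Gaussian as in \eqref{eq:multiprior3proof} and $n(x),n(o)$ are independent $N(0,\sigma^2)$ variables independent of $\nu$, the variable $Z$ is an affine combination of independent / jointly Gaussian terms and is therefore Gaussian. Its mean is $\mu_p(x)-\mu_p(o)$ and its variance is the sum of the contribution from the latent part, computed via the quadratic form $[1,-1]\,\mathrm{Cov}(\nu(x),\nu(o))\,[1,-1]^\top = q^2$, and from the independent noise part, which contributes $\sigma^2+\sigma^2 = 2\sigma^2$. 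Hence $Z \sim N(\mu_p(x)-\mu_p(o),\,q^2+2\sigma^2)$.

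Then the indicator event $\{\nu(x)+n(x) > \nu(o)+n(o)\}$ is exactly $\{Z>0\}$, so
\begin{equation*}
E[I_{\{Z>0\}}] \;=\; P(Z>0) \;=\; \Phi\!\left(\tfrac{\mu_p(x)-\mu_p(o)}{\sqrt{q^2+2\sigma^2}}\right),
\end{equation*}
which is \eqref{eq:res}. This is the cleanest route, because the core computation reduces to a one-line application of the fact that a linear combination of jointly Gaussian random variables is Gaussian.

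As an alternative that mirrors the identities stated immediately above the lemma, I could instead iterate expectations: condition on $(\nu(x),\nu(o))$ and integrate out the noise first. Since $n(x)-n(o) \sim N(0,2\sigma^2)$ independently of $\nu$, the conditional probability is $\Phi\!\left(\tfrac{\nu(x)-\nu(o)}{\sqrt{2\sigma^2}}\right)$. Marginalising over $\nu(x)-\nu(o) \sim N(\mu_p(x)-\mu_p(o),q^2)$ and applying the Gaussian probit integral \eqref{eq:Resprobitint} with the substitutions $a=\tfrac{\mu_p(x)-\mu_p(o)}{\sqrt{2\sigma^2}}$ and $b=\tfrac{q}{\sqrt{2\sigma^2}}$ reproduces the same expression. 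Either route works; the first is more transparent.

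No step is a real obstacle: the only care required is to verify that the quadratic form giving the latent variance equals $K_p(x,x)-2K_p(x,o)+K_p(o,o)$ (using $K_p(x,o)=K_p(o,x)$, which holds because $K_p$ is a covariance kernel), and that the noise contribution is additive because $n(x),n(o)$ are independent of each other and of $\nu$. Both follow immediately from the stated modelling assumptions.
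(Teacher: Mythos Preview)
Your proof is correct. Your primary route---collapsing everything into the single scalar $Z=(\nu(x)+n(x))-(\nu(o)+n(o))$, observing it is Gaussian with mean $\mu_p(x)-\mu_p(o)$ and variance $q^2+2\sigma^2$, and reading off $P(Z>0)$---is more direct than what the paper does. The paper's proof is precisely your \emph{alternative}: it first integrates out the noise to obtain $\Phi\!\big((\nu(x)-\nu(o))/(\sqrt{2}\sigma)\big)$, then changes variables to $z=\nu(x)-\nu(o)\sim N(\mu_p(x)-\mu_p(o),q^2)$, standardises, and applies the probit integral identity \eqref{eq:Resprobitint}. Your first approach buys brevity and avoids invoking \eqref{eq:Resprobitint} at all; the paper's iterated-expectation route has the virtue of matching the pattern used in the neighbouring lemmas (e.g.\ Lemma~\ref{lem:preference_learn}), where one cannot simply reduce to a single scalar and the conditional-then-marginalise structure is genuinely needed.
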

\begin{proof}
From the definition of Gaussian CDF, we have that
 $$
 \int I_{\{\nu(x)+n(x)>\nu(o)+n(o)\}} N(n(x);0,\sigma^2) N(n(o);0,\sigma^2)dn(x)dn(o)=\Phi\left(\tfrac{\nu(x)-\nu(o)}{\sqrt{2}\sigma}\right)
 $$
 and
 $$
 \begin{aligned}
&\int \Phi\left(\tfrac{\nu(x)-\nu(o)}{\sqrt{2}\sigma}\right) N\left(\begin{bmatrix}
\nu(x) \\
\nu(o)
\end{bmatrix};\begin{bmatrix}
\mu_p(x) \\
\mu_p(o)
\end{bmatrix},\begin{bmatrix}
K_p(x,x) & K_p(x,o) \\
K_p(o,x) & K_p(o,o)
\end{bmatrix}\right)d \nu(x)d\nu(o)\\
&=\int \Phi\left(\tfrac{z}{\sqrt{2}\sigma}\right)N(z;\mu_p(x)-
\mu_p(o),q^2)dz\\
&=\int \Phi\left(\tfrac{qz'+\mu_p(x)-
\mu_p(o)}{\sqrt{2}\sigma}\right)N(z';0,1)dz'=\Phi\left(\tfrac{\mu_p(x)-
\mu_p(o)}{\sqrt{2\sigma^2+q^2}}\right).\\
 \end{aligned}
$$
\end{proof}

\begin{lemma}
\label{lem:preference_learn}
Consider $x,o \in \mathcal{X}$ and assume that 
\begin{equation}
\label{eq:multiprior3proof}
\begin{bmatrix}
\nu(x) \\ 
\nu(o) 
\end{bmatrix} \sim N\left(\begin{bmatrix}
\mu_p(x) \\ 
\mu_p(o) 
\end{bmatrix},\begin{bmatrix}
K_p(x,x) & K_p(x,o) \\ 
K_p(o,x) & K_p(o,o) 
\end{bmatrix}\right),
\end{equation}
and $n(x),n(o) \sim N(0,\sigma^2)$ (independent noise). Then we have that
\begin{equation}
\label{eq:res}
\begin{aligned}
E[\nu(x)I_{\{\nu(x)+n(x)>\nu(o)+n(o)\}}]&=\mu_p(x)\left(1-\Phi\left(\tfrac{(\mu_p(o)-\mu_p(x))}{\sqrt{K_p(x,x)+2\sigma^2+K_p(o,o)-2K_p(x,o)}}\right)\right)\\
& + \tfrac{K_p(x,x)-K_p(x,o)}{\sqrt {K_p(x,x)+2\sigma^2+K_p(o,o)-2K_p(x,o)}}\\
&~~\cdot \phi\left(\tfrac{\mu_p(o)-\mu_p(x)}{\sqrt{K_p(x,x)+2\sigma^2+K_p(o,o)-2K_p(x,o)}}\right)
\end{aligned}
\end{equation}

\end{lemma}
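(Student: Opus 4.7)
The plan is to first integrate out the Gaussian noise terms $n(x), n(o)$ conditionally on $(\nu(x),\nu(o))$. Since $n(x)-n(o) \sim N(0,2\sigma^2)$ is independent of everything else, we have
\begin{equation*}
P\bigl(\nu(x)+n(x) > \nu(o)+n(o)\,\big|\,\nu(x),\nu(o)\bigr) = \Phi\!\left(\tfrac{\nu(x)-\nu(o)}{\sqrt{2}\sigma}\right),
\end{equation*}
so the quantity to compute reduces to $E\!\left[\nu(x)\,\Phi\!\left(\tfrac{\nu(x)-\nu(o)}{\sqrt{2}\sigma}\right)\right]$ under the bivariate normal \eqref{eq:multiprior3proof}.

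Next I would make the change of variables $Z := \nu(x)-\nu(o)$, which (jointly with $\nu(x)$) is bivariate normal with $Z \sim N(\mu_p(x)-\mu_p(o),\,q^2)$ where $q^2 = K_p(x,x)+K_p(o,o)-2K_p(x,o)$, and covariance $\mathrm{Cov}(\nu(x),Z) = K_p(x,x)-K_p(x,o)$. By tower property I would condition on $Z$ and use the conditional mean
\begin{equation*}
E[\nu(x)\mid Z] = \mu_p(x) + \tfrac{K_p(x,x)-K_p(x,o)}{q^2}\bigl(Z - (\mu_p(x)-\mu_p(o))\bigr),
\end{equation*}
giving two pieces: a constant times $E[\Phi(Z/(\sqrt{2}\sigma))]$, and a slope times $E[(Z-(\mu_p(x)-\mu_p(o)))\,\Phi(Z/(\sqrt{2}\sigma))]$.

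For both pieces I would standardise $W = (Z-(\mu_p(x)-\mu_p(o)))/q \sim N(0,1)$ and directly apply the Gaussian integral identities already listed: \eqref{eq:Resprobitint} handles the first piece and yields $\Phi\!\bigl(\tfrac{\mu_p(x)-\mu_p(o)}{\sqrt{2\sigma^2+q^2}}\bigr) = 1 - \Phi\!\bigl(\tfrac{\mu_p(o)-\mu_p(x)}{\sqrt{2\sigma^2+q^2}}\bigr)$, producing the first term of the claim. For the second piece, \eqref{eq:xResprobitint} with $a=(\mu_p(x)-\mu_p(o))/(\sqrt{2}\sigma)$ and $b=q/(\sqrt{2}\sigma)$ produces $\tfrac{q}{\sqrt{2\sigma^2+q^2}}\phi\!\bigl(\tfrac{\mu_p(x)-\mu_p(o)}{\sqrt{2\sigma^2+q^2}}\bigr)$, and after multiplying by the slope $(K_p(x,x)-K_p(x,o))/q^2$ and by the $q$ from the change of variables, the $q$'s cancel down to a single $1/\sqrt{2\sigma^2+q^2}$, matching the second term once I use the symmetry $\phi(\cdot)=\phi(-\cdot)$ to flip the sign inside $\phi$.

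The calculation is essentially mechanical once the conditioning on $Z$ is set up, and the main bookkeeping obstacle is just keeping track of the scaling factors $q$, $\sqrt{2}\sigma$ and $\sqrt{2\sigma^2+q^2}$ so that the coefficients in front of $\Phi$ and $\phi$ come out with exactly the form stated. No additional machinery beyond Lemma \ref{lem:probitint} and the Gaussian integrals in \eqref{eq:Resprobitint}--\eqref{eq:xResprobitint} is needed; in particular the analogous identity for $E[\nu(o)I_{\{\nu(x)+n(x)>\nu(o)+n(o)\}}]$ (used implicitly in Lemma \ref{lem:1} and Lemma \ref{new:lemma}) follows by the symmetric argument, swapping the roles of $x$ and $o$ and flipping the sign of the event.
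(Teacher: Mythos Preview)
Your argument is correct and is actually a cleaner route than the paper's. The paper does \emph{not} integrate out the noise first and condition on $Z=\nu(x)-\nu(o)$; instead it conditions on $\nu(o)$ (and on $n(o)-n(x)$), applies the inverse Mills ratio \eqref{eq:mills} to obtain $E[\nu(x)I_{\{\nu(x)>\nu(o)+n(o)-n(x)\}}\mid \nu(o),n(o)-n(x)] = m_1(1-\Phi(\cdot))+\sigma_1\phi(\cdot)$ with $m_1=\mu_p(x)+\tfrac{K_p(x,o)}{K_p(o,o)}(\nu(o)-\mu_p(o))$, and then laboriously integrates each of the resulting pieces over the noise and over $\nu(o)$, eventually simplifying via the algebraic identity $K_p(o,o)\sigma_1^2+(K_p(o,o)-K_p(x,o))^2=K_p(o,o)(K_p(x,x)+2\sigma^2+K_p(o,o)-2K_p(x,o))$.

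Your choice to condition on $Z$ rather than on $\nu(o)$ is the key improvement: since the event (after the noise is averaged out) depends on $(\nu(x),\nu(o))$ only through $Z$, the regression $E[\nu(x)\mid Z]$ linearises the problem into a single one-dimensional Gaussian integral, so \eqref{eq:Resprobitint} and \eqref{eq:xResprobitint} apply directly and the messy denominator simplification the paper needs never arises. The paper's approach, on the other hand, generalises more readily to the threshold variant $I_{\{\nu(x)>\nu(o)+\sigma\}}$ used later in Lemma~\ref{lem:2}, where the Mills-ratio conditioning is reused essentially verbatim.
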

\begin{proof}
We will compute $E[\nu(x)I_{\{\nu(x)>\nu(o)+n(o)-n(x)\}}]$ in two steps. First, we assume that $\nu(o),n(o)-n(x)$ are given and, therefore, we condition the joint PDF of $\nu(x),n(x),\nu(o),n(o)$ on $\nu(o),n(o)$. Since only the variables $\nu(x),\nu(o)$ are dependent, then we have
\begin{equation}
\begin{aligned}
&p(\nu(x)|\nu(o))={\scriptstyle N\left(\nu(x);\mu_p(x)+\tfrac{K_p(x,o)}{K_p(o,o)}(\nu(o)-\mu_p(o)),
K_p(x,x)-\tfrac{K_p^2(x,o)}{K_p(o,o)}\right)}.
\end{aligned}
\end{equation}

Therefore, we can apply \eqref{eq:mills} conditionally on  $\nu(o),n(o)-n(x)$ which leads to
\begin{equation}
\label{eq:millscond}
\begin{aligned}
&E[\nu(x)I_{\{\nu(x)>\nu(o)+n(o)-n(x)\}}|\nu(o),n(o),n(x)]\\
&=m_1 \left(1-\Phi\left(\tfrac{\nu(o)+n(o)-n(x)-m_1}{\sigma_1}\right)\right)+\sigma_1 \phi\left(\tfrac{\nu(o)+n(o)-n(x)-m_1}{\sigma_1}\right)
\end{aligned}
\end{equation}
Now observe that
\begin{equation}
\label{eq:intm1}
\begin{aligned}
E[m_1]&=\int \left(\mu_p(x)+\tfrac{K_p(x,o)}{K_p(o,o)}(\nu(o)-\mu_p(o))\right)\\
&N(\nu(o);\mu_p(o),K_p(o,o))d\nu(o)
d\nu(o)=\mu_p(x).
\end{aligned}
\end{equation}
and
\begin{equation}
\label{eq:intm1}
\begin{aligned}
&E\left[m_1\Phi\left(\tfrac{\nu(o)+n(o)-n(x)-m_1}{\sigma_1}\right)\right]\\
&=E\left[\left(\mu_p(x)+\tfrac{K_p(x,o)}{K_p(o,o)}(\nu(o)-\mu_p(o))\right)\Phi\left(\tfrac{\nu(o)+n(o)-n(x)-m_1}{\sigma_1}\right)\right]\\
&=\left(\mu_p(x)-\tfrac{K_p(x,o)}{K_p(o,o)}\mu_p(o)\right) E\left[\Phi\left(\tfrac{\nu(o)+n(o)-n(x)-m_1}{\sigma_1}\right)\right]\\
&+\tfrac{K_p(x,o)}{K_p(o,o)}E\left[ \nu(o)\Phi\left(\tfrac{\nu(o)+n(o)-n(x)-m_1}{\sigma_1}\right)\right]
\end{aligned}
\end{equation}
The expectations are with respect to $\nu(o),n(o),n(x)$.
Now we use \eqref{eq:Resprobitint} to get the following result:
\begin{equation}
\label{eq:probitint1}
\begin{aligned}
\int \Phi\left(\tfrac{\nu(o)+n(o)-n(x)-m_1}{\sigma_1}\right)N(n(o)-n(x);0,2\sigma^2)dn(o)
&=\Phi\left(\tfrac{\nu(o)-m_1}{\sqrt{\sigma_1^2+2\sigma^2}}\right),
\end{aligned}
\end{equation}
and so:
\begin{equation}
\label{eq:probitint2}
\begin{aligned}
E\left[\Phi\left(\tfrac{\nu(o)+n(o)-n(x)-m_1}{\sigma_1}\right)\right]&=\int \Phi\left(\tfrac{\nu(o)-m_1}{\sqrt{\sigma_1^2+2\sigma^2}}\right)N(\nu(o);\mu_p(o),K_p(o,o))d\nu(o)\\
&=\int \Phi\left(\tfrac{\nu(o)\tfrac{K_p(o,o)-K_p(x,o)}{K_p(o,o)}+m_2}{\sqrt{\sigma_1^2+2\sigma^2}}\right)\\
&~~~~~~~~N(\nu(o);\mu_p(o),K_p(o,o))d\nu(o)\\
&=\int \Phi\left(\tfrac{z\tfrac{K_p(o,o)-K_p(x,o)}{\sqrt{K_p(o,o)}}+m_2+\tfrac{K_p(o,o)-K_p(x,o)}{K_p(o,o)}\mu_p(o)}{\sqrt{\sigma_1^2+2\sigma^2}}\right)\\
&~~~~~~~~N(z;0,1)dz\\
&=\Phi\left(\tfrac{\mu_p(o)\tfrac{K_p(o,o)-K_p(x,o)}{\sqrt{K_p(o,o)}}+m_2\sqrt{K_p(o,o)}}{\sqrt{K_p(o,o)(\sigma_1^2+2\sigma^2)+(K_p(o,o)-K_p(x,o))^2}}\right)\\
&=\Phi\left(\tfrac{\sqrt{K_p(o,o)}(\mu_p(o)-\mu_p(x))}{\sqrt{K_p(o,o)(\sigma_1^2+2\sigma^2)+(K_p(o,o)-K_p(x,o))^2}}\right),\\
\end{aligned}
\end{equation}
with $m_2=\tfrac{-K_p(o,o)\mu_p(x)+K_p(x,o)\mu_p(o)}{
K_p(o,o)}$. Similarly, we have that
\begin{equation}
\label{eq:probitint2bis}
\begin{aligned}
&E\left[\nu(o)\Phi\left(\tfrac{\nu(o)+n(o)-n(x)-m_1}{\sigma_1}\right)\right]\\
&=\int \nu(o) \Phi\left(\tfrac{\nu(o)-m_1}{\sqrt{\sigma_1^2+2\sigma^2}}\right)N(\nu(o);\mu_p(o),K_p(o,o))d\nu(o)\\
&=\int \nu(o) \Phi\left(\tfrac{\nu(o)\tfrac{K_p(o,o)-K_p(x,o)}{K_p(o,o)}+m_2}{\sqrt{\sigma_1^2+2\sigma^2}}\right)\\
&~~~~~~~~N(\nu(o);\mu_p(o),K_p(o,o))d\nu(o)\\
&=\int \Phi\left(\tfrac{z\tfrac{K_p(o,o)-K_p(x,o)}{\sqrt{K_p(o,o)}}+m_2+\tfrac{K_p(o,o)-K_p(x,o)}{K_p(o,o)}\mu_p(o)}{\sqrt{\sigma_1^2+2\sigma^2}}\right)\\
& ~~~~\left(z \sqrt{K_p(o,o)}+\mu_p(o)\right)N(z;0,1)dz\\
\end{aligned}
\end{equation}
We separate the sum:
\begin{equation}
\label{eq:probitint2bis1}
\begin{aligned}
&\int \Phi\left(\tfrac{z\tfrac{K_p(o,o)-K_p(x,o)}{\sqrt{K_p(o,o)}}+m_2+\tfrac{K_p(o,o)-K_p(x,o)}{K_p(o,o)}\mu_p(o)}{\sqrt{\sigma_1^2+2\sigma^2}}\right)\\
& ~~~~\mu_p(o) N(z;0,1)dz\\
&=\mu_p(o)\Phi\left(\tfrac{\mu_p(o)\tfrac{K_p(o,o)-K_p(x,o)}{\sqrt{K_p(o,o)}}+m_2\sqrt{K_p(o,o)}}{\sqrt{K_p(o,o)(\sigma_1^2+2\sigma^2)+(K_p(o,o)-K_p(x,o))^2}}\right)\\
&=\mu_p(o)\Phi\left(\tfrac{\sqrt{K_p(o,o)}(\mu_p(o)-\mu_p(x))}{\sqrt{K_p(o,o)(\sigma_1^2+2\sigma^2)+(K_p(o,o)-K_p(x,o))^2}}\right).\\
\end{aligned}
\end{equation}
The other term in the sum
\begin{equation}
\label{eq:probitint2bis2}
\begin{aligned}
&\int \Phi\left(\tfrac{z\tfrac{K_p(o,o)-K_p(x,o)}{\sqrt{K_p(o,o)}}+m_2+\tfrac{K_p(o,o)-K_p(x,o)}{K_p(o,o)}\mu_p(o)}{\sqrt{\sigma_1^2+2\sigma^2}}\right)\\
& ~~~~z \sqrt{K_p(o,o)} N(z;0,1)dz\\
&=\tfrac{\sqrt{K_p(o,o)}(K_p(o,o)-K_p(x,o))}{\sqrt{K_p(o,o)(\sigma_1^2+\sigma^2)+(K_p(o,o)-K_p(x,o))^2}}\\
&\phi\left(\tfrac{\mu_p(o)\tfrac{K_p(o,o)-K_p(x,o)}{\sqrt{K_p(o,o)}}+m_2\sqrt{K_p(o,o)}}{\sqrt{K_p(o,o)(\sigma_1^2+2\sigma^2)+(K_p(o,o)-K_p(x,o))^2}}\right)\\
&=\tfrac{\sqrt{K_p(o,o)}(K_p(o,o)-K_p(x,o))}{\sqrt{K_p(o,o)(\sigma_1^2+2\sigma^2)+(K_p(o,o)-K_p(x,o))^2}}\\
&\phi\left(\tfrac{\sqrt{K_p(o,o)}(\mu_p(o)-\mu_p(x))}{\sqrt{K_p(o,o)(\sigma_1^2+2\sigma^2)+(K_p(o,o)-K_p(x,o))^2}}\right).\\
\end{aligned}
\end{equation}
where we have used \eqref{eq:xResprobitint}. 
Finally, we consider
\begin{equation}
\label{eq:probitint3pre}
\begin{aligned}
&\int \phi\left(\tfrac{\nu(o)+n(o)-n(x)-m_1}{\sigma_1}\right)N(n(o)-n(x);0,2\sigma^2)dn(o)\\
&=\tfrac{\sigma_1}{\sqrt{\sigma_1^2+2\sigma^2}}\phi\left(\tfrac{\nu(o)-m_1}{\sqrt{\sigma_1^2+2\sigma^2}}\right),\\
\end{aligned}
\end{equation}
where the last equality follows by \eqref{eq:ResGPDFint}. We use \eqref{eq:probitint2} to get:
\begin{equation}
\label{eq:probitint3} 
\begin{aligned}
&\int \tfrac{\sigma_1}{\sqrt{\sigma_1^2+2\sigma^2}}\phi\left(\tfrac{\nu(o)-m_1}{\sqrt{\sigma_1^2+2\sigma^2}}\right)N(\nu(o);\mu_p(o),K_p(o,o))d\nu(o)\\
&=\int \tfrac{\sigma_1}{\sqrt{\sigma_1^2+2\sigma^2}}\phi\left(\tfrac{\nu(o)\tfrac{K_p(o,o)-K_p(x,o)}{K_p(o,o)}+m_2}{\sqrt{\sigma_1^2+2\sigma^2}}\right)\\
&N(\nu(o);\mu_p(o),K_p(o,o))d\nu(o)\\
&=\int \tfrac{\sigma_1}{\sqrt{\sigma_1^2+2\sigma^2}}\phi\left(\tfrac{z\tfrac{K_p(o,o)-K_p(x,o)}{\sqrt{K_p(o,o)}}+m_2+\tfrac{K_p(o,o)-K_p(x,o)}{K_p(o,o)}\mu_p(o)}{\sqrt{\sigma_1^2+2\sigma^2}}\right)\\
&N(z;0,1)d\nu(o)\\
&=\tfrac{\sqrt{K_p(o,o)}\sigma_1}{\sqrt{K_p(o,o)(\sigma_1^2+2\sigma^2)+(K_p(o,o)-K_p(x,o))^2}}\\
&\phi\left(\tfrac{\sqrt{K_p(o,o)}(\mu_p(o)-\mu_p(x))}{\sqrt{K_p(o,o)(\sigma_1^2+2\sigma^2)+(K_p(o,o)-K_p(x,o))^2}}\right).\\
\end{aligned}
\end{equation}

Therefore, from \eqref{eq:millscond} and \eqref{eq:probitint2}--\eqref{eq:probitint3}, we obtain
\begin{equation}
\label{eq:millsthird}
\begin{aligned}
&E[(\nu(x)+n(x))I_{\{\nu(x)+n(x)>\nu(o)+n(o)\}}]=\mu_p(x)\\
&-\left(\mu_p(x)-\tfrac{K_p(x,o)}{K_p(o,o)}\mu_p(o)\right)\\
&\cdot \Phi\left(\tfrac{\sqrt{K_p(o,o)}(\mu_p(o)-\mu_p(x))}{\sqrt{K_p(o,o)(\sigma_1^2+2\sigma^2)+(K_p(o,o)-K_p(x,o))^2}}\right)\\
& - \tfrac{K_p(x,o)}{K_p(o,o)} \mu_p(o)\Phi\left(\tfrac{\sqrt{K_p(o,o)}(\mu_p(o)-\mu_p(x))}{\sqrt{K_p(o,o)(\sigma_1^2+2\sigma^2)+(K_p(o,o)-K_p(x,o))^2}}\right)\\
& - \tfrac{K_p(x,o)}{K_p(o,o)} \tfrac{\sqrt{K_p(o,o)}(K_p(o,o)-K_p(x,o))}{\sqrt{K_p(o,o)(\sigma_1^2+2\sigma^2)+(K_p(o,o)-K_p(x,o))^2}}\\
&\phi\left(\tfrac{\sqrt{K_p(o,o)}(\mu_p(o)-\mu_p(x))}{\sqrt{K_p(o,o)(\sigma_1^2+2\sigma^2)+(K_p(o,o)-K_p(x,o))^2}}\right)\\
&+\tfrac{\sqrt{K_p(o,o)}\sigma^2_1}{\sqrt{K_p(o,o)(\sigma_1^2+2\sigma^2)+(K_p(o,o)-K_p(x,o))^2}}\\
&\phi\left(\tfrac{\sqrt{K_p(o,o)}(\mu_p(o)-\mu_p(x))}{\sqrt{K_p(o,o)(\sigma_1^2+2\sigma^2)+(K_p(o,o)-K_p(x,o))^2}}\right)\\
&=\mu_p(x)\left(1-\Phi\left(\tfrac{\sqrt{K_p(o,o)}(\mu_p(o)-\mu_p(x))}{\sqrt{K_p(o,o)(\sigma_1^2+2\sigma^2)+(K_p(o,o)-K_p(x,o))^2}}\right)\right)\\
& + \tfrac{\sqrt{K_p(o,o)}(K_p(x,x)-K_p(x,o))}{\sqrt{K_p(o,o)(\sigma_1^2+2\sigma^2)+(K_p(o,o)-K_p(x,o))^2}}\\
&\phi\left(\tfrac{\sqrt{K_p(o,o)}(\mu_p(o)-\mu_p(x))}{\sqrt{K_p(o,o)(\sigma_1^2+2\sigma^2)+(K_p(o,o)-K_p(x,o))^2}}\right)\\
\end{aligned}
\end{equation}

Note that
\begin{equation}
\label{eq:var00}
\begin{aligned}
&K_p(o,o)(\sigma_1^2+2\sigma^2)+(K_p(o,o)-K_p(x,o))^2\\
&=K_p(o,o)\left(K_p(x,x)-\tfrac{K_p^2(x,o)}{K_p(o,o)}+2\sigma^2\right)\\
&+(K_p(o,o)-K_p(x,o))^2\\
&=K_p(o,o)K_p(x,x)-K_p^2(x,o)+2\sigma^2K_p(o,o)\\
&+K_p^2(o,o)+K_p^2(x,o)-2K_p(x,o)K_p(o,o)\\
&=K_p(o,o)(K_p(x,x)+2\sigma^2+K_p(o,o)-2K_p(x,o)).\\
\end{aligned}
\end{equation}
Therefore, we have that
\begin{equation}
\label{eq:millsthird11}
\begin{aligned}
E[\nu(x)I_{\{\nu(x)+n(x)>\nu(o)+n(o)\}}]&=\mu_p(x)\left(1-\Phi\left(\tfrac{(\mu_p(o)-\mu_p(x))}{\sqrt{K_p(x,x)+2\sigma^2+K_p(o,o)-2K_p(x,o)}}\right)\right)\\
& + \tfrac{K_p(x,x)-K_p(x,o)}{\sqrt {K_p(x,x)+2\sigma^2+K_p(o,o)-2K_p(x,o)}}\\
&\phi\left(\tfrac{\mu_p(o)-\mu_p(x)}{\sqrt{K_p(x,x)+2\sigma^2+K_p(o,o)-2K_p(x,o)}}\right)
\end{aligned}
\end{equation}
    \end{proof}

\section{Proofs}
\label{app:proof}
We now move on to the main results.

\paragraph{Proof of Lemma \ref{lem:1}}
The expected value for $DEF$  follows from Lemma \ref{lem:preference_learn} by summing $E[\nu(x)I_{\{\nu(x)+n(x)>\nu(o)+n(o)\}}]$ and $E[\nu(o)I_{\{\nu(x)+n(x)<\nu(o)+n(o)\}}]$. The expected values for $IMM,DoN$ are straightforward.

\paragraph{Proof of Proposition \ref{prop:1}}
The results follow from Lemma \ref{lem:1}  by considering whether or not the limits $K_0(x,x),K_0(o,o),K_0(o,x) \rightarrow 0$ and $\sigma \rightarrow 0$ are taken. We make the assumption that whenever $K_o(x,x),K_o(o,o),K_o(o,x) \rightarrow 0$  it implies that $K_p(x,x),K_p(o,o),K_p(o,x) \rightarrow 0$ (a-priori we have a Dirac's delta). Since $K_p$ depends on both $K_0$ and $\sigma$, we always take the limit with respect to $\sigma$ first.

 If $S$ is \textbf{rational} and $R$ has \textbf{no uncertainty}, then the expected payoffs can be computed from \eqref{eq:DEFexp}, \eqref{eq:IMMexp} and \eqref{eq:OFFexp}.
  The values are
 $E[DEF]=\max(\mu_p(x),\mu_p(o))$
  $E[IMM]=\mu_p(x)$ and $E[DoN]=\mu_p(o)$. Therefore, DEF is never dominated.
  
   If $S$ is \textbf{bounded-rational} and $R$ has \textbf{no uncertainty}, then $\sigma>0$ and the payoffs are:
   $E[DEF]=p\mu_p(x)+(1-p)\mu_p(o)$, 
  $E[IMM]=\mu_p(x)$ and $E[DoN]=\mu_p(o)$, where
  $p=\Phi\left(\tfrac{\mu_p(x)-\mu_p(o)}{\sqrt{2\sigma^2}}\right)$
   Therefore, $p \in (0,1)$ and   
   DEF is never optimal.

If $S$ is \textbf{rational} and $R$ has \textbf{uncertainty}, then 
$E[DEF]=p\mu_p(x)+(1-p)\mu_p(o)+e$,   $E[IMM]=\mu_p(x)$ and $E[DoN]=\mu_p(o)$, where  $p=\Phi\left(\tfrac{\mu_p(x)-\mu_p(o)}{\sqrt{K_p(o,o)+K_p(x,x)-2K_p(x,o)}}\right)$ and 
$$
 \resizebox{0.91\hsize}{!}{$
\begin{aligned}
e&= \tfrac{K_p(x,x)-K_p(x,o)}{\sqrt {K_p(x,x)+K_p(o,o)-2K_p(x,o)}}
\phi\left(\tfrac{\mu_p(o)-\mu_p(x)}{\sqrt{K_p(x,x)+K_p(o,o)-2K_p(x,o)}}\right)\\
& + \tfrac{K_p(o,o)-K_p(x,o)}{\sqrt {K_p(x,x)+K_p(o,o)-2K_p(x,o)}}\phi\left(\tfrac{\mu_p(x)-\mu_p(o)}{\sqrt{K_p(x,x)+K_p(o,o)-2K_p(x,o)}}\right).
\end{aligned}$}
$$
When $\bH$ is rational, then 
\begin{equation}
\label{eq:payoffprop1Rproof}
\begin{aligned}
&u_R(t,DEF,b^*(DEF))=\nu(o)I_{\{\nu(o)>\nu(x)\}}+\nu(x)I_{\{\nu(x)>\nu(o)\}}=\max(\nu(o),\nu(x))
\end{aligned}
\end{equation}
$\max$ is a convex function and, therefore, by Jensen's inequality 
$E[\max(\nu(o),\nu(x))]\geq \max(E[\nu(o)],E[\nu(x)])$. Therefore, $p\mu_p(x)+(1-p)\mu_p(o)+e\geq \max(\mu_p(x),\mu_p(o))$ and DEF is always optimal.

The last case follows directly from Lemma \ref{lem:1}.

\paragraph{Proof of Lemma \ref{new:lemma}}
The  results follow by Lemma \ref{lem:probitint} and \ref{lem:preference_learn}.

\paragraph{Proof of Proposition \ref{prop:acq}}
The first statement follows by Lemma \ref{new:lemma} as under rationality and no-uncertainty, we have that:
 \begin{align}
  &\text{natural:} && \max(\nu(x),\nu(o)),\\
  &  \text{corporate:} && I_{\{\nu(x)>\nu(o)\}},\\
   &   \text{collaborative:} && \nu(o)I_{\{\nu(o)>\nu(x)\}}+\nu(x)I_{\{\nu(x)>\nu(o)\}},
 \end{align}
 which $\bR$ will maximise with respect to $x$.

 Under no-uncertainty and bounded-rationality, we have that:

  \begin{align}
  &\text{natural:} && \max(\nu(x),\nu(o)),\\
  &  \text{corporate:} && \Phi\left(\frac{\nu(x)-\nu(o)}{\sqrt{2}\sigma}\right)\\
   &   \text{collaborative:} && \nu(o)\Phi\left(\frac{\nu(o)-\nu(x)}{\sqrt{2}\sigma}\right)+\nu(x)\Phi\left(\frac{\nu(x)-\nu(o)}{\sqrt{2}\sigma}\right).
 \end{align}
 Since the Gaussian CDF is monotonic increasing, we see that for all the strategies the maximum will be achieved for $x=x^*$.  

\paragraph{Proof of Corollary \ref{co:2}}
The results follow from Proposition \ref{prop:1} (and \eqref{eq:absx}) after subtracting $\beta$ to the expected payoff for DEF, where
\begin{align}
\nonumber
\beta &=\gamma E[|\nu(o)|] =\gamma \mu_p(o) \left(1-2\Phi\left(\tfrac{-\mu_p(o)}{\sqrt{K_p(o,o)}}\right)\right)+2\gamma\sqrt{K_p(o,o)} \phi\left(\tfrac{-\mu_p(o)}{\sqrt{K_p(o,o)}}\right).
\end{align}
 If $S$ is \textbf{rational} and $R$ has \textbf{no uncertainty}, then the expected payoffs can be computed from \eqref{eq:DEFexp}, \eqref{eq:IMMexp} and \eqref{eq:OFFexp}.
  The values are
 $E[DEF]=\max(\mu_p(x),\mu_p(o))-\gamma|\mu_p(o)|$
  $E[IMM]=\mu_p(x)$ and $E[DoN]=\mu_p(o)$. Therefore, DEF is never optimal.

   If $S$ is \textbf{bounded-rational} and $R$ has \textbf{no uncertainty}, then $\sigma>0$ and the payoffs are:
   $E[DEF]=p\mu_p(x)+(1-p)\mu_p(o)-\gamma|\mu_p(o)|$
  $E[IMM]=\mu_p(x)$ and $E[DoN]=\mu_p(o)$, where
  $p=\Phi\left(\tfrac{\mu_p(x)-\mu_p(o)}{\sqrt{2\sigma^2}}\right)$
   Therefore, $p \in (0,1)$ and   
   DEF is never optimal.

   If $S$ is \textbf{rational} and $R$ has \textbf{uncertainty}, then 
$E[DEF]=p\mu_p(x)+(1-p)\mu_p(o)+e-\gamma'|\mu_p(o)|$
  $E[IMM]=\mu_p(x)$ and $E[DoN]=\mu_p(o)$. Therefore, 
  DEF is optimal if 
$p\mu_p(x) + (1-p)\mu_p(o)+e-\gamma|\mu_p(o)|\geq \max\left(\mu_p(x) ,\mu_p(o)\right)$.
The last case follows similarly from Proposition \ref{prop:1}

\paragraph{Proof of Lemma \ref{lem:2}}
The expected value for $DEF$  is equal to the sum of $E[\nu(x)I_{\{\nu(x)>\nu(o)+\sigma)\}}]$, $E[\nu(o)I_{\{\nu(o)>\nu(x)+\sigma)\}}]$ and \\ $\{E[\nu(x)I_{\{|\nu(x)-\nu(o)|\leq \sigma)\}}],E[\nu(o)I_{\{|\nu(x)-\nu(o)|\leq \sigma)\}}]\}$.
First we have that
 \begin{equation}
\begin{aligned}
p(\nu(x)|\nu(o))=N(\nu(x);m_1,\sigma_1^2)=
&{\scriptstyle N\left(\nu(x);\mu_p(x)+\tfrac{K_p(x,o)}{K_p(o,o)}(\nu(o)-\mu_p(o)),
K_p(x,x)-\tfrac{K_p^2(x,o)}{K_p(o,o)}\right)}.
\end{aligned}
\end{equation}
Therefore, we can apply \eqref{eq:mills} conditionally on  $\nu(o)$ which leads to
\begin{equation}
\label{eq:millscond42}
\begin{aligned}
&E[\nu(x)I_{\{\nu(x)>\nu(o)+\sigma\}}|\nu(o)]\\
&=m_1 \left(1-\Phi\left(\tfrac{\nu(o)+\sigma-m_1}{\sigma_1}\right)\right)+\sigma_1 \phi\left(\tfrac{\nu(o)+\sigma-m_1}{\sigma_1}\right)
\end{aligned}
\end{equation}
Now observe that
\begin{equation}
\label{eq:intm142}
\begin{aligned}
E[m_1]&=\int \left(\mu_p(x)+\tfrac{K_p(x,o)}{K_p(o,o)}(\nu(o)-\mu_p(o))\right)\\
&N(\nu(o);\mu_p(o),K_p(o,o))d\nu(o)
d\nu(o)=\mu_p(x),
\end{aligned}
\end{equation}
and
\begin{equation}
\label{eq:intm1421}
\begin{aligned}
E\left[m_1\Phi\left(\tfrac{\nu(o)+\sigma-m_1}{\sigma_1}\right)\right]
&=E\left[\left(\mu_p(x)+\tfrac{K_p(x,o)}{K_p(o,o)}(\nu(o)-\mu_p(o))\right)\Phi\left(\tfrac{\nu(o)+\sigma-m_1}{\sigma_1}\right)\right]\\
&=\left(\mu_p(x)-\tfrac{K_p(x,o)}{K_p(o,o)}\mu_p(o)\right) E\left[\Phi\left(\tfrac{\nu(o)+\sigma-m_1}{\sigma_1}\right)\right]\\
&+\tfrac{K_p(x,o)}{K_p(o,o)}E\left[ \nu(o)\Phi\left(\tfrac{\nu(o)+\sigma-m_1}{\sigma_1}\right)\right]
\end{aligned}
\end{equation}
The expectations are with respect to $\nu(o)$.
Now we use \eqref{eq:Resprobitint} to get 

\begin{equation}
\label{eq:probitint242}
\begin{aligned}
E\left[\Phi\left(\tfrac{\nu(o)+\sigma-m_1}{\sigma_1}\right)\right]
&=\int \Phi\left(\tfrac{\nu(o)+\sigma-m_1}{\sigma_1}\right)N(\nu(o);\mu_p(o),K_p(o,o))d\nu(o)\\
&=\int \Phi\left(\tfrac{\nu(o)\tfrac{K_p(o,o)-K_p(x,o)}{K_p(o,o)}+m_2}{\sigma_1}\right)\\
&~~~~~~~~N(\nu(o);\mu_p(o),K_p(o,o))d\nu(o)\\
&=\int \Phi\left(\tfrac{z\tfrac{K_p(o,o)-K_p(x,o)}{\sqrt{K_p(o,o)}}+m_2+\tfrac{K_p(o,o)-K_p(x,o)}{K_p(o,o)}\mu_p(o)}{\sigma_1}\right)\\
&~~~~~~~~N(z;0,1)dz\\
&=\Phi\left(\tfrac{\mu_p(o)\tfrac{K_p(o,o)-K_p(x,o)}{\sqrt{K_p(o,o)}}+m_2\sqrt{K_p(o,o)}}{\sqrt{K_p(o,o)\sigma_1^2+(K_p(o,o)-K_p(x,o))^2}}\right)\\
&=\Phi\left(\tfrac{\sqrt{K_p(o,o)}(\mu_p(o)+\sigma-\mu_p(x))}{\sqrt{K_p(o,o)\sigma_1^2+(K_p(o,o)-K_p(x,o))^2}}\right),\\
\end{aligned}
\end{equation}
with $m_2=\tfrac{K_p(o,o)(\sigma-\mu_p(x))+K_p(x,o)\mu_p(o)}{
K_p(o,o)}$. Similarly, we have that
\begin{equation}
\label{eq:probitint2bis42}
\begin{aligned}
E\left[\nu(o)\Phi\left(\tfrac{\nu(o)+\sigma-m_1}{\sigma_1}\right)\right]
&=\int \nu(o) \Phi\left(\tfrac{\nu(o)+\sigma-m_1}{\sigma_1}\right) N(\nu(o);\mu_p(o),K_p(o,o))d\nu(o)\\
&=\int \nu(o) \Phi\left(\tfrac{\nu(o)\tfrac{K_p(o,o)-K_p(x,o)}{K_p(o,o)}+m_2}{\sigma_1}\right)\\
&~~~~~~~~N(\nu(o);\mu_p(o),K_p(o,o))d\nu(o)\\
&=\int \Phi\left(\tfrac{z\tfrac{K_p(o,o)-K_p(x,o)}{\sqrt{K_p(o,o)}}+m_2+\tfrac{K_p(o,o)-K_p(x,o)}{K_p(o,o)}\mu_p(o)}{\sigma_1}\right)\\
& ~~~~\left(z \sqrt{K_p(o,o)}+\mu_p(o)\right)N(z;0,1)dz\\
\end{aligned}
\end{equation}
We separate the sum:
\begin{equation}
\label{eq:probitint2bis142}
\begin{aligned}
&\int \Phi\left(\tfrac{z\tfrac{K_p(o,o)-K_p(x,o)}{\sqrt{K_p(o,o)}}+m_2+\tfrac{K_p(o,o)-K_p(x,o)}{K_p(o,o)}\mu_p(o)}{\sigma_1}\right)\\
& ~~~~\mu_p(o) N(z;0,1)dz\\
&=\mu_p(o)\Phi\left(\tfrac{\mu_p(o)\tfrac{K_p(o,o)-K_p(x,o)}{\sqrt{K_p(o,o)}}+m_2\sqrt{K_p(o,o)}}{\sqrt{K_p(o,o)\sigma_1^2+(K_p(o,o)-K_p(x,o))^2}}\right)\\
&=\mu_p(o)\Phi\left(\tfrac{\sqrt{K_p(o,o)}(\mu_p(o)+\sigma-\mu_p(x))}{\sqrt{K_p(o,o)\sigma_1^2+(K_p(o,o)-K_p(x,o))^2}}\right).\\
\end{aligned}
\end{equation}
The other term in the sum
\begin{equation}
\label{eq:probitint2bis242}
\begin{aligned}
&\int \Phi\left(\tfrac{z\tfrac{K_p(o,o)-K_p(x,o)}{\sqrt{K_p(o,o)}}+m_2+\tfrac{K_p(o,o)-K_p(x,o)}{K_p(o,o)}\mu_p(o)}{\sigma_1}\right)\\
& ~~~~z \sqrt{K_p(o,o)} N(z;0,1)dz\\
&=\tfrac{\sqrt{K_p(o,o)}(K_p(o,o)-K_p(x,o))}{\sqrt{K_p(o,o)\sigma_1^2+(K_p(o,o)-K_p(x,o))^2}}\\
&\phi\left(\tfrac{\mu_p(o)\tfrac{K_p(o,o)-K_p(x,o)}{\sqrt{K_p(o,o)}}+m_2\sqrt{K_p(o,o)}}{\sqrt{K_p(o,o)(\sigma_1^2+2\sigma^2)+(K_p(o,o)-K_p(x,o))^2}}\right)\\
&=\tfrac{\sqrt{K_p(o,o)}(K_p(o,o)-K_p(x,o))}{\sqrt{K_p(o,o)\sigma_1^2+(K_p(o,o)-K_p(x,o))^2}}\\
&\phi\left(\tfrac{\sqrt{K_p(o,o)}(\mu_p(o)+\sigma-\mu_p(x))}{\sqrt{K_p(o,o)\sigma_1^2+(K_p(o,o)-K_p(x,o))^2}}\right).\\
\end{aligned}
\end{equation}
where we have used \eqref{eq:xResprobitint}. 
Finally, we consider
\begin{equation}
\label{eq:probitint342} 
\begin{aligned}
&\int \phi\left(\tfrac{\nu(o)+\sigma-m_1}{\sigma_1}\right)N(\nu(o);\mu_p(o),K_p(o,o))d\nu(o)\\
&=\int \phi\left(\tfrac{z\tfrac{K_p(o,o)-K_p(x,o)}{\sqrt{K_p(o,o)}}+m_2+\tfrac{K_p(o,o)-K_p(x,o)}{K_p(o,o)}\mu_p(o)}{\sigma_1}\right)\\
&N(z;0,1)d\nu(o)\\
&=\tfrac{\sqrt{K_p(o,o)}\sigma_1}{\sqrt{K_p(o,o)_p\sigma_1^2+(K_p(o,o)-K_p(x,o))^2}}\\
&\phi\left(\tfrac{\sqrt{K_p(o,o)}(\mu_p(o)+\sigma-\mu_p(x))}{\sqrt{K_p(o,o)\sigma_1^2+(K_p(o,o)-K_p(x,o))^2}}\right).\\
\end{aligned}
\end{equation}

Therefore, from \eqref{eq:millscond} and \eqref{eq:probitint2}--\eqref{eq:probitint3}, we obtain
\begin{equation}
\label{eq:millsthird42}
\begin{aligned}
&E[\nu(x)I_{\{\nu(x)>\nu(o)+\sigma\}}]=\mu_p(x)\\
&-\left(\mu_p(x)-\tfrac{K_p(x,o)}{K_p(o,o)}\mu_p(o)\right)\\
&\cdot \Phi\left(\tfrac{\sqrt{K_p(o,o)}(\mu_p(o)+\sigma-\mu_p(x))}{\sqrt{K_p(o,o)\sigma_1^2+(K_p(o,o)-K_p(x,o))^2}}\right)\\
& - \tfrac{K_p(x,o)}{K_p(o,o)} \mu_p(o)\Phi\left(\tfrac{\sqrt{K_p(o,o)}(\mu_p(o)+\sigma-\mu_p(x))}{\sqrt{K_p(o,o)\sigma_1^2+(K_p(o,o)-K_p(x,o))^2}}\right)\\
& - \tfrac{K_p(x,o)}{K_p(o,o)} \tfrac{\sqrt{K_p(o,o)}(K_p(o,o)-K_p(x,o))}{\sqrt{K_p(o,o)\sigma_1^2+(K_p(o,o)-K_p(x,o))^2}}\\
&\phi\left(\tfrac{\sqrt{K_p(o,o)}(\mu_p(o)+\sigma-\mu_p(x))}{\sqrt{K_p(o,o)\sigma_1^2+(K_p(o,o)-K_p(x,o))^2}}\right)\\
&+\tfrac{\sqrt{K_p(o,o)}\sigma^2_1}{\sqrt{K_p(o,o)\sigma_1^2+(K_p(o,o)-K_p(x,o))^2}}\\
&\phi\left(\tfrac{\sqrt{K_p(o,o)}(\mu_p(o)+\sigma-\mu_p(x))}{\sqrt{K_p(o,o)\sigma_1^2+(K_p(o,o)-K_p(x,o))^2}}\right)\\
&=\mu_p(x)\left(1-\Phi\left(\tfrac{\sqrt{K_p(o,o)}(\mu_p(o)+\sigma-\mu_p(x))}{\sqrt{K_p(o,o)\sigma_1^2+(K_p(o,o)-K_p(x,o))^2}}\right)\right)\\
& + \tfrac{\sqrt{K_p(o,o)}(K_p(x,x)-K_p(x,o))}{\sqrt{K_p(o,o)(\sigma_1^2+2\sigma^2)+(K_p(o,o)-K_p(x,o))^2}}\\
&\phi\left(\tfrac{\sqrt{K_p(o,o)}(\mu_p(o)+\sigma-\mu_p(x))}{\sqrt{K_p(o,o)\sigma_1^2+(K_p(o,o)-K_p(x,o))^2}}\right)\\
\end{aligned}
\end{equation}

Note that
\begin{equation}
\label{eq:var0042}
\begin{aligned}
&K_p(o,o)\sigma_1^2+(K_p(o,o)-K_p(x,o))^2\\
&=K_p(o,o)\left(K_p(x,x)-\tfrac{K_p^2(x,o)}{K_p(o,o)}\right)+(K_p(o,o)-K_p(x,o))^2\\
&=K_p(o,o)K_p(x,x)-K_p^2(x,o)\\
&+K_p^2(o,o)+K_p^2(x,o)-2K_p(x,o)K_p(o,o)\\
&=K_p(o,o)(K_p(x,x)+K_p(o,o)-2K_p(x,o)).\\
\end{aligned}
\end{equation}
Therefore, we have that
\begin{equation}
\label{eq:millsthird1142}
\begin{aligned}
&E[\nu(x)I_{\{\nu(x)>\nu(o)+\sigma\}}]=\\
&=\mu_p(x)\left(1-\Phi\left(\tfrac{(\mu_p(o)+\sigma-\mu_p(x))}{\sqrt{K_p(x,x)+K_p(o,o)-2K_p(x,o)}}\right)\right)\\
& + \tfrac{K_p(x,x)-K_p(x,o)}{\sqrt {K_p(x,x)+K_p(o,o)-2K_p(x,o)}}\\
&\phi\left(\tfrac{\mu_p(o)+\sigma-\mu_p(x)}{\sqrt{K_p(x,x)+K_p(o,o)-2K_p(x,o)}}\right)
\end{aligned}
\end{equation}
and 
\begin{equation}
\label{eq:millsthird1142sec}
\begin{aligned}
&E[\nu(o)I_{\{\nu(o)>\nu(x)+\sigma\}}]=\\
&=\mu_p(o)\left(1-\Phi\left(\tfrac{(\mu_p(x)+\sigma-\mu_p(o))}{\sqrt{K_p(x,x)+K_p(o,o)-2K_p(x,o)}}\right)\right)\\
& + \tfrac{K_p(o,o)-K_p(x,o)}{\sqrt {K_p(x,x)+K_p(o,o)-2K_p(x,o)}}\\
&\phi\left(\tfrac{\mu_p(x)+\sigma-\mu_p(o)}{\sqrt{K_p(x,x)+K_p(o,o)-2K_p(x,o)}}\right)
\end{aligned}
\end{equation}
The other two terms are:
\begin{equation}
\label{eq:millsthird1142sec}
\begin{aligned}
&E[\nu(o)I_{\{|\nu(o)-\nu(x)|\leq \sigma\}}]=\mu_p(o)\\
&-E[\nu(o)I_{\{\nu(o)>\nu(x)+\sigma\}}]-E[\nu(o)I_{\{\nu(x)>\nu(o)+\sigma\}}]\\
&=\mu_p(o)-\mu_p(o)\left(1-\Phi\left(\tfrac{(\mu_p(x)+\sigma-\mu_p(o))}{\sqrt{K_p(x,x)+K_p(o,o)-2K_p(x,o)}}\right)\right)\\
& - \tfrac{K_p(o,o)-K_p(x,o)}{\sqrt {K_p(x,x)+K_p(o,o)-2K_p(x,o)}}\\
&\phi\left(\tfrac{\mu_p(x)+\sigma-\mu_p(o)}{\sqrt{K_p(x,x)+K_p(o,o)-2K_p(x,o)}}\right)\\
&-\mu_p(o)\left(1-\Phi\left(\tfrac{(-\mu_p(x)+\sigma+\mu_p(o))}{\sqrt{K_p(x,x)+K_p(o,o)-2K_p(x,o)}}\right)\right)\\
& + \tfrac{K_p(o,o)-K_p(x,o)}{\sqrt {K_p(x,x)+K_p(o,o)-2K_p(x,o)}}\\
&\phi\left(\tfrac{-\mu_p(x)+\sigma+\mu_p(o)}{\sqrt{K_p(x,x)+K_p(o,o)-2K_p(x,o)}}\right)\\
&=\mu_p(o)-\mu_p(o)\Bigg(2-\Phi\left(\tfrac{(\mu_p(x)+\sigma-\mu_p(o))}{\sqrt{K_p(x,x)+K_p(o,o)-2K_p(x,o)}}\right)\\
&-\Phi\left(\tfrac{(-\mu_p(x)+\sigma+\mu_p(o))}{\sqrt{K_p(x,x)+K_p(o,o)-2K_p(x,o)}}\right)\Bigg)\\
& + \tfrac{K_p(o,o)-K_p(x,o)}{\sqrt {K_p(x,x)+K_p(o,o)-2K_p(x,o)}}\\
&\Bigg(\phi\left(\tfrac{-\mu_p(x)+\sigma+\mu_p(o)}{\sqrt{K_p(x,x)+K_p(o,o)-2K_p(x,o)}}\right)\\
&-\phi\left(\tfrac{\mu_p(x)+\sigma-\mu_p(o)}{\sqrt{K_p(x,x)+K_p(o,o)-2K_p(x,o)}}\right)\Bigg)\\
\end{aligned}
\end{equation}
and
\begin{equation}
\label{eq:millsthird1142sec1}
\begin{aligned}
&E[\nu(x)I_{\{|\nu(o)-\nu(x)|\leq \sigma\}}]=\\
&=\mu_p(x)-\mu_p(x)\Bigg(2-\Phi\left(\tfrac{(\mu_p(x)+\sigma-\mu_p(o))}{\sqrt{K_p(x,x)+K_p(o,o)-2K_p(x,o)}}\right)\\
&-\Phi\left(\tfrac{(-\mu_p(x)+\sigma+\mu_p(o))}{\sqrt{K_p(x,x)+K_p(o,o)-2K_p(x,o)}}\right)\Bigg)\\
& + \tfrac{K_p(x,o)-K_p(x,o)}{\sqrt {K_p(x,x)+K_p(o,o)-2K_p(x,o)}}\\
&\Bigg(\phi\left(\tfrac{-\mu_p(o)+\sigma+\mu_p(x)}{\sqrt{K_p(x,x)+K_p(o,o)-2K_p(x,o)}}\right)\\
&-\phi\left(\tfrac{\mu_p(o)+\sigma-\mu_p(x)}{\sqrt{K_p(x,x)+K_p(o,o)-2K_p(x,o)}}\right)\Bigg)\\
\end{aligned}
\end{equation}

\paragraph{Proof of Proposition \ref{prop:3}}
If $\bR$ has \textbf{no uncertainty} and $\bH$ is rational, then the expected payoffs can be computed from \eqref{eq:DEFexp1}, \eqref{eq:IMMexp1} and \eqref{eq:OFFexp1}.
  The values are
$E[DEF]=\max(\mu_p(x),\mu_p(o))$, 
  $E[IMM]=\mu_p(x)$ and $E[DoN]=\mu_p(o)$. Therefore, DEF is always optimal.
  
   If $S$ is \textbf{bounded-rational} and $R$ has \textbf{no uncertainty}, then $\sigma>0$. We consider three cases:
   (1) $\mu_p(x)>\mu_p(o)+\sigma$;
   (2) $\mu_p(o)>\mu_p(x)+\sigma$;
   (3) otherwise.

In case (1), the payoffs are:
   $E[DEF]=\mu_p(x)$,
  $E[IMM]=\mu_p(x)$ and $E[DoN]=\mu_p(o)$,
   Therefore,    DEF is  optimal. A similar results holds in case (2). In case (3), $E[DEF]=\{\mu_p(x)-\epsilon,\mu_p(o)-\epsilon\}$. Under the condition (A) or (B),  DEF will alwys be dominated.
If $S$ is \textbf{rational} and $R$ has \textbf{uncertainty}, then 
  DEF is optimal as in Proposition \ref{prop:1}
The last case follows directly from Lemma \ref{lem:2}.

\paragraph{Proof of Proposition \ref{prop:honestr}}
The only case where the content of the message is important is when DEF is not optimal. In this case, $\bR$ makes a decision autonomously.

Whenever DEF is not optimal, the best action can be either IMM$(x)$ if $\mu_p(x) > \mu_p(o)$ or DoN if $\mu_p(o) > \mu_p(x)$. Therefore, if $\bH$ sends a biased message such that $\bR$ estimates $\mu_p(x) > \mu_p(o)$ when, in reality, $\nu(x) < \nu(o)$, then $\bR$ would choose an action that is not optimal for $\bH$.

\paragraph{Proof of Lemma \ref{lem:4}}
For large $s$, the beta distribution is approximatively a Gaussian distribution:
$$
{\displaystyle \operatorname {Beta} (w;t s,(1-t)s)\approx  {\mathcal {N}}\left(w;t,\frac{t(1-t) }{s}\right)}$$
Therefore, we have that
\begin{equation}
\label{eq:linearwproof}
\begin{aligned}
&p(x \succ y |\boldsymbol{\nu}(X))\\
&= \int I_{\{w \nu_1(x)+(1-w) \nu_2(x)>w \nu_1(y)+(1-w) \nu_2(y)\}}  {\mathcal {N}}\left(w;t,\frac{t(1-t) }{s}\right)dw\\
&= \int I_{\{w (\nu_1(x)-\nu_1(y)-(\nu_2(x)-\nu_2(y))>-(\nu_2(x)- \nu_2(y))\}}  {\mathcal {N}}\left(w;t,\frac{t(1-t) }{s}\right)dw\\
&= \int I_{\{(\tau z+t) (\nu_1(x)-\nu_1(y)-(\nu_2(x)-\nu_2(y))>-(\nu_2(x)- \nu_2(y))\}}  {\mathcal {N}}\left(z;0,1\right)dz\\
&= \int I_{\{\tau z (\nu_1(x)-\nu_1(y)-(\nu_2(x)-\nu_2(y))>-(\nu_2(x)- \nu_2(y))-t(\nu_1(x)-\nu_1(y)-(\nu_2(x)-\nu_2(y))\}}  {\mathcal {N}}\left(z;0,1\right)dz\\
&= \int I_{\{\tau z (\nu_1(x)-\nu_1(y)-(\nu_2(x)-\nu_2(y))>-(t\nu_1(x)+(1-t)\nu_2(x))+(t\nu_1(y)+(1-t)\nu_2(y))\}}  {\mathcal {N}}\left(z;0,1\right)dz\\
&= \Phi\left(\tfrac{(t\nu_1(x)+(1-t)\nu_2(x))-(t\nu_1(y)+(1-t)\nu_2(y))}{\tau  (\nu_1(x)-\nu_1(y)-(\nu_2(x)-\nu_2(y))}\right)I_{\{(\nu_1(x)-\nu_1(y)-(\nu_2(x)-\nu_2(y))>0\}} \\
&+ \Phi\left(-\tfrac{(t\nu_1(x)+(1-t)\nu_2(x))-(t\nu_1(y)+(1-t)\nu_2(y))}{\tau  (\nu_1(x)-\nu_1(y)-(\nu_2(x)-\nu_2(y))}\right)I_{\{(\nu_1(x)-\nu_1(y)-(\nu_2(x)-\nu_2(y))<0\}} \\ 
&= \Phi\left(\tfrac{(t\nu_1(x)+(1-t)\nu_2(x))-(t\nu_1(y)+(1-t)\nu_2(y))}{\tau  |\nu_1(x)-\nu_1(y)-(\nu_2(x)-\nu_2(y))|}\right)
\end{aligned}
\end{equation}
 where $\tau^2=\frac{t(1-t) }{s}$. 

\paragraph{Proof of Proposition \ref{prop:4}}

If $S$ is \textbf{rational} and $R$ has \textbf{no uncertainty}, then the payoff for IMM is if $\boldsymbol{\nu}(x)$, for DoN $\boldsymbol{\nu}(o)$, and for DEF is $\boldsymbol{\nu}(x)$ if it dominates $\boldsymbol{\nu}(o)$ (that is, if $\boldsymbol{\nu}(x)\succ \boldsymbol{\nu}(o)$) or $\boldsymbol{\nu}(o)$ otherwise. Therefore, DEF is not dominated.

If $S$ is \textbf{bounded-rational} and $R$ has \textbf{no uncertainty}, then the payoffs for IMM and DoN are the same as before. The payoff for DEF is $p\boldsymbol{\nu}(o)+(1-p)\boldsymbol{\nu}(x)$. So DEF is always dominated by either IMM  or DoN.

If $S$ is \textbf{rational} and $R$ has \textbf{ uncertainty}, then  DEF is never dominated. This results can be proven by Jensen's inequality (element-wise) as in Proposition \ref{prop:1}.

If $S$ is \textbf{bounded-rational} and $R$ has \textbf{ uncertainty}, the best decision depends on the specific case.

\paragraph{Proof of Proposition \ref{prop:context}}
Assume, by contradiction, that $\nu_1(a)$ does not depend on $a^*$. Since $\mathcal{A}$ is binary,  $\nu_1(a)$ (in $\nu([a,x]) = \nu_1(a)+ \nu_2(x)$) can only assume two values: $\nu_1(1)=\gamma_1$ and $\nu_1(0)=\gamma_0$.

When $a^*=1$, to satisfy D1,  $\min(\nu([1,o]),\nu([1,y]))> \max(\nu([0,x]))$. This implies that
$$
\gamma_1+\min(\nu_2(o),\nu_2(y))>  \gamma_0 + \nu_2(x).
$$
This is true, for every $o,x,y \in \mathcal{X}$, if
$$
\gamma_1> \gamma_0 + \max_{o,x,y \in \mathcal{X}}\left(\nu_2(x)-\min(\nu_2(o),\nu_2(y))\right)=\gamma_0+d
$$
with $d =\max_{o,x,y \in \mathcal{X}}\left(\nu_2(x)-\min(\nu_2(o),\nu_2(y))\right)\geq 0$.

When $a^*=0$, to satisfy D2,  $\min(\nu([0,o]),\nu([0,x]))>\max(\nu([1,y]))$. This implies that
$$
\gamma_0+\min(\nu_2(o),\nu_2(x))>  \gamma_1 + \nu_2(x).
$$
Therefore, we have that
$$
\gamma_0> \gamma_1 + \max_{o,x,y \in \mathcal{X}}\left(\nu_2(y)-\min(\nu_2(o),\nu_2(y))\right)=\gamma_1+d.
$$
Therefore, D1 and D2 cannot be both true, as 
$\gamma_1>\gamma_0+d>(\gamma_1+d)+d$ leads to a contradiction.

To prove the other direction, we can choose
$$
\nu([a,x]|a^*) = I_{\{a^*=0\}}\nu_0(a)+ I_{\{a^*=1\}}\nu_1(a)+\nu_2(x).
$$
Indeed, when $a^*=1$, we have that
$$
\gamma_{1}> \max_{o,x,y \in \mathcal{X}}\left(\nu_2(x)-\min(\nu_2(o),\nu_2(y))\right)= d,
$$
where $\gamma_{1}=\nu_1(1)$ and $\nu_1(0)=0$ (we have put it to zero withour loss of generality).
When $a^*=0$, we have that
$$
\gamma_{0}> \max_{o,x,y \in \mathcal{X}}\left(\nu_2(x)-\min(\nu_2(o),\nu_2(y))\right)= d,
$$
where $\gamma_{0}=\nu_0(1)$ and $\nu_0(0)=0$.

\bibliographystyle{elsarticle-num-names} 
 \bibliography{biblio}%

\end{document}